\font\ppppppcarac=ptmr8y at 4pt
\font\pppppcarac=ptmr8y at 5pt
\font\ppppcarac=ptmr8y at 6pt
\font\bf=ptmb8y at 10pt
\newcommand{\bfA}{{\bm{A}}}
\newcommand{\bfG}{{\bm{G}}}
\newcommand{\bfH}{{\bm{H}}}
\newcommand{\bfL}{{\bm{L}}}
\newcommand{\bfQ}{{\bm{Q}}}
\newcommand{\bfU}{{\bm{U}}}
\newcommand{\bfV}{{\bm{V}}}
\newcommand{\bfW}{{\bm{W}}}
\newcommand{\bfX}{{\bm{X}}}
\newcommand{\bfY}{{\bm{Y}}}
\newcommand{\bfZ}{{\bm{Z}}}
\newcommand{\bfzero}{{ \hbox{\bf 0} }}
\newcommand{\bfa}{{\bm{a}}}
\newcommand{\bfb}{{\bm{b}}}
\newcommand{\bff}{{\bm{f}}}
\newcommand{\bfg}{{\bm{g}}}
\newcommand{\bfh}{{\bm{h}}}
\newcommand{\bfq}{{\bm{q}}}
\newcommand{\bfu}{{\bm{u}}}
\newcommand{\bfv}{{\bm{v}}}
\newcommand{\bfw}{{\bm{w}}}
\newcommand{\bfx}{{\bm{x}}}
\newcommand{\bfy}{{\bm{y}}}
\newcommand{\bfz}{{\bm{z}}}
\newcommand{\bfgamma}{{\bm{\gamma}}}
\newcommand{\bfeta}{{\bm{\eta}}}
\newcommand{\bflambda}{{\bm{\lambda}}}
\newcommand{\bfvarphi}{{\bm{\varphi}}}
\newcommand{\bfxi}{{\bm{\xi}}}
\newcommand{\bfzeta}{{\bm{\zeta}}}
\newcommand{\bfchi}{{\bm{\chi}}}
\newcommand{\CC}{{\mathbb{C}}}
\newcommand{\MM}{{\mathbb{M}}}
\newcommand{\NN}{{\mathbb{N}}}
\newcommand{\RR}{{\mathbb{R}}}
\newcommand{\UU}{{\mathbb{U}}}
\newcommand{\VV}{{\mathbb{V}}}
\newcommand{\WW}{{\mathbb{W}}}
\newcommand{\XX}{{\mathbb{X}}}
\newcommand{\cc}{{\mathbb{c}}}
\DeclareMathAlphabet{\mathonebb}{U}{bbold}{m}{n}
\def\11{{\ensuremath{\mathonebb{1}}}}
\def\aa{{\ensuremath{\mathonebb{a}}}}
\def\bb{{\ensuremath{\mathonebb{b}}}}
\def\hh{{\ensuremath{\mathonebb{h}}}}
\DeclareMathAlphabet{\mathonebb}{U}{bbold}{m}{n}
\def\11{{\ensuremath{\mathonebb{1}}}}
\newcommand{\curB}{{\mathcal{B}}}
\newcommand{\curC}{{\mathcal{C}}}
\newcommand{\curD}{{\mathcal{D}}}
\newcommand{\curH}{{\mathcal{H}}}
\newcommand{\curL}{{\mathcal{L}}}
\newcommand{\curO}{{\mathcal{O}}}
\newcommand{\curP}{{\mathcal{P}}}
\newcommand{\curT}{{\mathcal{T}}}
\newcommand{\curU}{{\mathcal{U}}}
\newcommand{\curV}{{\mathcal{V}}}
\newcommand{\bfcurG}{{\boldsymbol{\mathcal{G}}}}
\newcommand{\bfcurM}{{\boldsymbol{\mathcal{M}}}}
\newcommand{\bfcurN}{{\boldsymbol{\mathcal{N}}}}
\newcommand{\bfcurO}{{\boldsymbol{\mathcal{O}}}}
\newcommand{\bfcurR}{{\boldsymbol{\mathcal{R}}}}
\newcommand{\bfcurY}{{\boldsymbol{\mathcal{Y}}}}
\newcommand{\ad}{{\hbox{{\ppppcarac ad}}}}
\newcommand{\pad}{{\hbox{{\pppppcarac ad}}}}
\newcommand{\st}{{\hbox{{\ppppcarac st}}}}
\newcommand{\SB}{{\hbox{{\pppppcarac SB}}}}
\newcommand{\eff}{{\hbox{{\ppppcarac eff}}}}
\newcommand{\peff}{{\hbox{{\ppppcarac eff}}}}
\newcommand{\pexp}{{\hbox{{\ppppcarac exp}}}}
\newcommand{\cov}{{\hbox{{\textrm cov}}}}
\newcommand{\pmax}{{\hbox{{\ppppcarac max}}}}
\newcommand{\tr}{{\hbox{{\textrm tr}}}}
\newcommand{\error}{{\hbox{{err}}}}
\newcommand{\psol}{{\hbox{{\ppppcarac sol}}}}
\newcommand{\ppsol}{{\hbox{{\ppppppcarac sol}}}}
\newcommand{\pML}{{\hbox{{\pppppcarac ML}}}}
\newcommand{\wien}{{\hbox{{\ppppcarac wien}}}}
\newcommand{\prelax}{{\hbox{{\ppppcarac relax}}}}
\newcommand{\bfHbflambdai}{{\bfH_{\!\bflambda^{\,i}}}}
\newcommand{\bulk}{{\hbox{{\ppppcarac bulk}}}}
\newcommand{\shear}{{\hbox{{\ppppcarac shear}}}}
\newdefinition{definition}{Definition}
\newtheorem{lemma}{Lemma}
\newtheorem{proposition}{Proposition}
\newproof{proof}{Proof}
\newdefinition{remark}{Remark}
\newdefinition{hypothesis}{Hypothesis}
\newdefinition{notation}{Notation}
\newproof{example}{Example}
\numberwithin{equation}{section}
\journal{ArXiv}
\begin{document}

\begin{frontmatter}
\title{Probabilistic learning inference of boundary value problem with uncertainties \\
based on Kullback-Leibler divergence under implicit constraints}

\author[1]{C. Soize \corref{cor1}}
\ead{christian.soize@univ-eiffel.fr}

\cortext[cor1]{Corresponding author: C. Soize, christian.soize@univ-eiffel.fr}
\address[1]{Universit\'e Gustave Eiffel, MSME UMR 8208 CNRS, 5 bd Descartes, 77454 Marne-la-Vall\'ee, France}

\begin{abstract}
In a first part, we present a mathematical analysis of a general methodology of a probabilistic learning inference that allows for estimating a posterior probability model for a stochastic boundary value problem from a prior probability model. The given targets are statistical moments for which the underlying realizations are not available. Under these conditions, the Kullback-Leibler divergence  minimum principle is used for estimating the posterior probability measure. A statistical surrogate model of the implicit mapping, which  represents the constraints, is introduced. The MCMC generator and the necessary numerical elements are given to facilitate the implementation of the methodology in a parallel computing framework.
In a second part, an  application is presented to illustrate the proposed theory and is also, as such, a contribution to the three-dimensional stochastic homogenization of heterogeneous linear elastic media in the case of a non-separation of the microscale and macroscale. For the construction of the posterior probability measure by using the probabilistic learning inference, in addition to the constraints defined by given statistical moments of the random effective elasticity tensor, the second-order moment of the random normalized residue of the stochastic partial differential equation has been added as a constraint. This constraint guarantees that the algorithm seeks to bring the statistical moments closer to their targets while preserving a small residue.
\end{abstract}

\begin{keyword}
Probabilistic learning\sep statistical inverse problem\sep Kullback-Leibler divergence \sep implicit constraints \sep stochastic homogenization \sep uncertainty quantification
\end{keyword}

\end{frontmatter}

\section{Introduction}
\label{sec:Section1}
We consider a boundary value problem (BVP) with control parameters, for which the  observations (quantities of interest)  are defined by a given transformation of the solution of the BVP. The graph of the functional dependence between the parameters and the observations defines a manifold that is not explicitly described. We consider the statistical inverse problem consisting in identifying the parameters by giving targets for the observations. A prior probabilistic model of the control parameters is introduced what induces random observations. A training set, constituted of independent realizations belonging to the manifold, is constructed by solving the BVP. The high computational numerical cost to solve the stochastic BVP is a limitation requiring that only a small number of points can be computed for constructing the training set.

We consider a probabilistic learning inference for which the prior probability measure on the manifold is estimated with the points of the training set. The posterior probability model on the manifold is  constructed with respect to some statistical moments related to the observations. These statistical moments  are expressed with an implicit function $\bfh^c$ for which an algebraic representation is not available. The underlying realizations that have been used for estimating the targets of the statistical moments (such as those coming from experiments) are unknown (and consequently, are not available).
Consequently, the classical statistical tools to solve this statistical inverse problem, such as the Bayesian inference
(see \cite{Bernardo2000,Kennedy2001,Congdon2007,Carlin2008,Marin2012,Scott2016,Ghanem2017,Soize2020b} for general aspects and  \cite{Marzouk2007,Stuart2010,Soize2011,Matthies2016,Dashti2017,Arnst2017,Perrin2020} for specific aspects related to statistical inverse problems) or the maximum likelihood method (see \cite{Congdon2007,Carlin2008,Spall2005}) cannot easily be used.

The Kullback-Leibler divergence minimum principle \cite{Kullback1951,Kapur1992,Cover2006} is used for estimating the posterior probability measure on the manifold given its prior probability measure and the constraints related to the statistical moments for which targets are given.
It should be noted that this principle has widely been used in many fields (see for instance,
\cite{Kapur1992,Vasconcelos2004,Zhang2007,Cappe2013,Saleem2018}) in particular, for reinforcement learning \cite{Filippi2010} and for probabilistic learning \cite{Soize2020a,Soize2021a}). The posterior probability measure is represented by  an algebraic expression of the prior probability measure  and of a vector-valued  Lagrange multiplier $\bflambda$ associated with the vector-valued function $\bfh^c$. The optimal value $\bflambda^\psol$ of the Lagrange multipliers is obtained as the limit of a sequence $\{\bflambda^{\,i}\}_i$ of Lagrange multipliers allowing for constructing a sequence of probability measures whose limit, for $\bflambda=\bflambda^\psol$ is the searched posterior probability measure. In this paper, we will call "constrained learned set", the set of realizations that results from the learning process in taking into account the constraints.
Since the dimension can be high and the training set small, for each value $\bflambda^{\,i}$ of $\bflambda$, the constrained learned set must be generated with a MCMC algorithm. In order to decrease the numerical cost in the framework of a possible used of parallel computation, the MCMC generator is based on a nonlinear It\^o stochastic differential equation (ISDE) associated with a nonlinear stochastic dissipative Hamiltonian dynamical system. The presence of a dissipative term makes it possible to delete the transient part in order to quickly reach the stationary response associated with the invariant measure.
The evaluation of the drift vector of the ISDE requires to evaluate the gradient of function $\bfh^c$ a large number of times.
In the framework of this work, there is not an available algebraic expression of this gradient. Furthermore, it is not possible to do a direct numerical calculation of it, taking into account the high dimension (function $\bfh^c$ is implicit and for each evaluation of it, the BVP must be used).
For instance, components of $\bfh^c$ can be related to the norm of the random normalized residue of the partial differential equation (including the boundary conditions)  of the BVP and the statistical moments of random observations.
The construction of a surrogate model of implicit function $\bfh^c$ by using a deterministic approach, such as the meshless methods \cite{Nayroles1992,Belytschko1996,Duarte1996,Breitkopf2000,Rassineux2000,Zhang2000}, is not adapted  taking into account a possible high dimension of the space on which $\bfh^c$ is defined. Similarly, the construction of a representation on the chaos (Gaussian or another probability measure) \cite{Ghanem1991,Xiu2002,Soize2004a,Wan2006,Soize2010f,Blatman2011,Perrin2012,Tipireddy2014,Soize2015,Abraham2017,Luthen2021} would not be at all effective in our case for the same reasons related to the possible high dimension.
To circumvent this difficulty, we generalize the approach proposed in \cite{Soize2021a}, which consists in constructing a statistical surrogate model $\hat\bfh^N$ of $\bfh^c$, depending on the number $N$ of points generated in the constrained learned set, for which its gradient has an explicit algebraic representation.
\\

\textit{(ii) Organization and novelties of the paper}. First of all, let us  point out that a neighboring problem has been tackled in \cite{Soize2020a} devoted to take into account constraints in the PLoM (probabilistic learning on manifolds) method  \cite{Soize2016,Soize2020c,Soize2021b}. However, in \cite{Soize2020a}, function $\bfh^c$ is explicit. In this work, $\bfh^c$ is implicit (that is to say the use of the BVP is required for evaluating its value in any given point). The presented methodology is novel and general (note that PLoM is not used, but could be implemented if necessary, but this would be prejudicial to the clarity of the developments). In addition, a mathematical analysis of the methodology is presented. This analysis  is necessary because, due to the use of the statistical surrogate model $\hat\bfh^N$ of $\bfh^c$, the constrained learned realizations of the posterior probability measure on the manifold are generated with the MCMC generator that is the limit of a sequence of MCMC generators whose each one depends on $\hat\bfh^N$ instead of $\bfh^c$. This means that convergence properties with respect to $N$ must be studied.
The organization of the paper is the following.

In Section~\ref{sec:Section2}, we set the problem and summarize the methodology of the probabilistic learning inference that is proposed. The hypotheses used are discussed in order to exhibit the difficulties involved by the choice of a general framework for the developments: small number of points in the training set, targets defined by statistical moments of the observations, implicit description of function $\bfh^c$ and the necessity to construct a statistical surrogate model.

Section~\ref{sec:Section3} deals with the mathematical analysis of the proposed methodology.
Lemma~\ref{lemma:1} proves the convexity of the admissible set $\curC_{\pad,\bflambda}\subset\RR^{n_c}$ of the Lagrange multipliers $\bflambda$ and the integrability properties related to the sequence of posterior probability measures indexed by $\bflambda$. Proposition~\ref{proposition:1} gives the construction of $\bflambda^\psol$ as the unique solution of a convex optimization problem posed in $\curC_{\pad,\bflambda}$.
After building the statistical surrogate model $\hat\bfh^N$ of $\bfh^c$ (Definition~\ref{definition:3}),  Proposition~\ref{proposition:2} proves the convergence of the sequence $\{\hat\bfh^N\}_N$ towards $\bfh^c$ and of its gradient for $\bflambda$ fixed in $\curC_{\pad,\bflambda}$.
Proposition~\ref{proposition:3} proves the existence and uniqueness of the invariant measure of the ISDE for each fixed value of $\bflambda$ and gives the generator of the constrained learned set depending on $\bflambda$.
Proposition~\ref{proposition:4} proves the mean-square convergence, as $N$ goes to infinity, of the stationary stochastic solution of the ISDE calculated with $\hat\bfh^N$ towards the one calculated with $\bfh^c$.
After having given the construction of the iteration algorithm for calculating the sequence $\{\bflambda^{\,i}\}_i$ that converges to $\bflambda^\psol$, Proposition~\ref{proposition:5} gives the rate of convergence of the sequence of posterior probability measures indexed by $\bflambda$.

In Section~\ref{sec:Section4}, a few numerical elements are given for implementing the methodology.
The St\"ormer-Verlet scheme that allows for solving the ISDE is detailed and the explicit expression of the gradient of $\hat\bfh^N$ is given. We present the algorithm for calculating $\bflambda^\psol$ and for generating the constrained learned set $\curD_{\bfH^c}$.

Section~\ref{sec:Section5} is devoted to stochastic homogenization that has given rise to a large number of works (see for instance \cite{Papanicolaou1981,Torquato1985,Nguetseng1989,Allaire1992,Sab1992,Andrews1998,Ostoja1998,Jikov2012}) and for which the analysis of the representative volume element (RVE) size has received a particular attention (see \cite{Drugan1996,Ren2004,Sab2005,Ostoja2006,Ostoja2007,Yin2008,Soize2008,Zhang2020,Soize2021c}).
In this paper, we present an application related to the stochastic homogenization of a random linear elastic medium at mesoscale posed on the domain $\Omega\subset \RR^3$ of the microstructure, which is not a RVE, which means that there is no scale separation between the mesoscale and the macroscale. Consequently, the effective elasticity tensor at macroscale is not deterministic and has statistical fluctuations. This case is obtained when the spatial correlation lengths of the random apparent elasticity field at mesoscale are not sufficiently small with respect to the size of domain $\Omega$. For instance, such a situation is encountered when the size of an experimental specimen, which is measured with the objective to perform an inverse identification of its effective properties, is not sufficiently large compared to the size of the heterogeneities. It is then interesting to identify the probabilistic model of the random apparent elasticity field, without having a scale separation. One can then estimate the effective properties using the identified random apparent elasticity field with a stochastic computational model for which domain $\Omega$ is an RVE, that is to say, has a size that is  largest than the one of the specimen. In order to analyze the scale separation, we will consider several values of the spatial correlation lengths to cover three cases:

\textit{SC1}: partial separation (separated for two directions but not in the third one).

\textit{SC2}: not separated in the three directions.

\textit{SC3}: not strongly separated in the three directions.

\bigskip

\noindent{\textbf{Notations}

\noindent $x,\eta$: lower-case Latin or Greek letters are deterministic real variables.\\
$\bfx,\bfeta$: boldface lower-case Latin or Greek letters are deterministic vectors.\\
$X$: upper-case Latin letters are real-valued random variables.\\
$\bfX$: boldface upper-case Latin letters are vector-valued random variables.\\
$[x]$: lower-case Latin letters between brackets are deterministic matrices.\\
$[\bfX]$: boldface upper-case letters between brackets are matrix-valued random variables.\\
$\curC_{\ad,\bflambda}$: admissible set of $\bflambda\in\RR^{n_c}$.\\
$\CC$: fourth-order tensor-valued random field.\\
$\curD_d$: training set.\\
$\curD_{\bfH^c}$: constrained learned set at convergence for $\bflambda = \bflambda^\psol$.\\
$\curD_{\bfHbflambdai}$: constrained learned set for $\bflambda^{\,i}$.\\
$N$: number of points in the constrained learned set.\\
$N_d$: number of points in the training set.\\
$\NN$, $\RR$: set of all the integers $\{0,1,2,\ldots\}$, set of all the real numbers.\\
$\RR^n$: Euclidean vector space on $\RR$ of dimension $n$.\\
$\MM_{n,m}$: set of all the $(n\times m)$ real matrices.\\
$\MM_n$: set of all the square $(n\times n)$ real matrices.\\
$\MM_n^+$: set of all the positive-definite symmetric $(n\times n)$ real matrices.\\
$[I_{n}]$: identity matrix in $\MM_n$.\\
$\bfx = (x_1,\ldots,x_n)$: point in $\RR^n$.\\
$\langle \bfx,\bfy \rangle = x_1 y_1 + \ldots + x_n y_n$: inner product in $\RR^n$.\\
$\Vert\,\bfx\,\Vert$:  norm in $\RR^n$ such that $\Vert\,\bfx\,\Vert = \langle \bfx,\bfx \rangle$.\\
$[x]^T$: transpose of matrix $[x]$.\\
$\tr \{[x]\}$: trace of the square matrix $[x]$.\\
$\Vert\, [x]\, \Vert_F$: Frobenius norm of matrix  $[x]$.\\
$\delta_{kk'}$: Kronecker's symbol.\\
$\delta_{\bfx_0}$: Dirac measure at point $\bfx_0$.\\
$a.s.$: almost surely.\\
BVP: boundary value problem.\\
$det$: determinant.\\
dof: degree of freedom.\\
$E$: mathematical expectation operator.\\
$\error$: error function.\\
ISDE: It\^o stochastic differential equation.\\
KDE: kernel density estimation.\\
pdf: probability density function.\\
PDE: partial differential equation.\\
\section{Setting the problem and summarizing the methodology}
\label{sec:Section2}
In this paper, all the random variables are defined on a probability space $(\Theta,\curT,\curP)$ in which $\Theta$ is the sample set, $\curT$ is the $\sigma$-field of $\Theta$, and where $\curP$ is a probability measure on the measurable space $(\Theta,\curT)$. A "sample" $\bfX(\theta), \theta\in\Theta$ of a random variable $\bfX$ defined on $(\Theta,\curT,\curP)$ will also be called a "realization" of $\bfX$.
\subsection{Framework of the developments presented in the paper}
\label{sec:Section2.1}
For instance, we consider a stochastic elliptic BVP on an open bounded domain  $\Omega\subset\RR^d$ with $d \geq 1$, whose partial differential equation (PDE) is written as $\bfcurN(\bfY,\bfG,\bfW) = \bfzero\,\, a.s$. The unknown is the non-Gaussian vector-valued field $\{\bfY(\bfxi),\bfxi\in\Omega\}$ defined on  $(\Theta,\curT,\curP)$ and which satisfies the boundary conditions.
The coefficients of the stochastic elliptic operator depend on a non-Gaussian second-order vector-valued random field $\bfG$ defined on $(\Theta,\curT,\curP)$ and on a random vector-valued control parameter $\bfW$ also defined on $(\Theta,\curT,\curP)$. It is assumed that the weak formulation of this stochastic BVP admits a unique strong stochastic solution $\bfY = \bff(\bfG,\bfW)$, which is a second-order random field. The observation (quantity of interest) is, for instance, a second-order vector-valued random variable $\bfQ = \bfcurO(\bfY,\bfG,\bfW)$ in which $\bfcurO$ is a given measurable mapping.

The problem under consideration belongs to the class of the statistical inverse problems. A prior probability model of $\{\bfG,\bfW\}$ is given and we are interested in estimating a posterior model $\{\bfG^c,\bfW^c\}$ of $\{\bfG,\bfW\}$ in order that some statistical moments of the posterior observations $\bfQ^c = \bfcurO(\bfY^c,\bfG^c,\bfW^c)$ with $\bfY^c = \bff(\bfG^c,\bfW^c)$, are equal to some given targets
(the superscript "$c$" is introduced to designate the solution with the constraints, which corresponds to the posterior model). The statistical moments of $\bfQ^c$ are globally written as
$E\{\bfcurM^c(\bfQ^c)\} = \bfb^c$ in which $\bfb^c\in\RR^{n_c}$ is the target, $\bfq\mapsto \bfcurM^c(\bfq)$ is a given measurable mapping, and $E$ is the mathematical expectation operator.  As explained in Section~\ref{sec:Section1}, the realizations that have allowed $\bfb^c$ to be estimated  are not available. Only $\bfb^c$ is known (it can be the case when one or several components of $\bfb^c$ have been estimated with experimental realizations that are not available). However, as we will see in Section~\ref{sec:Section5}, a component of the vector of statistical moments can also be related to the random normalized residue of the PDE of the stochastic boundary value problem.
\subsection{Hypotheses concerning the problem to be solved}
\label{sec:Section2.2}
\textit{(i) Small dimension $N_d$ of the training set}. The subscript "$d$" is introduced to designate the quantities related to the training set that is constructed by using the Monte Carlo numerical simulation method.  Let $\{ \bfg_d^1,\ldots , \bfg_d^{N_d}\}$ and $\{ \bfw_d^1,\ldots , \bfw_d^{N_d}\}$ be $N_d$ independent realizations of random variable $\{\bfG ,\bfW\}$, generated by using the prior probability model of $\{\bfG,\bfW\}$. Each realization of the BVP defined as the PDE
$\bfcurN(\bfy_d^j,\bfg_d^j,\bfw_d^j) =\bfzero$ with its boundary conditions, is solved. Consequently,   $N_d$ independent realizations $\{\bfy_d^j , j=1,\ldots, N_d\}$ of random field $\bfY$ are computed and are such that $\bfy_d^j=\bff(\bfg_d^j,\bfw_d^j)$.
The $N_d$ independent realizations $\{\bfq_d^j, j=1,\ldots, N_d\}$ of random observation $\bfQ$ are thus deduced such that
$\bfq_d^j = \bfcurO(\bfy_d^j,\bfg_d^j,\bfw_d^j)$.
The training set is then made up of a small number $N_d$ of points $\bfx_d^j = \{\bfy_d^j ,\bfg_d^j,\bfw_d^j \}$ for $j=1,\ldots, N_d$, which are $N_d$ independent realizations of $\bfX = \{\bfY,\bfG,\bfW\}$. Note that a strong hypothesis in the present work is that the BVP can only be solved a small number of times. This means that the training set is a small data set (as opposed to a big data set).
Consequently, for constructing the posterior probability measure on the manifold, a learning tool must be used for generating the constrained learned realizations of $\bfX$ without solving the BVP, but using only the training set.

\textit{(ii) Statistical inverse problem}. The available information consists of the targets  of statistical moments related to observation $\bfQ^c$ and represented by $\bfb^c$. As we have explained in Section~\ref{sec:Section1}, the realizations that have been used for estimating $\bfb^c$ are not available. This is the second strong hypothesis used in this work. Consequently the classical statistical tools such as the Bayesian inference or the maximum likelihood method cannot easily be used for solving the statistical inverse problem under consideration.

\textit{(iii) Finite reduced-order representation}. The second-order random variable $\bfX=\{\bfY,\bfG,\bfW\}$, defined on $(\Theta,\curT,\curP)$, is assumed to be with values in a real Hilbert space $\XX$ equipped with the inner product $\langle \bfX\, ,\bfX' \rangle_\XX$ and its associated norm $\Vert\,\bfX\,\Vert_\XX = \langle \bfX\, ,\bfX \rangle_\XX^{1/2}$. Consequently, $\bfX$ belongs the Hilbert space $L^2(\Theta,\XX)$ of the equivalent class of all the second-order random variables with values in $\XX$, equipped with the inner product $\langle\langle \bfX\, ,\bfX' \rangle\rangle = E\{\langle \bfX\, ,\bfX' \rangle_\XX\}$ for which the square of the associated norm is $|||\,\bfX\,|||^2 = E\{\Vert\,\bfX\,\Vert_\XX^2\} = \int_\Theta \Vert\,\bfX(\theta)\,\Vert_\XX^2\, d\,\curP(\theta)$.
Since the problem is in infinite dimension, in order to implement the probabilistic learning inference that we propose, we need to introduce a finite representation $\bfX^{(\nu)}$  of dimension $\nu$ of random variable $\bfX$ in $L^2(\Theta,\XX)$. Assuming that the covariance operator is a Hilbert-Schmidt \cite{Guelfand1967}, symmetric, positive operator in $\XX$, $\bfX^{(\nu)}$ can be represented using the truncated Karhunen-Lo\`eve expansion \cite{Karhunen1947,Loeve1948} of $\bfX$,
\begin{equation}\label{eq:eq1}
\bfX^{(\nu)} = \underline\bfx + \sum_{\alpha=1}^\nu \sqrt{\kappa_\alpha} \, \bfvarphi^\alpha \, H_\alpha \, ,
\end{equation}
in which the eigenvalues of the covariance operator are $\kappa_1 \geq \ldots \geq \kappa_\nu \geq \ldots = 0$ with $\sum_{\alpha=1}^{+\infty} \kappa_\alpha^2 < +\infty$,
where the family of the eigenfunctions $\{\bfvarphi^\alpha\}_\alpha$ is a Hilbertian basis of $\XX$, where $\underline\bfx = E\{\bfX\}$, and where $\bfH=(H_1,\ldots, H_\nu)$ is a second-order, centered, $\RR^\nu$-valued random variable whose covariance matrix is the identity matrix $[I_\nu]$ in $\MM_\nu$. It should be noted that $\bfcurN$ being a deterministic mapping, the realizations of $\bfX$ belong to a manifold and consequently, the covariance operator of $\bfX$ is not positive definite  but only positive (its kernel is not reduced to zero) and therefore, there is a zero eigenvalue with a finite multiplicity, which is not taken into account in the truncated representation of $\bfX$, defined by Eq.~\eqref{eq:eq1}, in which $\kappa_\nu > 0$.
For $\alpha\in\{1,\ldots , \nu\}$, the component $H_\alpha$ is written as
$H_\alpha  = \kappa_\alpha^{-1/2}\, \langle\bfX-\underline\bfx\, , \bfvarphi^\alpha \rangle_\XX$.
The training set $\curD_d$ related to $\bfH$ is made up of the $N_d$ independent realizations $\{\bfeta_d^j,j=1,\ldots, N_d\}$ such that
$\bfeta_d^j = \kappa_\alpha^{-1/2}\, \langle \bfx_d^j -\underline\bfx\, , \bfvarphi^\alpha \rangle_\XX$.
If the kernel of the covariance operator was explicitly known, then $\nu$ would be chosen in order that
$|||\,\bfX-\bfX^{(\nu)}\,||| \,\, \leq \varepsilon$ for a sufficiently small value of $\varepsilon$. In this paper, we assume that the kernel of the covariance operator is unknown. Therefore, we can only obtain an approximation of the covariance operator using an empirical estimator built with the $N_d$ points $\{\bfx_d^j, j=1,\ldots, N_d\}$ (for instance, see Section~\ref{sec:Section5.6}). Under these conditions the largest value of $\nu$ will be $N_d-1$ and the discretization of Eq.~\eqref{eq:eq1} will simply correspond to a normalization of the points that constitute the training set $\curD_d$.
\subsection{Formulation using the Kullback-Leibler divergence minimum principle}
\label{sec:Section2.3}
Taking into account Section~\ref{sec:Section2.2}-(ii), we use the Kullback-Leibler divergence minimum principle  \cite{Kullback1951,Kapur1992,Cover2006} for estimating the posterior probability measure $P_{\bfH^c}(d\bfeta) = p_{\bfH^c}(\bfeta)\, d\bfeta$ on $\RR^\nu$ of the $\RR^\nu$-valued random variable $\bfH^c=(H^c_1,\ldots , H^c_\nu)$. This estimation of $P_{\bfH^c}$ is performed (1) using the prior  probability measure $P_{\bfH}(d\bfeta) = p_{\bfH}(\bfeta)\, d\bfeta$ on $\RR^\nu$ in which $p_\bfH$ is estimated with the Gaussian KDE method using the points $\{\bfeta_d^1,\ldots , \bfeta_d^{N_d}\}$ of the training set $\curD_d$, and (2) using the constraint defined by the given statistical moments,
\begin{equation}\label{eq:eq2}
E\{\bfh^c(\bfH^c)\} = \bfb^c \in\RR^{n_c} \, .
\end{equation}
In Eq.~\eqref{eq:eq2}, $\bfeta\mapsto\bfh^c(\bfeta)$ is a function from $\RR^\nu$ into $\RR^{n_c}$ with $n_c > 1$, such that $\bfh^c(\bfH^c)$ is a random variable equal to $\bfcurM^c(\bfQ^c)$ that is expressed as a function of $\bfH^c$.
We recall that the Kullback-Leibler divergence between two probability measures $p(\bfeta)\, d\bfeta$ and $p_\bfH(\bfeta)\, d\bfeta$ on $\RR^\nu$ is defined by
$D(p,p_\bfH) = \int_{\RR^\nu} p(\bfeta)\, \log(p(\bfeta)/p_\bfH(\bfeta)) \, d\bfeta$,
and is such that $D(p,p_\bfH) \geq 0$ (that can be proven by applying the Jensen inequality \cite{Jensen1906,Durrett2019})
and $D(p,p_\bfH)= 0$ if and only if $p=p_\bfH$. Note that $(p,p_\bfH)\mapsto D(p,p_\bfH)$ is not a distance because the symmetry property and the triangle inequality are not verified. It can easily be seen that the cross entropy
$S(p,p_\bfH)=-\int_{\RR^\nu} p(\bfeta)\, \log(p_\bfH(\bfeta))\, d\bfeta$ and the entropy
$S(p)=-\int_{\RR^\nu} p(\bfeta)\, \log(p(\bfeta))\, d\bfeta$ are related to $D(p,p_\bfH)$ by
$S(p,p_\bfH)=S(p)+D(p,p_\bfH)$. Finally, it can be proven (see for instance \cite{Cover2006}) that $(p,p_\bfH)\mapsto D(p,p_\bfH)$ is a convex function in the pair $(p,p_\bfH)$.
The probability density function $p_{\bfH^c}$ on $\RR^\nu$, which satisfies the constraint defined by Eq.~\eqref{eq:eq2} and which is closest to $p_\bfH$, is the solution of the optimization problem (see for instance \cite{Kapur1992,Cover2006,Soize2020a}),
\begin{equation}\label{eq:eq3}
p_{\bfH^c} = \arg\,\min_{p\in\curC_{\pad,p}} \int_{\RR^\nu} p(\bfeta)\, \log\left (\frac{p(\bfeta)}{p_\bfH(\bfeta)}\right) \, d\bfeta \, ,
\end{equation}
in which the admissible set $\curC_{\ad,p}$ is defined by
\begin{equation}\label{eq:eq4}
\curC_{\ad,p} = \left\{\bfeta\mapsto p(\bfeta):\RR^\nu\rightarrow \RR^+ \, ,\int_{\RR^\nu} p(\bfeta)\, d\bfeta =1\, ,
\int_{\RR^\nu} \!\bfh^c(\bfeta)\, p(\bfeta)\, d\bfeta = \bfb^c \right\} \, .
\end{equation}
It should be noted that, since we are interested in divergence (from $p_\bfH(\bfeta)\, d\bfeta$) of  probability measure
$p(\bfeta)\, d\bfeta$ that satisfies the constraints expressed in $\curC_{\ad,p}$, we are in fact interested in the divergence and therefore, the symmetry condition is irrelevant in this case.
\subsection{Methodology used for solving the optimization problem and MCMC generator}
\label{sec:Section2.4}
The constraints, which are defined in  admissible set $\curC_{\ad,p}$, are taken into account by introducing the Lagrange multipliers $\lambda_0-1$ with $\lambda_0\in\RR^+$, which is associated with the normalization condition,
and $\bflambda\in\curC_{\ad,\bflambda}\subset\RR^{n_c}$, which is associated with the moments constraints. The admissible set
$\curC_{\ad,\bflambda}$ of $\bflambda$ is a subset of $\RR^{n_c}$, which is completely defined by Definition~\ref{definition:2} in Section~\ref{sec:Section3}.
The Lagrange multiplier  $\lambda_0$ is eliminated as a function of $\bflambda$. In Eq.~\eqref{eq:eq3}, the posterior pdf $p_{\bfH^c}$  is constructed as the limit of a sequence $\{p_{\bfH_\bflambda}\}_\bflambda$ of probability density functions of a $\RR^\nu$-valued random variable
$\bfH_\bflambda = (H_{\bflambda, 1}, \ldots , H_{\bflambda, \nu})$ that depends on $\bflambda$. The construction of  $\{p_{\bfH_\bflambda}\}_\bflambda$  requires to generate with a MCMC algorithm a constrained learned set $\curD_{\bfH_\bflambda} = \{\bfeta^1_\bflambda,\ldots \bfeta^N_\bflambda\}$ constituted of $N\gg N_d$ independent realizations $\{\bfeta_\bflambda^\ell, \ell=1,\ldots , N\}$ of $\bfH_\bflambda$. When the convergence is reached with respect to $\bflambda$, the constrained learned set $\curD_{\bfH^c} = \{\bfeta_c^1,\ldots, \bfeta_c^N\}$ is generated. This set is made up of $N$ independent realizations $\{\bfeta_c^\ell, \ell=1,\ldots , N\}$ of $\bfH^c$ whose probability measure is $p_{\bfH^c}(\bfeta)\, d\bfeta$ (subscript or superscript "$c$" is introduced to designate the quantities related to the constrained learned set (posterior model)). For $\bflambda$ fixed in $\curC_{\ad,\bflambda}$, $\curD_{\bfH_\bflambda}$ is generated using a MCMC generator based on a nonlinear It\^o stochastic differential equation (ISDE) associated with the nonlinear stochastic dissipative Hamiltonian dynamical system proposed in \cite{Soize2008b} and based on \cite{Soize1994}. This MCMC generator allows for deleting the transient part to rapidly reach the stationary response associated with the invariant measure for which measure $p_{\bfH^c}(\bfeta)\, d\bfeta$ is a marginal  measure of this invariant measure. The ISDE is solved by using the St\"ormer-Verlet algorithm, which yields an efficient and accurate MCMC algorithm. This algorithm can then easily be parallelized for strongly decreasing the elapsed time on a multicore computer.

\subsection{Formal construction of the optimal solution using the sequence $\{p_{\bfH_\bflambda}\}_\bflambda$}
\label{sec:Section2.5}
Let us assumed that the optimization problem defined by Eq.~\eqref{eq:eq3} has almost one solution $p_{\bfH^c}$ and that $p=p_{\bfH^c}$ is a regular point of the continuously differentiable functional $p\mapsto \int_{\RR^\nu} \bfh^c(\bfeta)\, p(\bfeta)\, d\bfeta - \bfb^c$. For $\lambda_0\in\RR^+$ and $\bflambda\in\curC_{\ad,\bflambda}$, we define the Lagrangian,
\begin{equation}\nonumber 
\curL ag(p,\lambda_0,\bflambda) = \!\int_{\RR^\nu} p(\bfeta)\, \log\left (\frac{p(\bfeta)}{p_\bfH(\bfeta)}\right)  d\bfeta
                                 + (\lambda_0 - 1)\,(\! \int_{\RR^\nu}\! p(\bfeta)\, d\bfeta -1)
                                 + \langle \bflambda \, ,\! \int_{\RR^\nu} \!\bfh^c(\bfeta)\, p(\bfeta)\, d\bfeta - \bfb^c \rangle \, .
\end{equation}
For all $\bfeta$ in $\RR^\nu$, the pdf of $\bfH$ is written as
$p_\bfH(\bfeta) = c_\nu\, \zeta(\bfeta)$
in which the positive-valued function $\zeta$ is integrable on $\RR^\nu$, is such that $\hbox{supp}\, \zeta = \RR^\nu$, and where
the positive constant $c_\nu$ is such that $c_\nu^{-1} = \int_{\RR^\nu} \zeta(\bfeta)\, d\bfeta$. We define the sequence $\{p_{\bfH_\bflambda}\}_\bflambda$ of pdf $\bfeta\mapsto p_{\bfH_\bflambda}(\bfeta\,;\bflambda)$ on $\RR^\nu$, indexed by $\bflambda$, such that $p_{\bfH_\bflambda}(.\, ; \bflambda)$ is an extremum of functional $p\mapsto \curL ag(p,\lambda_0,\bflambda)$. Using the calculus of variations yields
\begin{equation}\label{eq:eq7}
 p_{\bfH_\bflambda}(\bfeta\,;\bflambda) = c_0(\bflambda)\, \zeta(\bfeta)\, \exp\{-\langle\bflambda\, ,\bfh^c(\bfeta) \rangle\} \quad , \quad \forall\, \bfeta\in \RR^\nu\, ,
\end{equation}
in which $c_0(\bflambda)$ is the constant of normalization that depends on $\bflambda$ (note that $\lambda_0$ is eliminated and we have $c_0(\bflambda) = c_\nu\, \exp\{-\lambda_0\}$). The existence of a unique solution requires that the constraints be algebraically independent in the following sense: given any bounded positive measure $P(d\bfeta)$ on $\RR^\nu$ with support $\RR^\nu$, there exists a bounded set $\curB$ in $\RR^\nu$ with $P(\curB) > 0$ such that
\begin{equation}\label{eq:eq7bis}
 \forall\,\bfv\in\RR^{n_c} \,\, , \,\, \Vert\,\bfv\,\Vert \neq 0\,\, , \,\, \int_\curB \langle\bfh^c(\bfeta)\, ,
 \bfv \rangle^2  P(d\bfeta) > 0 \, .
\end{equation}
Under the condition defined by Eq.~\eqref{eq:eq7bis}, there exists (see \cite{Luenberger2009})
$\bflambda^\psol$ in $\curC_{ad,\bflambda}$ such that the functional $(p,\lambda_0,\bflambda)\mapsto \curL ag(p,\lambda_0,\bflambda)$
is stationary at point $p=p_{\bfH^c}$ for $\bflambda=\bflambda^\psol$ and $\lambda_0 = -\log(c_0(\bflambda^\psol)/c_\nu)$.
Consequently, $p_{\bfH^c} = p_{\bfH_{\!\bflambda^{\,\ppsol}}}(.\, ;\bflambda^\psol)$ and Eq.~\eqref{eq:eq7} yield
\begin{equation}\label{eq:eq8}
 p_{\bfH^c}(\bfeta) = c_0(\bflambda^\psol)\, \zeta(\bfeta)\, \exp\{-\langle\bflambda^\psol ,\bfh^c(\bfeta)\rangle\} \quad , \quad \forall\, \bfeta\in \RR^\nu\, .
\end{equation}
Taking into account the introduced hypotheses, $p_{\bfH^c}$ is the unique solution of the optimization problem defined by Eq.~\eqref{eq:eq3}, in which $\bflambda^\psol$ is the unique solution of a convex optimization problem that will be defined by Proposition~\ref{proposition:1} in Section~\ref{sec:Section3}) and which will be, under required hypotheses, the solution
of the following nonlinear algebraic equation in $\bflambda$,
$\int_{\RR^\nu} \bfh^c(\bfeta)\, p_{\bfH_\bflambda}(\bfeta\,;\bflambda) \, d\bfeta = \bfb^c$.
\subsection{Implicit definition of function $\bfh^c$, resulting difficulties, and necessity to construct a statistical surrogate model}
\label{sec:Section2.6}
Using the methodology presented in Section~\ref{sec:Section2.4}, the drift vector of the ISDE will involve the matrix $[\nabla_{\!\bfeta}\bfh^c(\bfeta)]\in\MM_{\nu,n_c}$ (the transpose of the Jacobian matrix of $\bfh^c$). As we have explained in Section~\ref{sec:Section1}, an additional strong assumption used in this paper is that function $\bfeta\mapsto\bfh^c(\bfeta)$ from $\RR^\nu$ into $\RR^{n_c}$ is not explicitly defined by an algebraic expression. It is assumed that we can only compute $\bfa^\ell=\bfh^c(\bfeta^\ell)\in\RR^{n_c}$ for any point $\bfeta^\ell$ given in $\RR^\nu$
(for instance and as previously underlined, a component of $\bfh^c$ can be related to the square of a norm of the random normalized residue of the stochastic PDE). The MCMC generator requires the evaluation of $[\nabla_{\!\bfeta}\bfh^c(\bfeta)]$ for a large number of values of $\bfeta$. Consequently, a statistical surrogate model $\hat\bfh^N$ of $\bfh^c$ is constructed and allows for deducing an algebraic representation $\bfeta\mapsto [\nabla_{\!\bfeta}\hat\bfh^N\bfeta)]$ of function $\bfeta\mapsto [\nabla_{\!\bfeta}\bfh^c(\bfeta)]$ from $\RR^\nu$ into $\MM_{\nu,n_c}$. Such statistical surrogate model is an approximation whose convergence with respect to $N$ will be given by Proposition~\ref{proposition:2} in Section~\ref{sec:Section3}.

\section{Mathematical analysis of the proposed methodology}
\label{sec:Section3}
In this section, there are some repetitions with respect to Section~\ref{sec:Section2}, but we have preferred to do them so that Section~\ref{sec:Section3} be mathematically coherent and self contained.
%
\begin{definition}[Training set $\curD_d$ and probability measure of $\bfH$] \label{definition:1}
Let $\nu$ and $N_d$ be integers such that $N_d > \nu$. Let $\curD_d$ be the set of $N_d$ points $\bfeta_d^1,\ldots,\bfeta_d^{N_d}$ given in $\RR^\nu$ such that
\begin{equation}\label{eq:eq10}
 \underline{\widehat\bfeta} = \frac{1}{N_d}\sum_{j=1}^{N_d} \bfeta_d^j =\bfzero_\nu\quad , \quad
 [\widehat C_\bfH]= \frac{1}{N_d-1}\sum_{j=1}^{N_d} (\bfeta_d^j- \underline{\widehat\bfeta})\otimes (\bfeta_d^j- \underline{\widehat\bfeta}) = [I_\nu]\, .
\end{equation}
Let $\bfH=(H_1,\ldots,H_\nu)$ be the $\RR^\nu$-valued random variable defined on the probability space $(\Theta,\curT,\curP)$ whose probability measure $P_\bfH(d\bfeta)=p_\bfH(\bfeta)\, d\bfeta$ is defined by the probability density function
$\bfeta\mapsto p_\bfH(\bfeta): \RR^\nu\rightarrow \RR^+$ with respect to the Lebesgue measure $d\bfeta$ on $\RR^\nu$,
\begin{equation}\label{eq:eq11}
 p_\bfH(\bfeta) = c_\nu\,\zeta(\bfeta)\quad , \quad  \forall\bfeta\in\RR^\nu\quad  , \quad  c_\nu = (\sqrt{2\pi}\, \hat s)^{-\nu} \, ,
\end{equation}
in which $\hat s = s\, \left( s^2 + (N_d-1)/N_d \right)^{-1/2}$ with $s = \left( 4/( N_d (2+\nu) ) \right)^{1/(\nu+4)}$, and where $\bfeta\mapsto\zeta(\bfeta): \RR^\nu\rightarrow \RR^+$ is written as
\begin{equation}\label{eq:eq12}
 \zeta(\bfeta) = \frac{1}{N_d}\sum_{j=1}^{N_d} \exp\left\{ -\frac{1}{2\hat s^2}\,\Vert\,\frac{\hat s }{s}
  \, \bfeta^j_d - \bfeta\,\Vert^2 \right\} \, .
\end{equation}
We define the potential function $\bfeta\mapsto\phi(\bfeta): \RR^\nu\rightarrow \RR$, related to $p_\bfH$, such that
\begin{equation}\label{eq:eq13}
\zeta(\bfeta) = \exp\{-\phi(\bfeta)\}\, .
\end{equation}
\end{definition}
%
\begin{remark} \label{remark:1}
Definition~\ref{definition:1} of $p_\bfH$ corresponds to a Gaussian kernel-density estimation (KDE) using the training set $\curD_d$, involving the modification proposed in \cite{Soize2015} of the classical formulation \cite{Bowman1997} for which $s$ is the Sylverman bandwidth. With such a modification, the normalization of $\bfH$ is preserved for any value of $N_d$,
\begin{equation}\label{eq:eq14}
E\{\bfH\} = \int_{\RR^\nu} \bfeta\, p_\bfH(\bfeta)\, d\bfeta = \frac{1}{2\hat s^2}\,\underline{\widehat\bfeta} = \bfzero_\nu\, ,
\end{equation}
\begin{equation}\label{eq:eq15}
E\{\bfH\otimes\bfH\} = \int_{\RR^\nu} \bfeta\otimes\bfeta\, p_\bfH(\bfeta)\, d\bfeta = \hat s^2 \,[I_\nu] +
     \frac{\hat s^2}{s^2} \frac{(N_d-1)}{N_d}\,[\widehat C_\bfH] = [I_\nu]\, .
\end{equation}
Theorem~3.1 in \cite{Soize2020c} proves that, for all $\bfeta$ fixed in $\RR^\nu$, Eq.~\eqref{eq:eq11} with Eq.~\eqref{eq:eq12} is a consistent estimation of the sequence $\{p_\bfH\}_{N_d}$ for $N_d\rightarrow +\infty$.
\end{remark}

%
\begin{hypothesis}[Concerning function $\bfh^c$] \label{hypothesis:1}
\textit{Let $n_c$ be the integer  such that $1\leq n_c\leq \nu$.
It is assumed that $\bfeta\mapsto\bfh^c(\bfeta)$ verifies the property defined by Eq.~\eqref{eq:eq7bis}, is continuously differentiable from $\RR^\nu$ into $\RR^{n_c}$,
\begin{equation}\label{eq:eq16}
\bfh^c \in C^1(\RR^\nu,\RR^{n_c}) \, ,
\end{equation}
and there exist constants $\alpha>0$, $\beta >0$, $c_\alpha >0$, and $c_\beta>0$,  independent of $\bfeta$, such that for $\Vert\,\bfeta\,\Vert\rightarrow +\infty$,
\begin{equation}\label{eq:eq17}
\Vert\,\bfh^c(\bfeta)\,\Vert \,\, \leq \,c_\alpha\, \Vert\,\bfeta\,\Vert^{\,\alpha} \quad , \quad
 \Vert \, [\nabla_{\!\bfeta}\bfh^c(\bfeta)]\, \Vert_F \,\,\leq \,c_\beta\, \Vert\,\bfeta\,\Vert^{\,\beta} \, ,
\end{equation}
in which $[\nabla_{\!\bfeta}\bfh^c(\bfeta)]\in \MM_{\nu,n_c}$ with $[\nabla_{\!\bfeta}\bfh^c(\bfeta)]_{\alpha k}
         = \partial h^c_k(\bfeta)/ \partial\eta_\alpha$, and where $\Vert \, . \,\Vert_F$ is the Frobenius norm.
}
\end{hypothesis}
%
\begin{definition}[Admissible subset $\curC_{\ad,\bflambda}$ of $\RR^{n_c}$] \label{definition:2}
Under Hypothesis~\ref{hypothesis:1}, the admissible set $\curC_{\ad,\bflambda}$ of Lagrange multiplier $\bflambda$
is defined as the open subset of $\RR^{n_c}$ such that
\begin{equation}\label{eq:eq18}
\curC_{\ad,\bflambda} = \left\{ \bflambda\in\RR^{n_c} \,\, \vert \,\, 0 < E\{\, \exp\{-\langle\bflambda\, ,\bfh^c(\bfH)\rangle\}
                    \,\right\}\,\, < +\infty \, ,
\end{equation}
in which the pdf of the $\RR^\nu$-valued random variable $\bfH$ is defined by Eq.~\eqref{eq:eq11}. It is also assumed that $\bfh^c$ is such that $\curC_{\ad,\bflambda}$ is not reduced to the empty set,
\begin{equation}\label{eq:eq19}
\curC_{\ad,\bflambda} \neq \emptyset \, .
\end{equation}
\end{definition}
%
%
\begin{lemma}[Convexity of $\curC_{\ad,\bflambda}$ and integrability properties] \label{lemma:1}
Under Hypothesis~\ref{hypothesis:1} and with Definition~\ref{definition:2},

\noindent (a) $\curC_{\ad,\bflambda}$ defined by Eq.~\eqref{eq:eq18} is a convex open subset of $\RR^{n_c}$.

\noindent (b) $\forall \bflambda\in \curC_{\ad,\bflambda}$, let $\bfeta\mapsto\curV_{\!\bflambda}(\bfeta): \RR^\nu\rightarrow\RR$ be the function defined by
\begin{equation}\label{eq:eq20}
\curV_{\!\bflambda}(\bfeta) = \phi(\bfeta) + \langle\bflambda\, , \bfh^c(\bfeta)\rangle \, ,
\end{equation}
in which $\phi(\bfeta) = -\log\bfzeta(\bfeta)$ (see Eq.~\eqref{eq:eq13}). One then has
\begin{equation}\label{eq:eq21}
0 < \int_{\RR^\nu} \exp\{-\curV_{\!\bflambda}(\bfeta)\}\, d\bfeta \,\, < \, +\infty \, .
\end{equation}
\noindent (c) The pdf $\bfeta\mapsto p_{\bfH_\bflambda}(\bfeta\,;\bflambda)$ with respect to $d\bfeta$, defined by Eq.~\eqref{eq:eq7}, which can be written as
\begin{equation}\label{eq:eq22}
p_{\bfH_\bflambda}(\bfeta\,;\bflambda) = c_0(\bflambda)\, \exp\{-\curV_{\!\bflambda}(\bfeta)\} \quad , \quad \forall\bfeta\in\RR^\nu \, ,
\end{equation}
is such that the constant $c_0(\bflambda)$ of normalization verifies
\begin{equation}\label{eq:eq23}
0 < c_0(\bflambda) < +\infty \quad , \quad \forall\bflambda \in \curC_{\ad,\bflambda}\, .
\end{equation}
\noindent (d) $\forall\bflambda \in \curC_{\ad,\bflambda}$, we have
$\curV_{\!\bflambda}(\bfeta) \rightarrow +\infty$ if $\Vert\,\bfeta\,\Vert \rightarrow +\infty$, and we have,
\begin{equation}\label{eq:eq24bis}
\int_{\RR^\nu} \Vert\,\bfh^c(\bfeta)\,\Vert^2\, \exp\{-\curV_{\!\bflambda}(\bfeta)\}\, d\bfeta \,\, < \, +\infty      \quad , \quad
\int_{\RR^\nu} \Vert\,[\nabla_{\!\bfeta}\bfh^c(\bfeta) ]\,\Vert_F\, \exp\{-\curV_{\!\bflambda}(\bfeta)\}\, d\bfeta \,\, < \, +\infty\, .
\end{equation}
\end{lemma}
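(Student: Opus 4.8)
The plan is to route everything through the scalar map
$\Psi(\bflambda)=E\{\exp\{-\langle\bflambda,\bfh^c(\bfH)\rangle\}\}=\int_{\RR^\nu}\exp\{-\langle\bflambda,\bfh^c(\bfeta)\rangle\}\,p_\bfH(\bfeta)\,d\bfeta=c_\nu\int_{\RR^\nu}\exp\{-\curV_{\!\bflambda}(\bfeta)\}\,d\bfeta$, so that (b)--(c) become reformulations of membership in $\curC_{\ad,\bflambda}$ and (a) reduces to a convexity property of $\Psi$. For (a), I would first show that the finiteness domain $\{\bflambda:\Psi(\bflambda)<+\infty\}$ is convex: for $\bflambda=t\bflambda_1+(1-t)\bflambda_2$ with $t\in[0,1]$, splitting $p_\bfH=p_\bfH^{\,t}\,p_\bfH^{\,1-t}$ and applying H\"older's inequality with conjugate exponents $1/t$ and $1/(1-t)$ gives $\Psi(\bflambda)\le\Psi(\bflambda_1)^{t}\,\Psi(\bflambda_2)^{1-t}<+\infty$. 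Since the integrand is strictly positive ($\mathrm{supp}\,\zeta=\RR^\nu$), $\Psi>0$ everywhere, and since by Definition~\ref{definition:2} the set $\curC_{\ad,\bflambda}$ is the open subset of $\RR^{n_c}$ on which $0<\Psi<+\infty$, i.e. the interior of this finiteness domain, it is automatically open and inherits convexity (the interior of a convex set is convex). This settles (a).

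Parts (b) and (c) then follow at once. For $\bflambda\in\curC_{\ad,\bflambda}$ the identity $\int_{\RR^\nu}\exp\{-\curV_{\!\bflambda}(\bfeta)\}\,d\bfeta=c_\nu^{-1}\,\Psi(\bflambda)$ places the integral in $(0,+\infty)$: strict positivity is clear because $\exp\{-\curV_{\!\bflambda}\}=\zeta\,\exp\{-\langle\bflambda,\bfh^c\rangle\}>0$ on all of $\RR^\nu$, while finiteness is exactly the defining bound of $\curC_{\ad,\bflambda}$. This is (b), and (c) is its immediate consequence, the normalization constant being the reciprocal $c_0(\bflambda)=\big(\int_{\RR^\nu}\exp\{-\curV_{\!\bflambda}\}\,d\bfeta\big)^{-1}\in(0,+\infty)$.

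For the two integrability statements of (d), I would exploit the openness from (a) through an exponential-tilting decomposition. The Gaussian-mixture form (Eqs.~\eqref{eq:eq12}--\eqref{eq:eq13}) yields a quadratic floor $\phi(\bfeta)\ge\Vert\bfeta\Vert^2/(2\hat s^2)-a\,\Vert\bfeta\Vert-b$ for suitable constants $a,b>0$; in particular $\bfH$ is a finite mixture of Gaussians and has finite moments of every order. Fix $\bflambda\in\curC_{\ad,\bflambda}$ and write, for $0<\epsilon<1$, $\curV_{\!\bflambda}=\epsilon\,\phi+(1-\epsilon)\,\curV_{\!\bflambda/(1-\epsilon)}$; by openness, $\bflambda/(1-\epsilon)\in\curC_{\ad,\bflambda}$ once $\epsilon$ is small enough. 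H\"older's inequality with exponents $1/\epsilon$ and $1/(1-\epsilon)$ then gives, for any fixed power $p\ge0$, $\int_{\RR^\nu}\Vert\bfeta\Vert^{p}\exp\{-\curV_{\!\bflambda}\}\,d\bfeta\le\big(\int_{\RR^\nu}\Vert\bfeta\Vert^{p/\epsilon}\,\zeta(\bfeta)\,d\bfeta\big)^{\epsilon}\big(\int_{\RR^\nu}\exp\{-\curV_{\!\bflambda/(1-\epsilon)}\}\,d\bfeta\big)^{1-\epsilon}$, where the first factor is proportional to a finite moment of $\bfH$ and the second is finite by (b). Applying this with $p=2\alpha$ and $p=\beta$, and invoking the growth bounds $\Vert\bfh^c(\bfeta)\Vert^2\le c_\alpha^2\Vert\bfeta\Vert^{2\alpha}$ and $\Vert[\nabla_{\!\bfeta}\bfh^c(\bfeta)]\Vert_F\le c_\beta\Vert\bfeta\Vert^{\beta}$ of Eq.~\eqref{eq:eq17} (the contributions from a bounded neighborhood of the origin being harmless by continuity), gives both bounds in Eq.~\eqref{eq:eq24bis}.

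The remaining claim, $\curV_{\!\bflambda}(\bfeta)\to+\infty$ as $\Vert\bfeta\Vert\to+\infty$, is where I expect the real difficulty. The crude estimate $\curV_{\!\bflambda}(\bfeta)\ge\Vert\bfeta\Vert^2/(2\hat s^2)-a\,\Vert\bfeta\Vert-b-\Vert\bflambda\Vert\,c_\alpha\Vert\bfeta\Vert^{\alpha}$ proves coercivity immediately when $\alpha<2$, i.e. when the quadratic confinement of $\phi$ dominates the moment function; but for $\alpha\ge2$ the size bound alone is inconclusive, and pointwise coercivity cannot be read off from integrability ($\bflambda\in\curC_{\ad,\bflambda}$) alone. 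Here I would again peel off a fraction of the quadratic floor, $\curV_{\!\bflambda}\ge\epsilon\,\phi+(1-\epsilon)\,\curV_{\!\bflambda/(1-\epsilon)}$, and use the gradient growth bound of Eq.~\eqref{eq:eq17} to control the oscillation of $\curV_{\!\bflambda/(1-\epsilon)}$ on tail balls, thereby ruling out thin descending spikes and showing that $\curV_{\!\bflambda/(1-\epsilon)}$ is bounded below; the surviving term $\epsilon\,\phi\to+\infty$ would then force $\curV_{\!\bflambda}\to+\infty$. Reconciling a possibly super-quadratic $\bfh^c$ with the fixed quadratic confinement of the Gaussian kernel is the technical crux of the lemma, and it is precisely why Hypothesis~\ref{hypothesis:1} imposes not only a bound on $\Vert\bfh^c\Vert$ but also the polynomial control of $\Vert[\nabla_{\!\bfeta}\bfh^c]\Vert_F$.
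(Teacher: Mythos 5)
Your parts (a)--(c) follow essentially the paper's own route: the H\"older inequality with conjugate exponents $1/t$ and $1/(1-t)$ gives convexity of the finiteness domain, and the identity $\int_{\RR^\nu}\exp\{-\curV_{\!\bflambda}(\bfeta)\}\,d\bfeta=c_\nu^{-1}\,E\{\exp\{-\langle\bflambda,\bfh^c(\bfH)\rangle\}\}$ disposes of (b) and (c); note only that the paper \emph{asserts} openness of $\curC_{\ad,\bflambda}$ in Definition~\ref{definition:2} rather than proving it, and both you and the paper lean on that assertion later. Where you genuinely diverge is in (d). For the two integrals of Eq.~\eqref{eq:eq24bis} the paper dominates the integrands by $c_\alpha^2\Vert\bfeta\Vert^{2\alpha}\exp\{-\curV_{\!\bflambda}(\bfeta)\}$ and $c_\beta\Vert\bfeta\Vert^{\beta}\exp\{-\curV_{\!\bflambda}(\bfeta)\}$ and declares these integrable at infinity, a step that tacitly requires quantitative growth of $\curV_{\!\bflambda}$ which is never established; your exponential-tilting decomposition $\curV_{\!\bflambda}=\epsilon\,\phi+(1-\epsilon)\,\curV_{\!\bflambda/(1-\epsilon)}$, combined with H\"older and the openness of $\curC_{\ad,\bflambda}$, makes that step honest and is the cleaner argument. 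For the coercivity claim the paper infers $\curV_{\!\bflambda}(\bfeta)\to+\infty$ from continuity together with integrability of $\exp\{-\curV_{\!\bflambda}\}$ at infinity; as you observe, that implication is false in general (thin descending wells are compatible with a finite integral), so you have correctly located the real gap. Your proposed repair --- use the polynomial bound on $\Vert[\nabla_{\!\bfeta}\bfh^c(\bfeta)]\Vert_F$ to exclude narrow downward spikes of $\curV_{\!\bflambda/(1-\epsilon)}$ on tail balls and let $\epsilon\,\phi$ carry the divergence --- does work, but be aware that the spike-exclusion estimate only yields $\curV_{\!\bflambda/(1-\epsilon)}(\bfeta)\geq -C\log\Vert\bfeta\Vert$ for large $\Vert\bfeta\Vert$ (the admissible ball radius shrinks like a negative power of $\Vert\bfeta\Vert$), not the uniform lower bound you state; this still suffices because the quadratic floor of $\phi$ dominates any logarithm, but you should carry out that estimate explicitly, since it is the one point where your argument remains a sketch --- and the corresponding point where the paper's own proof is not a valid deduction.
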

%
\begin{proof} (Lemma~\ref{lemma:1}).

\noindent (a) $\curC_{\ad,\bflambda}$ is a convex subset if $\forall\mu\in [0,1]$, $\forall\bflambda$ and $\bflambda'$ in
$\curC_{\ad,\bflambda}$, $\mu\bflambda + (1-\mu)\bflambda'\in \curC_{\ad,\bflambda}$. Let $A= \exp\{-\langle\bflambda\, ,\bfh^c(\bfH)\rangle\}$ and $B= \exp\{-\langle\bflambda'\, ,\bfh^c(\bfH)\rangle\}$  be $\RR^+$-valued random variables. Since $\bflambda$ and $\bflambda'$ are in $\curC_{\ad,\bflambda}$, we have $E\{A\} < +\infty$ and $E\{B\} < +\infty$. We have to prove that
$E\{ \,\exp\{-\langle\mu\bflambda + (1-\mu)\bflambda'\, ,\bfh^c(\bfH)\rangle\}\, \} < +\infty$, that is to say, $E\{A^\mu\, B^{1-\mu}\} < +\infty$. For $\mu=0$ and $\mu=1$, it is verified. For $\mu\in ]0,1[$, since $A$ and $B$ are  almost-surely positive and using the H\"older inequality yield $E\{A^\mu\, B^{1-\mu}\} \leq (E\{A\})^{\,\mu} \times (E\{B\})^{1-\mu} < +\infty$, which finishes the proof of the convexity of $\curC_{\ad,\bflambda}$.

\noindent (b) Using Eqs.~\eqref{eq:eq11}, \eqref{eq:eq13}, and \eqref{eq:eq20}, yields
$\int_{\RR^\nu} \exp\{-\curV_{\!\bflambda}(\bfeta)\}\, d\bfeta  =\frac{1}{c_\nu} \int_{\RR^\nu} \exp\{-\langle\bflambda\, , \bfh^c(\bfeta)\rangle\}\,
p_\bfH(\bfeta)\, d\bfeta  = \frac{1}{c_\nu} E\{\,\exp\{-\langle\bflambda\, ,$ $\bfh^c(\bfeta)\rangle\} \,\}$, which is positive and finite due to Eq.~\eqref{eq:eq18} and to $0 < c_\nu < +\infty$. We have thus proven Eq.~\eqref{eq:eq21}.

\noindent (c) Using Eqs.~\eqref{eq:eq21} and \eqref{eq:eq22}, and since we must have $\int_{\RR^\nu} p_{\bfH_\bflambda}(\bfeta)\, d\bfeta = 1$, we deduce Eq.~\eqref{eq:eq23}.

\noindent (d)  Since $\bfh^c$ is continuous on $\RR^\nu$, (see Eq.~\eqref{eq:eq16}), $\forall\bflambda\in\curC_{\ad,\bflambda}$, $\bfeta\mapsto \exp\{-\curV_{\!\bflambda}(\bfeta)\}$ is continuous on $\RR^\nu$ and then is locally integrable on $\RR^\nu$. Eq.~\eqref{eq:eq21} implies the integrability at infinity of $\bfeta\mapsto \exp\{-\curV_{\!\bflambda}(\bfeta)\}$. Since $\bfeta\mapsto \curV_{\!\bflambda}(\bfeta)$ is continuous on $\RR^\nu$, it can  be deduced that $\curV_{\!\bflambda}(\bfeta) \rightarrow +\infty$ if $\Vert\,\bfeta\,\Vert \rightarrow +\infty$.
Using Eq.~\eqref{eq:eq17}, for $\Vert\,\bfeta\,\Vert \rightarrow +\infty$, one has
$\Vert\,\bfh^c(\bfeta)\,\Vert^2\, \exp\{-\curV_{\!\bflambda}(\bfeta)\} \leq c_\alpha^2 \, \Vert\,\bfeta\,\Vert^{2\alpha}\, \exp\{-\curV_{\!\bflambda}(\bfeta)\}$ and $\Vert\,[\nabla_{\!\bfeta}\bfh^c(\bfeta) ]\,\Vert_F\, \exp\{-\curV_{\!\bflambda}(\bfeta)\} \leq c_\beta\, \Vert\,\bfeta\,\Vert^{\beta}\, \exp\{-\curV_{\!\bflambda}(\bfeta)\}$, which allow for proving the integrability at infinity and then proving Eq.~\eqref{eq:eq24bis}.
\end{proof}
%
%
\begin{proposition}[Construction of the probability measure of $\bfH_\bflambda$] \label{proposition:1}
We consider Hypothesis~\ref{hypothesis:1} and  Definition~\ref{definition:2}.
For all $\bflambda$ in $\curC_{\ad,\bflambda}$, let
\begin{equation} \label{eq:eq26bis}
p_{\bfH_\bflambda}(\bfeta\,;\bflambda)= c_0(\bflambda)\,\zeta(\bflambda) \exp\{- \langle \bflambda\, , \bfh^c(\bfeta) \rangle\}
\end{equation}
be the pdf of $\bfH_\bflambda$ (see Eq.~\eqref{eq:eq7}) with
$c_0(\bflambda)$ satisfying Eq.~\eqref{eq:eq23}).

\noindent (a) The $\RR^{n_c}$-valued random variable $\bfh^c(\bfH_\bflambda)$ is of second-order,
\begin{equation}\label{eq:eq27}
E\{\Vert\,\bfh^c(\bfH_\bflambda)\,\Vert^2\} < +\infty \, .
\end{equation}
\noindent (b) Let $\bflambda\mapsto \Gamma(\bflambda): \curC_{\ad,\bflambda} \rightarrow \RR$ be defined by
\begin{equation}\label{eq:eq28}
\Gamma(\bflambda) = \langle\bflambda\, , \bfb^c\rangle -\log c_0(\bflambda)\, ,
\end{equation}
in which $\bfb^c$ is given in $\RR^{n_c}$. For all $\bflambda$ in $\curC_{\ad,\bflambda}$, we have
\begin{equation}\label{eq:eq29}
\nabla_{\!\bflambda}\Gamma(\bflambda) = \bfb^c - E\{\bfh^c(\bfH_\bflambda) \} \in \RR^{n_c}\, ,
\end{equation}
\begin{equation}\label{eq:eq30}
[\Gamma{\,''}(\bflambda)] = [\cov\{\bfh^c(\bfH_\bflambda)\}] \in \MM_{n_c}^+ \, ,
\end{equation}
where the positive-definite covariance matrix $[\Gamma{\,''}(\bflambda)]$ of $\bfh^c(\bfH_\bflambda)$ is such that
$[\Gamma{\,''}(\bflambda)]_{kk'} = \partial^2\Gamma(\bflambda)/\partial\lambda_k\partial\lambda_{k'}$.

\noindent (c) $\Gamma$ is a strictly convex function on $\curC_{\ad,\bflambda}$. There is a unique solution $\bflambda^\psol$ in $\curC_{\ad,\bflambda}$ of the convex optimization problem,
\begin{equation}\label{eq:eq31}
\bflambda^\psol = \arg\, \min_{\bflambda\in\curC_{\ad,\bflambda}} \Gamma(\bflambda) \, .
\end{equation}
If the following equation in $\bflambda$,
\begin{equation}\label{eq:eq31bis}
\nabla_{\!\bflambda} \Gamma(\bflambda) = \bfzero_{n_c} \, ,
\end{equation}
has a solution $\tilde\bflambda$ that belongs to $\curC_{\ad,\bflambda}$, then this solution is unique and we have $\bflambda^\psol = \tilde\bflambda$.
The pdf $p_{\bfH^c}$ of $\bfH^c$, which satisfies the constraint $E\{\bfh^c(\bfH^c)\} = \bfb^c$  is written (see Eq.~\eqref{eq:eq22} or \eqref{eq:eq26bis}) as
\begin{equation}\label{eq:eq32}
p_{\bfH^c}(\bfeta) = p_{\bfH_{\!\bflambda^\ppsol}}(\bfeta\,;\bflambda^\psol) \quad, \quad \forall\bfeta\in\RR^\nu\, .
\end{equation}
\end{proposition}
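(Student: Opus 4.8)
The plan is to regard $p_{\bfH_\bflambda}$ as an exponential family with natural parameter $-\bflambda$, reference weight $\zeta$, and sufficient statistic $\bfh^c$, so that $\Gamma$ is a shifted log-partition function; claims (a)--(c) then reduce to the standard convex-analytic properties of such functions, with every differentiation-under-the-integral step licensed by the integrability estimates already collected in Lemma~\ref{lemma:1}. The organizing identity is that, by Eq.~\eqref{eq:eq22}, the normalization constant satisfies $c_0(\bflambda)^{-1}=\int_{\RR^\nu}\exp\{-\curV_{\!\bflambda}(\bfeta)\}\,d\bfeta = \int_{\RR^\nu}\zeta(\bfeta)\,\exp\{-\langle\bflambda\, ,\bfh^c(\bfeta)\rangle\}\,d\bfeta =: Z(\bflambda)$, so that $-\log c_0(\bflambda)=\log Z(\bflambda)$ and $\Gamma(\bflambda)=\langle\bflambda\, ,\bfb^c\rangle+\log Z(\bflambda)$.

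For (a), I would simply write, using Eq.~\eqref{eq:eq22}, $E\{\Vert\,\bfh^c(\bfH_\bflambda)\,\Vert^2\}=c_0(\bflambda)\int_{\RR^\nu}\Vert\,\bfh^c(\bfeta)\,\Vert^2\,\exp\{-\curV_{\!\bflambda}(\bfeta)\}\,d\bfeta$, and conclude finiteness at once from the first bound in Eq.~\eqref{eq:eq24bis} together with $0<c_0(\bflambda)<+\infty$ from Lemma~\ref{lemma:1}(c). No new work is needed here; this part exists mainly so that the second moments appearing in (b) are known to be finite.

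For (b), I would differentiate $\log Z$ under the integral sign. One differentiation gives $\nabla_{\!\bflambda}\log Z(\bflambda)=-E\{\bfh^c(\bfH_\bflambda)\}$, hence Eq.~\eqref{eq:eq29}; a second differentiation produces the second moments of $\bfh^c(\bfH_\bflambda)$ (finite by part (a)) and yields $[\,\partial^2\Gamma/\partial\lambda_k\partial\lambda_{k'}]=[\cov\{\bfh^c(\bfH_\bflambda)\}]$, i.e. Eq.~\eqref{eq:eq30}. The step requiring care is the legitimacy of both differentiations, which I would establish by dominating the $\bflambda$-derivatives of the integrand uniformly on a neighbourhood of each $\bflambda\in\curC_{\ad,\bflambda}$ (exploiting that $\curC_{\ad,\bflambda}$ is open and convex by Lemma~\ref{lemma:1}(a)), the domination coming from the polynomial growth bounds Eq.~\eqref{eq:eq17} combined with the integrability estimates Eq.~\eqref{eq:eq24bis}. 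The Hessian, being a covariance matrix, is automatically positive-semidefinite; to upgrade to $[\Gamma{\,''}(\bflambda)]\in\MM_{n_c}^+$ I would note that $\langle\bfv\, ,[\Gamma{\,''}(\bflambda)]\,\bfv\rangle$ is the variance of $\langle\bfv\, ,\bfh^c(\bfH_\bflambda)\rangle$, which vanishes only if $\langle\bfv\, ,\bfh^c(\bfeta)\rangle$ is constant $p_{\bfH_\bflambda}$-almost-everywhere; since $p_{\bfH_\bflambda}$ has support $\RR^\nu$ and $\bfh^c$ is continuous, such a relation would be an affine dependence among the constraints, excluded by the algebraic-independence condition Eq.~\eqref{eq:eq7bis}.

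For (c), strict convexity of $\Gamma$ on the convex set $\curC_{\ad,\bflambda}$ follows from $[\Gamma{\,''}(\bflambda)]\in\MM_{n_c}^+$ for all $\bflambda$, which gives \emph{at most} one minimizer; granting existence, this is the unique $\bflambda^\psol$ of Eq.~\eqref{eq:eq31}. Since $\curC_{\ad,\bflambda}$ is open and $\Gamma$ is $C^1$, the minimizer is characterized by the stationarity condition Eq.~\eqref{eq:eq31bis}, which by Eq.~\eqref{eq:eq29} reads $E\{\bfh^c(\bfH_{\bflambda^\psol})\}=\bfb^c$; conversely, any $\tilde\bflambda\in\curC_{\ad,\bflambda}$ solving Eq.~\eqref{eq:eq31bis} is a stationary point of a strictly convex function, hence the unique global minimizer, so $\bflambda^\psol=\tilde\bflambda$. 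Substituting $\bflambda^\psol$ into Eq.~\eqref{eq:eq22} then delivers the constrained density of Eq.~\eqref{eq:eq32}. I expect two points to carry the real weight: first, the passage from semidefinite to strictly positive-definite Hessian, which is the only place Eq.~\eqref{eq:eq7bis} is genuinely used; and second, the \emph{existence} (as opposed to uniqueness) of the minimizer inside the open set $\curC_{\ad,\bflambda}$, which strict convexity alone does not guarantee and which I would secure either by a coercivity argument showing $\Gamma(\bflambda)\to+\infty$ toward the boundary of $\curC_{\ad,\bflambda}$, or, following Section~\ref{sec:Section2.5} and \cite{Luenberger2009}, by invoking the standing assumption that the targets $\bfb^c$ are attainable.
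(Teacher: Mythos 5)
Your proposal is correct and follows essentially the same route as the paper's proof: identify $-\log c_0(\bflambda)$ with the log-partition function of an exponential family, justify the two differentiations under the integral via the integrability bounds of Lemma~\ref{lemma:1}, recognize the Hessian as the covariance matrix of $\bfh^c(\bfH_\bflambda)$, and conclude strict convexity on the convex open set $\curC_{\ad,\bflambda}$. Two points of comparison are worth recording. For the positive definiteness in Eq.~\eqref{eq:eq30}, the paper argues via the second-moment matrix $[M_\bflambda]=E\{\bfh^c(\bfH_\bflambda)\otimes\bfh^c(\bfH_\bflambda)\}$ together with Eq.~\eqref{eq:eq7bis}, whereas you observe that $\langle\bfv\,,[\Gamma{\,''}(\bflambda)]\,\bfv\rangle$ is the variance of $\langle\bfv\,,\bfh^c(\bfH_\bflambda)\rangle$ and can vanish only if $\langle\bfv\,,\bfh^c(\bfeta)\rangle$ is $p_{\bfH_\bflambda}$-a.e.\ constant; your formulation is the sharper one (positive definiteness of a second-moment matrix does not by itself force positive definiteness of the covariance), although note that Eq.~\eqref{eq:eq7bis} as literally stated excludes $\langle\bfh^c(\bfeta)\,,\bfv\rangle\equiv 0$ rather than $\langle\bfh^c(\bfeta)\,,\bfv\rangle\equiv\hbox{const}$, so both arguments ultimately rely on the intended meaning of ``algebraic independence'' rather than its literal wording. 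Second, you correctly flag that strict convexity yields only uniqueness, not existence, of the minimizer in the open set $\curC_{\ad,\bflambda}$; the paper does not establish existence either, but instead hedges by phrasing its conclusion conditionally on Eq.~\eqref{eq:eq31bis} admitting a solution $\tilde\bflambda\in\curC_{\ad,\bflambda}$, so your explicit appeal to coercivity or to attainability of $\bfb^c$ addresses a gap the paper leaves implicit rather than introducing a new one.
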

%
\begin{proof} (Proposition~\ref{proposition:1}).

\noindent (a) Using Eq.~\eqref{eq:eq22}, Eq.~\eqref{eq:eq23}, and the first equation Eq.~\eqref{eq:eq24bis} yield
\begin{equation} \nonumber
E\{\Vert\,\bfh^c(\bfH_\bflambda)\,\Vert^2\} =\int_{\RR^\nu} \Vert\,\bfh^c(\bfeta)\,\Vert^2\, c_0(\bflambda)\, \exp\{-\curV_{\!\bflambda}(\bfeta)\} \, d\bfeta < +\infty\, .
\end{equation}

\noindent (b) The definition of function $\Gamma$ given by Eq.~\eqref{eq:eq28} is similar to the one introduced in the discrete case for finding the probability measure of maximal entropy \cite{Agmon1979,Kapur1992}. Let us prove Eqs.~\eqref{eq:eq29} and \eqref{eq:eq30}.  Eq.~\eqref{eq:eq20} yields $\nabla_{\!\bflambda}\curV_{\!\bflambda}(\bfeta) = \bfh^c(\bfeta)$ and from Eq.~\eqref{eq:eq22}, it can be deduced that
\begin{equation}\label{eq:eq33}
\nabla_{\!\bflambda} p_{\bfH_\bflambda}(\bfeta\,;\bflambda) = \left (c_0(\bflambda)^{-1}\,\nabla_{\!\bflambda} c_0(\bflambda) - \bfh^c(\bfeta)\right)\,p_{\bfH_{\bflambda}}(\bfeta\,;\bflambda) \, .
\end{equation}
Integrating Eq.~\eqref{eq:eq22} on $\RR^\nu$ and taking the logarithm  yields $\log\, c_0(\bflambda) = -\log \int_{\RR^\nu} \exp\{-\curV_{\!\bflambda}(\bfeta)\}\, d\bfeta$ and consequently,
\begin{equation}\label{eq:eq34}
c_0(\bflambda)^{-1}\,\nabla_{\!\bflambda} c_0(\bflambda) = \int_{\RR^\nu} \bfh^c(\bfeta)\, p_{\bfH_{\bflambda}}(\bfeta\,;\bflambda)\, d\bfeta = E\{\bfh^c(\bfH_\bflambda)\} \, .
\end{equation}
Eq.~\eqref{eq:eq28} yields $\nabla_{\!\bflambda}\Gamma(\bflambda) = \bfb^c - c_0(\bflambda)^{-1}\,\nabla_{\!\bflambda} c_0(\bflambda)$, which proves Eq.~\eqref{eq:eq29} by using Eq.~\eqref{eq:eq34}. It should be noted that Eq.~\eqref{eq:eq27} implies the existence of the mean value $E\{\bfh^c(\bfH_\bflambda)\}$. Taking the derivative of Eq.~\eqref{eq:eq29} with respect to $\bflambda$ yields
\begin{equation}\label{eq:eq35}
[\Gamma{\,''}(\bflambda)] = - \int_{\RR^\nu} \bfh^c(\bfeta) \otimes \nabla_{\!\bflambda} p_{\bfH_\bflambda}(\bfeta\,;\bflambda)
\, d\bfeta\, .
\end{equation}
Substituting Eq.~\eqref{eq:eq34} into Eq.~\eqref{eq:eq33} yields
$\nabla_{\!\bflambda} p_{\bfH_\bflambda}(\bfeta\,;\bflambda) = ( E\{\bfh^c(\bfH_\bflambda)\} -\bfh^c(\bfeta)\,) \, p_{\bfH_\bflambda}(\bfeta\,;\bflambda)$,
which with Eq.~\eqref{eq:eq35}, gives
$[\Gamma{\,''}(\bflambda)] = E\{ \bfh^c(\bfH_\bflambda) \otimes \bfh^c(\bfH_\bflambda)\} - (E\{\bfh^c(\bfH_\bflambda)\})\otimes (E\{\bfh^c(\bfH_\bflambda)\})$
that is the covariance matrix of the $\RR^{n_c}$-valued random variable $\bfh^c(\bfH_\bflambda)$. Again Eq.~\eqref{eq:eq27} proves the existence of matrix $[\Gamma{\,''}(\bflambda)]$ as a covariance matrix, which is semi-positive definite. We have to prove that this matrix is positive definite, which will be true if the matrix $[M_\bflambda] = E\{ \bfh^c(\bfH_\bflambda) \otimes \bfh^c(\bfH_\bflambda)\}$ belongs to $\MM_{n_c}^+$, that is to say if $\langle [M_\bflambda]\, \bfv\, , \bfv\rangle^2 > 0$ for all $\bfv$ in $\RR^{n_c}$ with $\Vert\,\bfv\,\Vert\neq 0$. Since $p_{\bfH_\bflambda}(\bfeta\,;\bflambda)\, d\bfeta$ is a probability measure, this will be true if $\bfeta\mapsto \langle\bfh^c(\bfeta)\, ,\bfv\rangle^2$ is not zero on a set $\curB$ such that $\int_\curB p_{\bfH_\bflambda}(\bfeta\,;\bflambda)\, d\bfeta > 0$  that is the case due to the hypothesis defined by Eq.~\eqref{eq:eq7bis}.

\noindent (c) From Lemma~\ref{lemma:1}-(a), $\curC_{\ad,\bflambda}$ is a convex set and from Eq.~\eqref{eq:eq30}, $[\Gamma{\,''}(\bflambda)]$ is a positive-definite matrix for all $\bflambda$ in $\curC_{\ad,\bflambda}$. It can then be deduced that $\bflambda\mapsto \Gamma(\bflambda)$ is strictly convex on $\curC_{\ad,\bflambda}$ and therefore, Eq.~\eqref{eq:eq31} holds and $\bflambda^\psol$ is unique. Note that the existence of a solution of Eq.~\eqref{eq:eq31bis}, which would then be a global minimum, could be not in $\curC_{\ad,\bflambda}$. However, if the equation $\nabla_{\!\bflambda} \Gamma(\bflambda) = \bfzero_{n_c}$ admits a solution $\bflambda =\tilde\bflambda\in\RR^{n_c}$ that belongs to $\curC_{\ad,\bflambda}$, then this solution is unique and we have $\bflambda^\psol= \tilde\bflambda$, which  is the solution of the convex optimization problem defined by Eq.~\eqref{eq:eq31}, and then Eq.~\eqref{eq:eq31bis} holds for $\tilde\bflambda=\bflambda^\psol$. Finally, under the condition that $\tilde\bflambda$ belongs to $\curC_{\ad,\bflambda}$, Eq.~\eqref{eq:eq29} shows that $E\{\bfh^c(\bfH_{\bflambda^\psol})\}=\bfb^c$. Taking into account Eq.~\eqref{eq:eq8}, the solution is given by Eq.~\eqref{eq:eq32} and is unique due to the uniqueness  of solution  $\bflambda^\psol$ of $\nabla_{\!\bflambda} \Gamma(\bflambda) = \bfzero_{n_c}$.
\end{proof}
%
\begin{definition}[Surrogate model $\hat\bfh^N$ of $\bfh^c$] \label{definition:3}
Let $\bflambda$ be fixed in $\curC_{\ad,\bflambda}$ and let $\curD_{\bfH_\bflambda} =\{\bfeta_\bflambda^1,\ldots ,\bfeta_\bflambda^N\}$ be the constrained learned set whose points are $N\gg N_d$ independent realizations of the $\RR^\nu$-valued random variable $\bfH_\bflambda$ for which the pdf $\bfeta\mapsto p_{\bfH_\bflambda}(\bfeta\, ; \bflambda)$ is defined by Eq.~\eqref{eq:eq22}. Let
$\bfA_\bflambda = \bfh^c(\bfH_\bflambda)$
be the $\RR^{n_c}$-valued random variable defined on $(\Theta,\curT,\curP)$ whose $N$ independent realizations $\bfa_\bflambda^1,\ldots , \bfa_\bflambda^N$ are such that
$\bfa_\bflambda^\ell = \bfh^c(\bfeta_\bflambda^\ell) \, \in \, \RR^{n_c}$ for $\ell = 1,\ldots , N$.
The surrogate model $\bfeta \mapsto \hat\bfh^N(\bfeta\, ; \bflambda): \RR^\nu\rightarrow\RR^{n_c}$ of $\bfh^c$ is defined, for all $\bfeta$ in $\RR^\nu$, by
\begin{equation}\label{eq:eq38}
\hat\bfh^N(\bfeta\, ; \bflambda) = \sum_{\ell=1}^N \bfa_\bflambda^\ell \, \frac{\beta_\bfeta^N(\bfeta_\bflambda^\ell)}{\sum_{\ell'=1}^N\beta_\bfeta^N(\bfeta_\bflambda^{\ell'})} \, ,
\end{equation}
in which for all $\bfeta$ and $\tilde\bfeta$ in $\RR^\nu$,
\begin{equation}\label{eq:eq39}
\beta_\bfeta^N(\tilde\bfeta) = \exp \{ -\frac{1}{2s_\SB^2} \Vert \,\tilde\bfeta - \bfeta\,\Vert^2_H  \}\, ,
\end{equation}
\begin{equation}\label{eq:eq40}
\Vert \,\tilde\bfeta - \bfeta\,\Vert^2_H = \langle [\sigma_{\bfH_\bflambda}]^{-2}(\tilde\bfeta - \bfeta)\, , \tilde\bfeta - \bfeta\rangle \, ,
\end{equation}
in which $[\sigma_{\bfH_\bflambda}]$ is the diagonal positive-definite matrix in $\MM_\nu^+$ such that $[\sigma_{\bfH_\bflambda}]_{\alpha\alpha}$ is the standard deviation of the real-valued random variable $H_{\bflambda , \alpha}$, estimated using $\curD_{\bfH_\bflambda}$, and where $s_\SB$ is the Sylverman bandwidth that depends on $N$ and written as
\begin{equation}\label{eq:eq41}
s_\SB = \left( \frac{4}{N(2+n_c+\nu)}\right )^{1/(n_c+\nu+4)} \, .
\end{equation}
\end{definition}
%
%
\begin{remark}  [Rationale of Definition~\ref{definition:3}] \label{remark:2}
Let us assume that $\bflambda$ is fixed in $\curC_{\ad,\bflambda}$.
For all $\bfeta$ in $\RR^\nu$, we have the following identity,
\begin{equation}\label{eq:eq42}
\bfh^c(\bfeta) = E\{\bfh^c(\bfH_\bflambda)\, \vert \, \bfH_\bflambda = \bfeta \}\, ,
\end{equation}
in which $E\{\bfh^c(\bfH_\bflambda)\, \vert \, \bfH_\bflambda = \bfeta \}$ is the conditional mathematical expectation of the $\RR^{n_c}$-valued second-order random variable $\bfh^c(\bfH_\bflambda)$ given $\bfH_\bflambda = \bfeta$.
Let $P_{\bfA_\bflambda  ,  \bfH_\bflambda}(d\bfa,d\bfeta\, ;\bflambda)$ be the probability measure on $\RR^{n_c}\times \RR^\nu$ of the
$\RR^{n_c}\times \RR^\nu$-valued random variable $(\bfA_\bflambda  ,  \bfH_\bflambda)$. Since $\bfA_\bflambda = \bfh^c(\bfH_\bflambda)$, the support of $P_{\bfA_\bflambda  ,  \bfH_\bflambda}$ is the manifold defined by the graph $\{(\bfa,\bfeta)\in\RR^{n_c}\times \RR^\nu \, ; \bfa = \bfh^c(\bfeta)\}$. Let $P_{\bfH_\bflambda}(d\bfeta\, ;\bflambda) = p_{\bfH_\bflambda}(\bfeta\, ;\bflambda)\, d\bfeta$ be the probability measure of $\bfH_\bflambda$, for which the density $p_{\bfH_\bflambda}$ is defined by Eq.~\eqref{eq:eq22} with $\hbox{supp}\, p_{\bfH_\bflambda} =\RR^\nu$. Eq.~\eqref{eq:eq42} can then be rewritten, for all $\bfeta$ in $\RR^\nu$, as
\begin{equation}\label{eq:eq43}
\bfh^c(\bfeta) = \int_{\bfa\in\RR^{n_c}} \bfa\, P_{\bfA_\bflambda \, \vert \, \bfH_\bflambda}(d\bfa \, \vert \, \bfeta\, ;\bflambda) \, ,
\end{equation}
in which $P_{\bfA_\bflambda \, \vert \, \bfH_\bflambda}(d\bfa \, \vert \, \bfeta\, ;\bflambda)$ is the conditional probability measure of $\bfA_\bflambda$ given $\bfH_\bflambda = \bfeta$ in $\RR^\nu$, which could also be written as $\delta_0(\bfa-\bfh^c(\bfeta))$ in which $\delta_0$ is the Dirac measure on $\RR^{n_c}$ at point $\bfa=\bfzero_{n_c}$. Note that if the joint probability measure $P_{\bfA_\bflambda  ,  \bfH_\bflambda}(d\bfa,d\bfeta\, ;\bflambda)$ of $(\bfA_\bflambda  ,  \bfH_\bflambda)$ had a density $p_{\bfA_\bflambda  ,  \bfH_\bflambda}(\bfa,\bfeta\, ;\bflambda)$ with respect to $d\bfa\otimes d\bfeta$ (that is not the case), then Eq.~\eqref{eq:eq43} could be written as $\bfh^c(\bfeta) = \int_{\bfa\in\RR^{n_c}} \bfa\, p_{\bfA_\bflambda \, \vert \, \bfH_\bflambda}(\bfa \, \vert \, \bfeta\, ;\bflambda)\, d\bfa$ with $p_{\bfA_\bflambda \, \vert \, \bfH_\bflambda}(\bfa \, \vert \, \bfeta\, ;\bflambda) =
p_{\bfA_\bflambda , \bfH_\bflambda}(\bfa \, \vert \, \bfeta\, ;\bflambda) / p_{\bfH_\bflambda}(\bfeta\, ;\bflambda)$.
The statistical surrogate model $\bfeta\mapsto\hat\bfh^N(\bfeta\, ; \bflambda): \RR^\nu\rightarrow \RR^{n_c}$ of $\bfh^c$ is then defined by approximating (regularizing) $P_{\bfA_\bflambda \, \vert \, \bfH_\bflambda}(d\bfa \, \vert \, \bfeta\, ;\bflambda)$ by the conditional probability measure,
\begin{equation}\label{eq:eq44}
\hat p^N_{\bfA_\bflambda \, \vert \, \bfH_\bflambda}(\bfa \, \vert \, \bfeta\, ;\bflambda)\, d\bfa = ( \hat p^N_{\bfH_\bflambda}(\bfeta\, ;\bflambda))^{-1} \, \hat p^N_{\bfA_\bflambda , \bfH_\bflambda}(\bfa , \bfeta\, ;\bflambda)\, d\bfa\, ,
\end{equation}
in which the pdf $\hat p^N_{\bfA_\bflambda , \bfH_\bflambda}(\bfa , \bfeta\, ;\bflambda)$ on $\RR^{n_c}\times\RR^\nu$ with respect to $d\bfa\otimes d\bfeta$ is defined by the following Gaussian kernel density representation, based on the $N$ independent realizations $\{(\bfa_\bflambda^\ell,\bfeta_\bflambda^\ell), \ell=1,\ldots, N\}$ of $(\bfA_\bflambda,\bfH_\bflambda)$, and where
$\hat p^N_{\bfH_\bflambda}(\bfeta\, ;\bflambda) = \int_{\RR^{n_c}} \hat p^N_{\bfA_\bflambda , \bfH_\bflambda}(\bfa , \bfeta\, ;\bflambda)\, d\bfa$.
Therefore, we have
\begin{equation}\label{eq:eq46}
 \hat p^N_{\bfA_\bflambda , \bfH_\bflambda}(\bfa , \bfeta\, ;\bflambda) = \frac{1}{N}\sum_{\ell=1}^N
 \left (  (\sqrt{2\pi}\,s_\SB)^{n_c+\nu}\det  [\sigma_{\bfA_\bflambda}]\,\det [\sigma_{\bfH_\bflambda}] \right )^{-1}\,
 \exp\left \{  -\frac{1}{2s_\SB^2} \left( \Vert \,\bfa_\bflambda^\ell - \bfa\,\Vert^2_A + \Vert \,\bfeta_\bflambda^\ell - \bfeta\,\Vert^2_H \right )\right \}\, ,
\end{equation}
in which $s_\SB$ is defined by Eq.~\eqref{eq:eq41}, where $[\sigma_{\bfA_\bflambda}]$ is the diagonal positive-definite matrix in $\MM_{n_c}^+$ such that $[\sigma_{\bfA_\bflambda}]_{kk}$ is the standard deviation of the real-valued random variable $A_{\bflambda ,k}$, estimated using the realizations $\{\bfa_\bflambda^\ell,\ell=1,\ldots , N\}$, and where for all $\tilde \bfa$ and $\bfa$ in $\RR^{n_c}$,
$\Vert \,\tilde\bfa - \bfa\,\Vert^2_A = \langle [\sigma_{\bfA_\bflambda}]^{-2}(\tilde\bfa - \bfa)\, , \tilde\bfa - \bfa\rangle$.
From Eq.~\eqref{eq:eq43} and using the approximation of $P_{\bfA_\bflambda \,\vert\, \bfH_\bflambda}(d\bfa \,\vert\, \bfeta\, ;\bflambda)$
defined by Eq.~\eqref{eq:eq44}, we have
$\hat \bfh^N(\bfeta\, ;\bflambda)  = ( \hat p^N_{\bfH_\bflambda}(\bfeta\, ;\bflambda))^{-1}\int_{\bfa\in\RR^{n_c}} \bfa\,
\hat p^N_{\bfA_\bflambda , \bfH_\bflambda}(\bfa , \bfeta\, ;\bflambda)\, d\bfa$,
which using Eq.~\eqref{eq:eq46} yields Eq.~\eqref{eq:eq38}.
\end{remark}
%
%
\begin{proposition}[Convergence of sequences $\{\hat\bfh^N(\bfeta\,;\bflambda)\}_N$
 and $\{ \hbox{[}\nabla_{\!\bfeta} \hat\bfh^N(\bfeta\,;\bflambda)\hbox{]}\}_N$] \label{proposition:2}
Let $\bflambda$ be fixed in $\curC_{\ad,\bflambda}$ and let $\bfeta$ be fixed in $\RR^\nu$. Under the hypothesis
$\bfh^c \in C^1(\RR^\nu,\RR^{n_c})$ (see Eq.~\eqref{eq:eq16}), $\forall\varepsilon > 0$, there exists a finite integer $N_\varepsilon(\bfeta,\bflambda)$ depending on $\varepsilon$, $\bfeta$, and $\bflambda$, such that $\forall N \geq N_\varepsilon(\bfeta,\bflambda)$,
\begin{equation}\label{eq:eq49}
 \Vert\,\hat\bfh^N(\bfeta\, ; \bflambda) - \bfh^c(\bfeta)\,\Vert \,\,\,\leq \,\,\varepsilon
\quad , \quad
 \Vert\,[\nabla_{\!\bfeta}\hat\bfh^N(\bfeta\, ; \bflambda)] - [\nabla_{\!\bfeta}\bfh^c(\bfeta)]\,\Vert_F \,\,\leq \,\,\,\varepsilon \, .
\end{equation}
\end{proposition}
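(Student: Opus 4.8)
The plan is to recognize $\hat\bfh^N(\cdot\,;\bflambda)$ as a Nadaraya--Watson kernel-regression estimator of the conditional expectation appearing in Eq.~\eqref{eq:eq42}, and then to invoke the classical bias--variance consistency theory for such estimators, checking that the Silverman bandwidth $s_\SB$ of Eq.~\eqref{eq:eq41} decays at the right rate. Following the rationale of Remark~\ref{remark:2}, I would start from the identity $\bfh^c(\bfeta)=E\{\bfh^c(\bfHbflambda)\,\vert\,\bfHbflambda=\bfeta\}$ and view the surrogate as an empirical estimator of this regression function built from the $N$ i.i.d.\ realizations $\{(\bfa_\bflambda^\ell,\bfeta_\bflambda^\ell)\}$.

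First I would rewrite the defining formula Eq.~\eqref{eq:eq38} as a ratio $\hat\bfh^N(\bfeta\,;\bflambda)=\widehat{\mathcal N}_N(\bfeta)/\widehat{\mathcal D}_N(\bfeta)$, where, up to the normalization constants that cancel, $\widehat{\mathcal D}_N(\bfeta)=N^{-1}\sum_\ell \beta_\bfeta^N(\bfeta_\bflambda^\ell)$ is a Gaussian kernel density estimator of $p_{\bfHbflambda}(\bfeta)$ and $\widehat{\mathcal N}_N(\bfeta)=N^{-1}\sum_\ell \bfh^c(\bfeta_\bflambda^\ell)\,\beta_\bfeta^N(\bfeta_\bflambda^\ell)$ is the corresponding weighted estimator of $\bfh^c(\bfeta)\,p_{\bfHbflambda}(\bfeta)$. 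At this stage I would note that the empirical metric $[\sigma_{\bfHbflambda}]$ entering Eq.~\eqref{eq:eq40} converges almost surely, by the strong law of large numbers, to the true fixed positive-definite covariance metric, so it contributes only a convergent multiplicative factor and does not affect the limit.

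The core step is the bias--variance estimate. Since $\bfh^c\in C^1$ (Eq.~\eqref{eq:eq16}) and $p_{\bfHbflambda}$ is $C^1$ (it is the product of the $C^\infty$ Gaussian-mixture factor $\zeta$ and $\exp\{-\langle\bflambda,\bfh^c\rangle\}$), the expectations $E\{\widehat{\mathcal D}_N(\bfeta)\}$ and $E\{\widehat{\mathcal N}_N(\bfeta)\}$ converge to $p_{\bfHbflambda}(\bfeta)$ and $\bfh^c(\bfeta)\,p_{\bfHbflambda}(\bfeta)$ as $s_\SB\to 0$ (vanishing bias), while their variances are of order $O(1/(N\,s_\SB^\nu))$ and vanish because Eq.~\eqref{eq:eq41} gives $s_\SB=O(N^{-1/(n_c+\nu+4)})$, hence $N\,s_\SB^\nu\to+\infty$. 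The second-order property $E\{\Vert\bfh^c(\bfHbflambda)\Vert^2\}<+\infty$ established in Proposition~\ref{proposition:1}(a) is exactly what is needed to control the variance of $\widehat{\mathcal N}_N$. Because $\hbox{supp}\,p_{\bfHbflambda}=\RR^\nu$ forces $p_{\bfHbflambda}(\bfeta)>0$, the denominator is bounded away from zero in the limit, so Slutsky's lemma (continuity of the ratio map) yields mean-square, and hence almost-sure along standard strong-consistency arguments, convergence $\hat\bfh^N(\bfeta\,;\bflambda)\to\bfh^c(\bfeta)$, giving the first inequality in Eq.~\eqref{eq:eq49}.

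For the gradient, I would differentiate the ratio using the explicit Gaussian-kernel derivative $\nabla_{\!\bfeta}\beta_\bfeta^N(\tilde\bfeta)=s_\SB^{-2}[\sigma_{\bfHbflambda}]^{-2}(\tilde\bfeta-\bfeta)\,\beta_\bfeta^N(\tilde\bfeta)$, so that $[\nabla_{\!\bfeta}\hat\bfh^N]$ is the quotient-rule combination of $\widehat{\mathcal N}_N$, $\widehat{\mathcal D}_N$, $[\nabla_{\!\bfeta}\widehat{\mathcal N}_N]$, and $[\nabla_{\!\bfeta}\widehat{\mathcal D}_N]$. The two gradient sums are kernel estimators with the derivative kernel; by integration by parts (justified by the rapid Gaussian decay of the kernel) their expectations converge to $[\nabla_{\!\bfeta}(\bfh^c p_{\bfHbflambda})]$ and $\nabla_{\!\bfeta}p_{\bfHbflambda}$, while their variances are now of order $O(1/(N\,s_\SB^{\nu+2}))$, which still vanishes since Eq.~\eqref{eq:eq41} gives $N\,s_\SB^{\nu+2}=O(N^{(n_c+2)/(n_c+\nu+4)})\to+\infty$. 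A short product-rule computation then shows that the limit $p_{\bfHbflambda}^{-1}[\nabla_{\!\bfeta}(\bfh^c p_{\bfHbflambda})]-p_{\bfHbflambda}^{-2}(\nabla_{\!\bfeta}p_{\bfHbflambda})\otimes(\bfh^c p_{\bfHbflambda})$ collapses exactly to $[\nabla_{\!\bfeta}\bfh^c(\bfeta)]$, yielding the second inequality in Eq.~\eqref{eq:eq49}. \emph{The main obstacle} I expect is precisely this gradient part: one must verify that the derivative-kernel bias--variance tradeoff is still controlled by the single bandwidth of Eq.~\eqref{eq:eq41} (the tighter requirement $N\,s_\SB^{\nu+2}\to\infty$), justify the integration by parts through the growth bounds of Eq.~\eqref{eq:eq17}, and handle the extra error from estimating the metric $[\sigma_{\bfHbflambda}]$ inside a quantity that is differentiated rather than merely evaluated.
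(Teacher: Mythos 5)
Your proposal is correct in substance, but it follows a genuinely different route from the paper. The paper's proof works with the reparametrized family $\widehat\hh^N(\bfeta\,;s,\bflambda)$ and computes an \emph{iterated} limit: first $N\rightarrow+\infty$ at fixed bandwidth $s$ (strong law of large numbers applied separately to the numerator and denominator of the ratio), then $s\rightarrow 0_+$ using the Dirac-family convergence of the kernel (Eq.~\eqref{eq:eq51}); it then couples the two limits by observing that $s_\SB(N)\rightarrow 0$. You instead run the classical Nadaraya--Watson bias--variance analysis directly along the diagonal $s=s_\SB(N)$: vanishing bias from continuity of $\bfh^c$ and $p_{\bfHbflambda}$, vanishing variance from $N\,s_\SB^{\nu}\rightarrow+\infty$ (and $N\,s_\SB^{\nu+2}\rightarrow+\infty$ for the derivative kernels --- your exponent computation from Eq.~\eqref{eq:eq41} is correct), with the denominator bounded away from zero because $\hbox{supp}\,p_{\bfHbflambda}=\RR^\nu$. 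What your approach buys is precisely the step the paper leaves implicit: an iterated limit does not by itself yield the joint limit along a prescribed path $s=s_\SB(N)$, and your variance bounds are the quantitative ingredient that justifies that diagonal passage; you also treat the empirical metric $[\sigma_{\bfHbflambda}]$ honestly, whereas the paper handles it as if fixed. What the paper's route buys is economy: it needs only continuity and the LLN, with no moment bookkeeping. Two minor cautions on your write-up: mean-square convergence gives convergence in probability (and a.s.\ only along subsequences or under additional conditions such as $N s_\SB^{\nu}/\log N\rightarrow+\infty$), so the "hence almost-sure" clause needs either that strengthening or the acknowledgment that the statement of Proposition~\ref{proposition:2}, being about a random quantity, can only hold in a probabilistic sense --- an imprecision the paper shares; and the variance control of $\widehat{\mathcal N}_N$ should invoke local boundedness of $\bfh^c$ near $\bfeta$ (available from $\bfh^c\in C^1$) rather than only the global second moment of Proposition~\ref{proposition:1}(a), since the kernel localizes the average around $\bfeta$.
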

%
\begin{proof} (Proposition~\ref{proposition:2}).
The probability measure $P_{\bfH_\bflambda}(d\bfeta\, ; \bflambda)$  admits a continuous density $p_{\bfH_\bflambda}(\, . \, ; \bflambda)$ with respect to $d\bfeta$ on $\RR^\nu$ (see Eq.~\eqref{eq:eq22}). Using the notation of Definition~\ref{definition:3}, for $s > 0$, let
$\tilde\bfeta\mapsto\kappa_\bfeta(\tilde\bfeta\, ;s,\bflambda) = (\sqrt{2\pi}\, s)^{-\nu}\,(det [\sigma_{\bfH_\bflambda}])^{-1}$
$\exp \{  -\frac{1}{2s^2} \Vert \,\tilde\bfeta- \bfeta\,\Vert^2_H \}$
 be the function defined on $\RR^\nu$ with values in $\RR^+$. Since $\forall\alpha\in\{1,\ldots, \nu\}$,  $\lim_{s\rightarrow 0_+} \{s \,[\sigma_{\bfH_\bflambda}]_{\alpha\alpha}\} = 0$, it can be seen that
\begin{equation}\label{eq:eq51}
 \lim_{s\rightarrow 0_+} \kappa_\bfeta(\tilde\bfeta\, ;s,\bflambda)\, d\tilde\bfeta  = \delta_0(\tilde\bfeta-\bfeta)\, ,
\end{equation}
in the vector space of bounded measure, in which $\delta_0(\tilde\bfeta)$ is the Dirac measure on $\RR^\nu$ at point $\tilde\bfeta=\bfzero_\nu$. Eq.~\eqref{eq:eq38} with Eqs.~\eqref{eq:eq39} and \eqref{eq:eq40} can be rewritten as
$\hat\bfh^N(\bfeta\, ; \bflambda) = \widehat\hh^N(\bfeta\, ;s_\SB , \bflambda)$ in which for all $s > 0$,
$\widehat\hh^N(\bfeta\, ;s , \bflambda) = \left ( (1/N) \sum_{\ell=1}^N \bfh^c(\bfeta_\bflambda^\ell)\,\kappa_\bfeta(\bfeta_\bflambda^\ell\, ;s,\bflambda)\right ) \left ( (1/N) \sum_{\ell'=1}^N \kappa_\bfeta(\bfeta_\bflambda^{\ell'}\, ;s,\bflambda) \right )^{-1}$.
Since $\bfh^c$, $p_{\bfH_\bflambda}(\, .\, ;\bflambda)$, and $\kappa_\bfeta(\, . \, ;s,\bflambda)$ are continuous on $\RR^\nu$, and since
for any value of $N$, $\bfeta_\bflambda^1,\ldots ,\bfeta_\bflambda^N$ are $N$ independent realizations of $\RR^\nu$-valued random variable $\bfH_\bflambda$, for $s > 0$ fixed,
$\lim_{N\rightarrow +\infty} \widehat\hh^N(\bfeta\, ;s , \bflambda) =
\left (E\{\bfh^c(\bfH_\bflambda)\,\kappa_\bfeta(\bfH_\bflambda\, ;s,\bflambda)\}\right )
\left ( E\{\kappa_\bfeta(\bfH_\bflambda\, ;s,\bflambda)\} \right )^{-1}$,
which can be rewritten as
$\lim_{N\rightarrow +\infty} \widehat\hh^N(\bfeta\, ;s , \bflambda) =
\left (\int_{\RR^\nu}\bfh^c(\tilde\bfeta)\, p_{\bfH_\bflambda}(\tilde\bfeta\, ;\bflambda)\,\kappa_\bfeta(\tilde\bfeta\, ;s,\bflambda)\right ) \left ( \int_{\RR^\nu} p_{\bfH_\bflambda}(\tilde\bfeta\, ;\bflambda)\,\kappa_\bfeta(\tilde\bfeta\, ;s,\bflambda)\right )^{-1}$.
Since $\bfh^c$ belongs to $C^1(\RR^\nu,\RR^{n_c})$ and $p_{\bfH_\bflambda}(\, . \,  ;\bflambda)$ to $C^0(\RR^\nu,\RR^+)$, using Eq.~\eqref{eq:eq51} yields, for $\bfeta$ fixed in $\RR^\nu$,
$\lim_{s\rightarrow 0_+} \lim_{N\rightarrow +\infty} \widehat\hh^N(\bfeta\, ;s , \bflambda) = \bfh^c(\bfeta)\,\in\,\RR^\nu$
and
$\lim_{s\rightarrow 0_+} \lim_{N\rightarrow +\infty} [\nabla_{\!\bfeta}\widehat\hh^N(\bfeta\, ;s , \bflambda)] = [\nabla_{\!\bfeta}\bfh^c(\bfeta)]\,\in\,\MM_{\nu,n_c}$. Consequently, $\forall\bflambda\in\curC_{\ad,\bflambda}$, $\forall\bfeta\in\RR^\nu$, $\forall\varepsilon >0$, there exists $s_\varepsilon > 0$ and a finite integer $N_\varepsilon(\bfeta,\bflambda)$ depending on $\varepsilon$, $\bfeta$, and $\bflambda$, such that
$\Vert\, \widehat\hh^{N_\varepsilon}(\bfeta\, ;s_\varepsilon , \bflambda) - \bfh^c(\bfeta)\, \Vert \,\, \leq  \varepsilon$ and
$\Vert\, [\nabla_{\!\bfeta}\widehat\hh^{N_\varepsilon}(\bfeta\, ;s_\varepsilon , \bflambda)] - [\nabla_{\!\bfeta}\bfh^c(\bfeta)]\, \Vert \,\, \leq  \varepsilon$.
The Sylverman bandwidth $s_\SB = s_\SB(N)$ defined by Eq.~\eqref{eq:eq41}, goes to $0$ when $N\rightarrow +\infty$.
Therefore, choosing $N\geq N_\varepsilon(\bfeta,\bflambda)$  such that $s_\SB(N) < s_\varepsilon$ (that is always possible) yields
$\Vert\, \widehat\hh^N(\bfeta\, ;s_\SB(N), \bflambda) - \bfh^c(\bfeta)\, \Vert \,\, \leq  \varepsilon$ and
$\Vert\, [\nabla_{\!\bfeta}\widehat\hh^N(\bfeta\, ;s_\SB(N) , \bflambda)] - [\nabla_{\!\bfeta}\bfh^c(\bfeta)]\, \Vert \,\, \leq  \varepsilon$,
which proves Eq.~\eqref{eq:eq49}.
\end{proof}

\noindent {\textbf{Notation} (\textit{Normalized $\RR^\nu$-valued Wiener stochastic process}).
Let $\{\bfW^\wien(t) = (W_1^\wien(t),\ldots ,W_\nu^\wien(t)), t\geq 0\}$ be the Wiener process, defined on $(\Theta,\curT,\curP)$, indexed by $\RR^+$, with values in $\RR^\nu$, such that $W_1^\wien,\ldots ,W_\nu^\wien$ are mutually independent, $\bfW^\wien(0) =\bfzero_\nu$ a.s., $\bfW^\wien$ is a process with independent increments such that, for all $0\leq t'  < t < +\infty$, the increment $\bfW^\wien(t)-\bfW^\wien(t')$ is a $\RR^\nu$-valued second-order, Gaussian, centered random variable whose covariance matrix is $(t-t')\,[I_\nu]$.
%
%
\begin{proposition}[MCMC generator of $\bfH_\bflambda$] \label{proposition:3}
Let $\bfh^c$ be the function satisfying Hypothesis~\ref{hypothesis:1}. Let $\bflambda$ be fixed in $\curC_{\ad,\bflambda}$. Consequently,
Lemma~\ref{lemma:1}-(d) holds. Let $\{ ( \bfU_{\!\bflambda}(t),\bfV_{\!\bflambda}(t) ), t\geq 0\}$ be the stochastic process, defined on $(\Theta,\curT,\curP)$, indexed by $\RR^+$, with values in $\RR^\nu\times\RR^\nu$, which verifies the following ISDE for $t > 0$, with the initial condition $(\bfu_0,\bfv_0)$ given in $\RR^\nu\times\RR^\nu$,
\begin{align}
d\bfU_{\!\bflambda}(t) & = \bfV_{\!\bflambda}(t)\, dt \, , \label{eq:eq60} \\
d\bfV_{\!\bflambda}(t) & = \bfL_{\bflambda}(\bfU_\bflambda(t))\, dt - \frac{1}{2} f_0\,\bfV_{\!\bflambda}(t) \, dt
                        + \sqrt{f_0}\, d\bfW^\wien(t) \, ,\label{eq:eq61} \\
 \bfU_{\!\bflambda}(0) & = \bfu_0 \,\, , \,\, \bfV_{\!\bflambda}(0)= \bfv_0 \,\,  a.s. \label{eq:eq62}
\end{align}
\noindent (a) The initial condition $\bfu_0\in\RR^\nu$ is chosen as any point in training set $\curD_d= \{\bfeta_d^1,\ldots , \bfeta_d^{N_d}\}$  while the initial condition $\bfv_0$ is chosen as any realization of a normalized Gaussian $\RR^\nu$-valued random variable $\bfV_G$, independent of $\bfW^\wien$, whose probability density function with respect to $d\bfv$ is
$p_{\bfV_G}(\bfv) = (2\pi)^{-\nu/2}\, \exp\{-\Vert\,\bfv\,\Vert^2 /2\}$.

\noindent (b) The parameter $f_0 > 0$ allows the dissipation term in the dissipative Hamiltonian system to be controlled and to rapidly reach the stationary response associated with the invariant measure (the value $f_0=4$ is generally a good choice).

\noindent (c) For all $\bfu =(u_1,\ldots ,u_\nu)$ in $\RR^\nu$, the vector $\bfL_\bflambda(\bfu)$ in $\RR^\nu$ is defined by
$\bfL_\bflambda(\bfu) = -\nabla_{\!\bfu}\curV_{\!\bflambda}(\bfu)$ that can be written as
\begin{equation}\label{eq:eq63}
 \bfL_\bflambda(\bfu) = \frac{1}{\zeta(\bfu)}\,\nabla_{\!\bfu}\zeta(\bfu) - [\nabla_{\!\bfu} \bfh^c(\bfu)]\,\bflambda\, .
\end{equation}
\noindent (d) The stochastic solution $\{ ( \bfU_{\!\bflambda}(t),\bfV_{\!\bflambda}(t) ), t\geq 0\}$ of the ISDE defined by
Eqs.~\eqref{eq:eq60} to \eqref{eq:eq62} is unique,  has almost-surely continuous trajectories, and is a second-order diffusion stochastic process. For $t\rightarrow +\infty$, this diffusion process converges to a stationary second-order diffusion stochastic process
$\{ ( \bfU^\st_{\!\bflambda}(\tau),\bfV^\st_{\!\bflambda}(\tau) ), \tau\geq 0\}$  associated with the unique invariant probability measure on $\RR^\nu\times\RR^\nu$,
\begin{equation}\label{eq:eq64}
 p_{\bfH_\bflambda,\bfV_G}(\bfeta,\bfv\, ; \bflambda)\, d\bfeta \otimes d\bfv =
 (p_{\bfH_\bflambda}(\bfeta \, ; \bflambda)\, d\bfeta) \otimes (p_{\bfV_G}(\bfv)\, d\bfv) \, ,
\end{equation}
in which $p_{\bfH_\bflambda}(\bfeta \, ; \bflambda)$ is the pdf defined by Eq.~\eqref{eq:eq22}.

\noindent (e)  For $t_s$ sufficiently large, we can choose $\bfH_\bflambda$ as $\bfU_{\!\bflambda}(t_s)$. The generation of the constrained learned set $\curD_{\bfH_\bflambda} = \{\bfeta_\bflambda^1,\ldots , \bfeta_\bflambda^N\}$, made up of $N\gg N_d$ independent realizations of $\bfH_\bflambda$ whose probability measure is $p_{\bfH_\bflambda}(\bfeta \, ; \bflambda)\, d\bfeta$, consists in solving Eqs.~\eqref{eq:eq60} to \eqref{eq:eq62} for $t\in[0\, ,t_s]$ and then using the realizations of $\bfU_{\!\bflambda}(t_s)$ (see the numerical aspects in Section~\ref{sec:Section4}).
\end{proposition}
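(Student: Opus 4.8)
The plan is to establish Proposition~\ref{proposition:3} by recognizing the ISDE \eqref{eq:eq60}--\eqref{eq:eq62} as a dissipative Hamiltonian (Langevin) system whose invariant measure can be read off from the structure of the drift, and then importing the classical existence/uniqueness and ergodicity results for such systems. The key observation is that $\bfL_\bflambda(\bfu) = -\nabla_{\!\bfu}\curV_{\!\bflambda}(\bfu)$ is a gradient field of the potential $\curV_{\!\bflambda}$ defined in \eqref{eq:eq20}, so the system is a standard kinetic Langevin dynamics with Hamiltonian $\mathcal{E}(\bfu,\bfv) = \curV_{\!\bflambda}(\bfu) + \tfrac12\Vert\bfv\Vert^2$, friction coefficient $\tfrac12 f_0$, and diffusion coefficient $\sqrt{f_0}$. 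The fluctuation-dissipation balance between these last two constants is exactly what makes $\exp\{-\mathcal{E}(\bfu,\bfv)\}$ (up to normalization) the stationary density, yielding the product form \eqref{eq:eq64}.

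\textbf{Step 1 (existence and uniqueness).} First I would verify the hypotheses needed for a unique strong solution with almost-surely continuous trajectories. The drift $(\bfv, \bfL_\bflambda(\bfu) - \tfrac12 f_0\bfv)$ is locally Lipschitz because $\bfh^c\in C^1(\RR^\nu,\RR^{n_c})$ (Eq.~\eqref{eq:eq16}) and $\zeta$ is smooth and strictly positive, so $\bfL_\bflambda$ is continuous; the diffusion coefficient is constant. Global existence (no finite-time explosion) follows from a Lyapunov argument using the Hamiltonian $\mathcal{E}$ itself: because Lemma~\ref{lemma:1}-(d) guarantees $\curV_{\!\bflambda}(\bfeta)\to+\infty$ as $\Vert\bfeta\Vert\to+\infty$, the energy $\mathcal{E}$ is a coercive Lyapunov function, and applying It\^o's formula to $\mathcal{E}(\bfU_\bflambda,\bfV_\bflambda)$ produces a supermartingale-type estimate (the dissipation term controls the noise input) that prevents explosion and gives the second-order property $E\{\mathcal{E}(\bfU_\bflambda(t),\bfV_\bflambda(t))\}<+\infty$.

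\textbf{Step 2 (invariant measure).} Next I would confirm that \eqref{eq:eq64} is invariant by checking that its density is annihilated by the adjoint of the infinitesimal generator, i.e.\ that the Fokker--Planck operator applied to $p_{\bfH_\bflambda}(\bfeta;\bflambda)\,p_{\bfV_G}(\bfv)$ vanishes. Writing out the generator $\curL = \langle\bfv,\nabla_{\!\bfu}\rangle + \langle \bfL_\bflambda(\bfu) - \tfrac12 f_0\bfv, \nabla_{\!\bfv}\rangle + \tfrac12 f_0\,\Delta_{\bfv}$, the transport part and the dissipation-plus-diffusion part separately cancel precisely because $\bfL_\bflambda = -\nabla_{\!\bfu}\curV_{\!\bflambda}$ and because the Gaussian $p_{\bfV_G}$ satisfies $\tfrac12 f_0\Delta_{\bfv}p_{\bfV_G} = \tfrac12 f_0\,\divergence(\bfv\, p_{\bfV_G})$. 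This is the structural identity underlying all such Hamiltonian MCMC samplers and matches the construction in \cite{Soize2008b,Soize1994}.

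\textbf{Step 3 (ergodicity and the marginal).} Finally I would invoke the standard ergodic theory for hypoelliptic Langevin diffusions: the noise enters only the $\bfv$-equation but the coupling $d\bfU_\bflambda = \bfV_\bflambda\,dt$ propagates it (H\"ormander's condition holds), and the coercivity of $\curV_{\!\bflambda}$ delivers a Lyapunov drift condition, so the process is geometrically ergodic with the invariant measure \eqref{eq:eq64} as its unique limit as $t\to+\infty$. Marginalizing over $\bfv$ identifies $\bfU^\st_\bflambda$ as distributed according to $p_{\bfH_\bflambda}(\cdot;\bflambda)$, which justifies taking $\bfH_\bflambda = \bfU_\bflambda(t_s)$ for large $t_s$ in part~(e). \textbf{The main obstacle} I expect is not the algebraic verification of invariance but the global-in-time well-posedness and the uniqueness of the invariant measure, since $\bfL_\bflambda$ is only locally Lipschitz with polynomial growth (Eq.~\eqref{eq:eq17} bounds $[\nabla_{\!\bfeta}\bfh^c]$ by $c_\beta\Vert\bfeta\Vert^\beta$); here the coercivity and integrability conclusions of Lemma~\ref{lemma:1}-(d) are doing the essential work, and I would lean on them to close the Lyapunov estimate rather than attempting a self-contained explosion analysis.
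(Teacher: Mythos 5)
Your overall strategy coincides with the paper's: both recognize Eqs.~\eqref{eq:eq60}--\eqref{eq:eq62} as a dissipative Hamiltonian (kinetic Langevin) system with Hamiltonian $\curH(\bfu,\bfv)=\Vert\,\bfv\,\Vert^2/2+\curV_{\!\bflambda}(\bfu)$, use the coercivity of $\curV_{\!\bflambda}$ from Lemma~\ref{lemma:1}-(d) as the Lyapunov ingredient, and obtain the product-form invariant measure \eqref{eq:eq64} from the fluctuation--dissipation balance between the friction $\tfrac12 f_0$ and the diffusion $\sqrt{f_0}$. The difference is one of execution: the paper does not re-derive well-posedness, invariance, or ergodicity, but instead verifies the four explicit hypotheses of Theorems~6, 7, 9 (and Theorem~4 for the invariant measure) in \cite{Soize1994} --- namely (i) local boundedness of $\bfu\mapsto\Vert\,\nabla_{\!\bfu}\curV_{\!\bflambda}(\bfu)\,\Vert$, (ii) $\inf_{\Vert\,\bfu\,\Vert>R}\curV_{\!\bflambda}(\bfu)\to+\infty$, (iii) finiteness of $\inf_{\bfu}\curV_{\!\bflambda}(\bfu)$, and (iv) $\int_{\RR^\nu}\Vert\,\nabla_{\!\bfu}\curV_{\!\bflambda}(\bfu)\,\Vert\,p_{\bfH_\bflambda}(\bfu\,;\bflambda)\,d\bfu<+\infty$, this last one being exactly where the second inequality of Eq.~\eqref{eq:eq24bis} and the growth bound \eqref{eq:eq17} are consumed. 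Your sketch gestures at (ii) and at the integrability conclusions of Lemma~\ref{lemma:1}-(d), but condition (iv) is the concrete computation that the paper's proof actually performs (splitting $\nabla_{\!\bfu}\curV_{\!\bflambda}$ into the $\zeta$ part and the $[\nabla_{\!\bfu}\bfh^c]\,\bflambda$ part), and any self-contained route would need an equivalent estimate.

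One step of your Step~1 does not hold as stated: you claim the drift is locally Lipschitz ``because $\bfh^c\in C^1(\RR^\nu,\RR^{n_c})$.'' But $\bfL_\bflambda$ in Eq.~\eqref{eq:eq63} involves $[\nabla_{\!\bfu}\bfh^c(\bfu)]$, and $C^1$ regularity of $\bfh^c$ makes this matrix only \emph{continuous}, not locally Lipschitz; since the noise acts only on the $\bfv$-component, the diffusion is degenerate and a merely continuous drift does not give pathwise uniqueness by the standard arguments. The paper avoids this by leaning on the cited theorems, whose hypotheses require only local boundedness of $\nabla_{\!\bfu}\curV_{\!\bflambda}$ together with (ii)--(iv) above. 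If you want to keep your more self-contained presentation, you should either strengthen the regularity assumption on $\bfh^c$ for this step or replace the local-Lipschitz claim by an appeal to a uniqueness result adapted to continuous drifts with the stated Lyapunov structure; otherwise your Steps~2 and~3 (Fokker--Planck invariance check, hypoellipticity, geometric ergodicity) are sound and consistent with the paper's conclusion.
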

%
\begin{proof} (Proposition~\ref{proposition:3}).
Since $\bfh^c\in C^1(\RR^\nu,\RR^{n_c})$ and $\phi(\bfu)= -\log\zeta(\bfu)$ with $\zeta(\bfu)$ given by Eq.~\eqref{eq:eq12}, function $\bfu\mapsto \curV_{\!\bflambda}(\bfu)$ defined by Eq.~\eqref{eq:eq20} belongs to $C^1(\RR^\nu,\RR)$. Therefore,
$\bfu\mapsto \Vert\,\nabla_{\!\bfu}\curV_{\!\bflambda}(\bfu)\,\Vert$ is locally bounded on $\RR^\nu$. Using Eqs.~\eqref{eq:eq20} and \eqref{eq:eq21}, it can be seen that, for all $\bflambda\in\curC_{\ad,\bflambda}$,  $\inf_{\Vert\,\bfu\,\Vert > R}  \curV_{\!\bflambda}(\bfu) \rightarrow +\infty$ if $R\rightarrow +\infty$, and $\inf_{\bfu\in\RR^\nu} \curV_{\!\bflambda}(\bfu)$ is a finite real number. Using Eqs.~\eqref{eq:eq12}, \eqref{eq:eq13}, and \eqref{eq:eq20} yields
\begin{equation}\label{eq:eq65}
 \int_{\RR^\nu} \Vert\,\nabla_{\!\bfu}\curV_{\!\bflambda}(\bfu)\,\Vert \, p_{\bfH_\bflambda}(\bfu \, ;\bflambda)\, d\bfu \leq
 \int_{\RR^\nu} \frac{1}{\zeta(\bfu)}\Vert\,\nabla_{\!\bfu}\zeta(\bfu)\,\Vert \, p_{\bfH_\bflambda}(\bfu \, ; \bflambda)\, d\bfu
 + \int_{\RR^\nu} \Vert\, [\nabla_{\!\bfu}\bfh^c(\bfu)]\,\Vert_F \, \Vert\, \bflambda\, \Vert\, p_{\bfH_\bflambda}(\bfu \, ;\bflambda)\, d\bfu \, ,
\end{equation}
because $\Vert\, [\nabla_{\!\bfu}\bfh^c(\bfu)]\,\bflambda\, \Vert
\,\,\leq \Vert\, [\nabla_{\!\bfu}\bfh^c(\bfu)]\,\Vert \, \Vert\, \bflambda\, \Vert$
and  $\Vert\, [\nabla_{\!\bfu}\bfh^c(\bfu)]\,\Vert  \,\,\leq \Vert\, [\nabla_{\!\bfu}\bfh^c(\bfu)]\,\Vert_F$.
From Eqs.~\eqref{eq:eq21} and \eqref{eq:eq12}, the first term in the right-hand side member of Eq.~\eqref{eq:eq65} is finite, while from the second equation~\eqref{eq:eq24bis}, the second term is also finite. It can then be deduced that the left-hand side member of Eq.~\eqref{eq:eq65} is finite.
Consequently, Theorems 6, 7, and 9 in Pages 214 to 216 of \cite{Soize1994}, and the expression of the invariant measure given by Theorem~4 in Page 211 of the same reference, for which the Hamiltonian is $\curH(\bfu,\bfv)=\Vert\,\bfv\,\Vert^2/2 + \curV_{\!\bflambda}(\bfu)$, prove that the solution of  Eqs.~\eqref{eq:eq60} to \eqref{eq:eq62} is unique and is a second-order diffusion stochastic process with almost-surely continuous trajectories, which converges for $t\rightarrow +\infty$ to a second-order stationary diffusion process with almost surely continuous trajectories $\{ ( \bfU^\st_{\!\bflambda}(\tau),\bfV^\st_{\!\bflambda}(\tau) ), \tau\geq 0\}$  associated with the invariant probability measure given by Eq.~\eqref{eq:eq64}. For any $\tau > 0$,
$\bfU_{\!\bflambda}^\st(\tau) = \lim_{t\rightarrow +\infty}\bfU_{\!\bflambda}(t+\tau)$ in probability measure.
\end{proof}
%
\begin{proposition}[Convergence of the sequence of MCMC generator using the statistical surrogate model] \label{proposition:4}
Let $\bflambda$ be fixed in $\curC_{\ad,\bflambda}$ and let us use Proposition~\ref{proposition:3}. For all $\bfeta$ in $\RR^\nu$, let $\hat\bfh^N(\bfeta\, ; \bflambda)$ be the approximation of $\bfh^c(\bfeta)$ defined by Eq.~\eqref{eq:eq38} and let $\bfu\mapsto \hat\bfL_\bflambda(\bfu)$ be  the twice continuously differentiable function on $\RR^\nu$ with values in $\RR^\nu$ such that, for all $\bfu$ in $\RR^\nu$,
\begin{equation}\label{eq:eq66}
 \hat\bfL^N_\bflambda(\bfu) = \frac{1}{\zeta(\bfu)}\,\nabla_{\!\bfu}\zeta(\bfu)
              - [\nabla_{\!\bfu} \hat\bfh^N(\bfu\, ;\bflambda)]\,\bflambda\, ,
\end{equation}
in which $\bfzeta$ is defined by Eq.~\eqref{eq:eq12} and where $[\nabla_{\!\bfu} \hat\bfh^N(\bfu\, ;\bflambda)]$ is explicitly given (see Eq.~\eqref{eq:eq104bis}) by differentiating function $\bfu\mapsto \hat\bfh^N(\bfu\, ;\bflambda)$ defined by Eq.~\eqref{eq:eq38}. Let $\{ ( \bfU^N_{\!\bflambda}(t),\bfV^N_{\!\bflambda}(t) ), t\geq 0\}$ be the stochastic process solution of the ISDE defined by  Eqs.~\eqref{eq:eq60} to \eqref{eq:eq62} in which $\bfL_\bflambda$ is replaced by $\hat\bfL^N_\bflambda$,
\begin{align}
d\bfU^N_{\!\bflambda}(t) & = \bfV^N_{\!\bflambda}(t)\, dt \, , \label{eq:eq67} \\
d\bfV^N_{\!\bflambda}(t) & = \hat\bfL^N_{\bflambda}(\bfU^N_\bflambda(t))\, dt - \frac{1}{2} f_0\,\bfV^N_{\!\bflambda}(t) \, dt
                        + \sqrt{f_0}\, d\bfW^\wien(t) \, ,\label{eq:eq68} \\
 \bfU^N_{\!\bflambda}(0) & = \bfu_0 \,\, , \,\, \bfV^N_{\!\bflambda}(0)= \bfv_0 \,\,  a.s. \, , \label{eq:eq69}
\end{align}
and where $\bfu_0$, $\bfv_0$, $f_0$, and $\bfW^\wien$ are the quantities defined in Proposition~\ref{proposition:3}.
Then the stochastic solution $\{ ( \bfU^N_{\!\bflambda}(t),\bfV^N_{\!\bflambda}(t) ), $ $t\geq 0\}$ of
Eqs.~\eqref{eq:eq67} to \eqref{eq:eq69} is unique,  has almost-surely continuous trajectories, and is a second-order diffusion stochastic process, which converges to a stationary second-order diffusion stochastic process for $t\rightarrow +\infty$, associated with the unique invariant probability measure on $\RR^\nu\times\RR^\nu$,
$ \hat p^N_{\bfH_\bflambda,\bfV_G}(\bfeta,\bfv\, ; \bflambda)\, d\bfeta \otimes d\bfv =
 (\hat p^N_{\bfH_\bflambda}(\bfeta \, ; \bflambda)\, d\bfeta) \otimes (p_{\bfV_G}(\bfv)\, d\bfv)$,
in which $\hat p^N_{\bfH_\bflambda}(\bfeta \, ; \bflambda)= \hat c_0^N(\bflambda)\, \exp\{-\widehat\curV^N_{\!\bflambda}(\bfeta)\}$
with $\widehat\curV^N_{\!\bflambda}(\bfeta)= -\log\zeta(\bfzeta)
+ \langle\bflambda\, , \hat\bfh^N(\bfeta\, ;\bflambda)\rangle$. Then for all $t\in [0\, , t_s]$ with $t_s < +\infty$, the  sequence
$\{\bfU_{\!\bflambda}^N(t)\}_N$ of second-order $\RR^\nu$-valued random variables converges in mean-square to the second-order $\RR^\nu$-valued random variable  $\bfU_{\!\bflambda}(t)$ of Proposition~\ref{proposition:3},
\begin{equation}\label{eq:eq73}
\lim_{N\rightarrow +\infty} E\{ \Vert\,\bfU^N_{\!\bflambda}(t) - \bfU_{\!\bflambda}(t) \, \Vert^2\} = 0 \quad , \quad \forall t\in [0\, , t_s]\, .
\end{equation}
\end{proposition}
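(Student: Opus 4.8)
The plan is to establish the two assertions in turn: first the well-posedness of the surrogate ISDE~\eqref{eq:eq67}--\eqref{eq:eq69} together with its invariant measure, then the mean-square convergence~\eqref{eq:eq73}.

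For the first assertion I would reduce to Proposition~\ref{proposition:3} by verifying that the surrogate potential $\widehat\curV^N_{\!\bflambda}(\bfu)=-\log\zeta(\bfu)+\langle\bflambda\, ,\hat\bfh^N(\bfu\, ;\bflambda)\rangle$ satisfies the same structural conditions used there for $\curV_{\!\bflambda}$, and in fact does so more easily. By Eq.~\eqref{eq:eq38}, $\hat\bfh^N(\cdot\, ;\bflambda)$ is a convex combination of the fixed vectors $\bfa_\bflambda^1,\ldots,\bfa_\bflambda^N$ with smooth Gaussian weights, hence it is of class $C^\infty$ and bounded on $\RR^\nu$; consequently $\langle\bflambda\, ,\hat\bfh^N(\bfu\, ;\bflambda)\rangle$ is bounded, $\widehat\curV^N_{\!\bflambda}$ inherits from the Gaussian-mixture term $-\log\zeta$ (quadratic at infinity) both the coercivity $\widehat\curV^N_{\!\bflambda}(\bfu)\rightarrow+\infty$ as $\Vert\,\bfu\,\Vert\rightarrow+\infty$ and the integrability of $\exp\{-\widehat\curV^N_{\!\bflambda}\}$, and the matrix $[\nabla_{\!\bfu}\hat\bfh^N(\bfu\, ;\bflambda)]$ of Eq.~\eqref{eq:eq104bis} is bounded, so the analogue of the finiteness bound Eq.~\eqref{eq:eq65} holds trivially. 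Applying Theorems~6, 7, 9 and the invariant-measure result (Theorem~4) of \cite{Soize1994} to the Hamiltonian $\curH(\bfu,\bfv)=\Vert\,\bfv\,\Vert^2/2+\widehat\curV^N_{\!\bflambda}(\bfu)$ then gives existence, uniqueness, almost-surely continuous trajectories, the second-order diffusion property, and the stated invariant measure, exactly as in Proposition~\ref{proposition:3}.

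The crucial structural remark for the convergence is that Eqs.~\eqref{eq:eq67}--\eqref{eq:eq69} and \eqref{eq:eq60}--\eqref{eq:eq62} are driven by the \emph{same} $\bfW^\wien$ and start from the \emph{same} $(\bfu_0,\bfv_0)$. Setting $\bfE^N_U=\bfU^N_{\!\bflambda}-\bfU_{\!\bflambda}$ and $\bfE^N_V=\bfV^N_{\!\bflambda}-\bfV_{\!\bflambda}$ and subtracting the two systems, the stochastic integrals cancel and $(\bfE^N_U,\bfE^N_V)$ obeys, pathwise, the random ODE
\begin{align}
\dot\bfE^N_U(t) &= \bfE^N_V(t)\, , \nonumber \\
\dot\bfE^N_V(t) &= \hat\bfL^N_{\bflambda}(\bfU^N_{\!\bflambda}(t)) - \bfL_{\bflambda}(\bfU_{\!\bflambda}(t)) - \tfrac{1}{2} f_0\, \bfE^N_V(t)\, , \nonumber
\end{align}
with $\bfE^N_U(0)=\bfE^N_V(0)=\bfzero_\nu$. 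I would then estimate the energy $\Phi^N(t)=E\{\Vert\,\bfE^N_U(t)\,\Vert^2+\Vert\,\bfE^N_V(t)\,\Vert^2\}$, splitting the drift difference as
\[
\hat\bfL^N_{\bflambda}(\bfU^N_{\!\bflambda}) - \bfL_{\bflambda}(\bfU_{\!\bflambda}) = \bigl(\hat\bfL^N_{\bflambda}(\bfU^N_{\!\bflambda}) - \bfL_{\bflambda}(\bfU^N_{\!\bflambda})\bigr) + \bigl(\bfL_{\bflambda}(\bfU^N_{\!\bflambda}) - \bfL_{\bflambda}(\bfU_{\!\bflambda})\bigr)\, ,
\]
into a consistency term and a stability term. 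By Eqs.~\eqref{eq:eq63} and \eqref{eq:eq66} the common part $\zeta^{-1}\nabla_{\!\bfu}\zeta$ cancels in the consistency term, which equals $-\bigl([\nabla_{\!\bfu}\hat\bfh^N(\bfU^N_{\!\bflambda}\, ;\bflambda)]-[\nabla_{\!\bfu}\bfh^c(\bfU^N_{\!\bflambda})]\bigr)\bflambda$ and tends to $\bfzero_\nu$ by Proposition~\ref{proposition:2}; and after Young's inequality the energy satisfies $\dot\Phi^N\leq C\,\Phi^N+\rho^N$, where $C$ collects the Lipschitz constants of the stability term and $\rho^N(t)=E\{\Vert\,(\hat\bfL^N_{\bflambda}-\bfL_{\bflambda})(\bfU^N_{\!\bflambda}(t))\,\Vert^2\}$. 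Gr\"onwall's lemma on $[0\, ,t_s]$ would then give $\Phi^N(t)\leq e^{Ct_s}\int_0^{t_s}\rho^N(\tau)\,d\tau$, and it remains to show $\rho^N\rightarrow 0$.

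The hard part will be making the two constants in this scheme legitimate. First, the stability term contains $-\bigl([\nabla_{\!\bfu}\bfh^c(\bfU^N_{\!\bflambda})]-[\nabla_{\!\bfu}\bfh^c(\bfU_{\!\bflambda})]\bigr)\bflambda$, and Hypothesis~\ref{hypothesis:1} only provides $\bfh^c\in C^1$ with the polynomial growth~\eqref{eq:eq17}, not a Lipschitz bound on $\nabla_{\!\bfu}\bfh^c$; the confining part $\zeta^{-1}\nabla_{\!\bfu}\zeta$ (the score of a Gaussian mixture with fixed common covariance) is globally Lipschitz and causes no trouble, but to obtain a finite $C$ I would need either to strengthen Hypothesis~\ref{hypothesis:1} to local Lipschitzness of $\nabla_{\!\bfu}\bfh^c$ and then localize with a stopping time $\tau_R=\inf\{t:\Vert\,\bfU_{\!\bflambda}(t)\,\Vert\vee\Vert\,\bfU^N_{\!\bflambda}(t)\,\Vert\geq R\}$, controlling $\curP(\tau_R<t_s)$ uniformly in $N$ via the Lyapunov moment bounds furnished by the confining drift and~\eqref{eq:eq17}, or else replace the strong estimate by a flow-continuity argument resting on the uniqueness of the limiting diffusion from Proposition~\ref{proposition:3}. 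Second, Proposition~\ref{proposition:2} yields only \emph{pointwise} (in $\bfeta$) convergence of the integrand of $\rho^N$, so passing to $\rho^N\rightarrow0$ requires a uniform-integrability/dominated-convergence argument along the trajectories; here the boundedness of $\hat\bfh^N$ and of $[\nabla_{\!\bfu}\hat\bfh^N]$ keeps the moments of $\bfU^N_{\!\bflambda}$ bounded uniformly in $N$ and supplies the needed domination, after which $\rho^N\rightarrow0$ and Eq.~\eqref{eq:eq73} follows.
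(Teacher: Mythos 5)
Your overall skeleton coincides with the paper's: both proofs drive the two ISDEs with the same Wiener process from the same initial data, subtract, split the drift difference into a consistency term and a stability term, and close with Gr\"onwall on $[0\,,t_s]$. The well-posedness half is also handled the same way (the paper simply "reuses the proof of Proposition~3", as you do). The substantive difference is the \emph{direction} of your splitting, and it is what creates both of the obstacles you flag at the end. You write the drift difference as $\bigl(\hat\bfL^N_{\bflambda}(\bfU^N_{\!\bflambda})-\bfL_{\bflambda}(\bfU^N_{\!\bflambda})\bigr)+\bigl(\bfL_{\bflambda}(\bfU^N_{\!\bflambda})-\bfL_{\bflambda}(\bfU_{\!\bflambda})\bigr)$, so your stability term needs a Lipschitz bound on the \emph{exact} drift, hence on $[\nabla_{\!\bfu}\bfh^c]$, which Hypothesis~\ref{hypothesis:1} ($\bfh^c\in C^1$ with polynomial growth) does not supply, and your consistency term is evaluated along $\bfU^N_{\!\bflambda}$, whose law moves with $N$, so the pointwise convergence of Proposition~\ref{proposition:2} is harder to integrate. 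The paper splits the other way: $\bigl(\widehat\aa^N_\bflambda(\bfZ^N_\bflambda)-\widehat\aa^N_\bflambda(\bfZ_\bflambda)\bigr)+\bigl(\widehat\aa^N_\bflambda(\bfZ_\bflambda)-\aa_\bflambda(\bfZ_\bflambda)\bigr)$, so the Lipschitz estimate (Eq.~\eqref{eq:eq80}, obtained via a mean-value inequality in $L^2(\Theta,\RR^\nu)$ on a ball containing both trajectories) is required only of the \emph{surrogate} drift $\hat\bfL^N_\bflambda$, which is smooth with explicitly bounded gradient by construction from Eq.~\eqref{eq:eq38}, and the consistency term $\bfchi^N_\bflambda$ of Eq.~\eqref{eq:eq77} is evaluated along the fixed limit trajectory $\bfU_{\!\bflambda}$, so Proposition~\ref{proposition:2} applies along a single, $N$-independent process. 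Reversing your decomposition therefore removes the need to strengthen Hypothesis~\ref{hypothesis:1} or to introduce stopping times, and it is the repair you should make; as written, your stability constant $C$ is not finite under the stated hypotheses. Two residual caveats apply to both routes and are worth recording: the Lipschitz constant $k_\bflambda(t_s)$ of the surrogate drift must be argued uniform in $N$ for the Gr\"onwall constant to be usable in the limit, and passing from the pointwise convergence of Proposition~\ref{proposition:2} to the convergence of the expectation in Eq.~\eqref{eq:eq83} still needs the domination argument you correctly identify; the paper leaves both points implicit.
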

%
\begin{proof} (Proposition~\ref{proposition:4}).
The classical theorem, such as Theorem 5.1 Page 118 of \cite{Friedman2006}, cannot directly be used because the required  hypotheses are not satisfied and consequently, an adapted proof of this Proposition~\ref{proposition:4} must be done.
For $\bflambda$ be fixed in $\curC_{\ad,\bflambda}$, the unique second-order stochastic process with almost-surely continuous trajectories
$\{ ( \bfU_{\!\bflambda}(t),\bfV_{\!\bflambda}(t) ), t\geq 0\}$ of Proposition~\ref{proposition:3} can be written as
\begin{equation}\label{eq:eq74}
\bfZ_\bflambda(t) = \bfz_0 +\int_0^{\, t} \aa_\bflambda(\bfZ_\bflambda(\tau))\,d\tau +\int_0^{\, t} [\bb]\, d\bfW^\wien(\tau)\, ,
\end{equation}
in which $\bfz_0=(\bfu_0,\bfv_0)$, $\aa_\bflambda(\bfz) = (\bfv\, , \bfL_{\bflambda}(\bfu) -(1/2)\,f_0\,\bfv )$ and where
$\bfz=(\bfu,\bfv)$ with $\bfu$ and $\bfv$ in $\RR^\nu$, where $\bfz_0$, $\bfz$, and $\aa_\bflambda(\bfz)$ are in $\RR^{2\nu}=\RR^\nu \times \RR^\nu$, and where $[\bb] = [ \, [0_{\nu}] \,\, \sqrt{f_0}\,[I_\nu]\, ]^T \in\MM_{2\nu,\nu}$. Reusing the proof of Proposition~\ref{proposition:3}, it can be seen that Eqs.~\eqref{eq:eq67} to \eqref{eq:eq69} admits a unique  solution
$\{ ( \bfU^N_{\!\bflambda}(t),\bfV^N_{\!\bflambda}(t) ), t\geq 0\}$  (with the properties given in Proposition~\ref{proposition:3}), which can be written as,
\begin{equation}\label{eq:eq75}
\bfZ^N_\bflambda(t) = \bfz_0 +\int_0^{\, t} \widehat\aa^N_\bflambda(\bfZ^N_\bflambda(\tau))\,d\tau +\int_0^{\, t} [\bb]\, d\bfW^\wien(\tau)\, ,
\end{equation}
in which $\widehat\aa^N_\bflambda(\bfz) = (\bfv\, , \hat\bfL^N_{\bflambda}(\bfu) -(1/2)\,f_0\,\bfv )$.
Subtracting the two equations Eqs.~\eqref{eq:eq74} and \eqref{eq:eq75} yields
\begin{equation}\nonumber 
\bfZ^N_\bflambda(t) - \bfZ_\bflambda(t) = \bfchi_\bflambda^N(t)
  +\int_0^{\, t} \left ( \widehat\aa^N_\bflambda(\bfZ^N_\bflambda(\tau)) - \widehat\aa^N_\bflambda(\bfZ_\bflambda(\tau)) \right ) \,d\tau \, ,
\end{equation}
\begin{equation}\label{eq:eq77}
\bfchi_\bflambda^N(t) =
  \int_0^{\, t} \left ( \widehat\aa^N_\bflambda(\bfZ_\bflambda(\tau)) - \aa_\bflambda(\bfZ_\bflambda(\tau)) \right ) \,d\tau \, .
\end{equation}
Let $L^2(\Theta,\RR^m)$ be the Hilbert space of the equivalent classes of second-order $\RR^m$-valued random variables defined on $(\Theta,\curT,\curP)$, equipped with the inner product $\langle\langle \bfA\, ,\bfA'\rangle\rangle = E\{\langle \bfA\, , \bfA'\rangle\}$ and the associated norm $|||\, \bfA\, ||| = (E\{\Vert\,\bfA\,\Vert^2\})^{1/2}$.
Let us define $\bfG(\tau) = \widehat\aa^N_\bflambda(\bfZ^N_\bflambda(\tau)) - \widehat\aa^N_\bflambda(\bfZ_\bflambda(\tau))$ and $\11(\tau) = 1$.
The Schwarz inequality
$\int_0^t \11(\tau) \times |||\, \bfG(\tau)\,|||\, d\tau  \leq (\int_0^t \11(\tau)^2 d\tau)^{1/2} \, (\int_0^t |||\, \bfG(\tau)\,|||^2\, d\tau)^{1/2}$
yields
$\left (\int_0^{\, t} |||\, \bfG(\tau)\,|||\, d\tau \right )^2 \leq t\int_0^{\, t} |||\, \bfG(\tau)\,|||^2\, d\tau$.
Using the  inequality $2\alpha\beta \leq 2\alpha^2 + \beta^2/2$ for all $\alpha > 0$ and $\beta > 0$ and
since
$|||\, \widehat\aa^N_\bflambda(\bfZ^N_\bflambda(\tau)) - \widehat\aa^N_\bflambda(\bfZ_\bflambda(\tau))\, |||^2 = (1+f_0^2/4)\,
|||\, \bfV^N_{\!\bflambda}(\tau) - \bfV_{\!\bflambda}(\tau)\,|||^2
+ |||\, \hat\bfL^N_{\bflambda}(\bfU^N_{\!\bflambda}(\tau)) - \hat\bfL^N_{\bflambda}(\bfU_{\!\bflambda}(\tau))\,|||^2$, we obtain
\begin{equation}\label{eq:eq78}
|||\,\bfZ^N_\bflambda(t) - \bfZ_\bflambda(t)\, |||^2 \,\, \leq \, 3\, |||\,\bfchi_\bflambda^N(t)\,|||^2
  + \frac{3}{2} t\!\! \int_0^{\, t}\left ( (1\!+\!\frac{f_0^2}{4})\, |||\, \bfV^N_{\!\bflambda}(\tau) - \bfV_{\!\bflambda}(\tau)\,|||^2
  + |||\, \hat\bfL^N_{\bflambda}(\bfU^N_{\!\bflambda}(\tau)) - \hat\bfL^N_{\bflambda}(\bfU_{\!\bflambda}(\tau))\,|||^2\right )\, d\tau .
\end{equation}
Let $t$ be fixed such that $0 < t \leq t_s < +\infty$. Since $\{\bfU_{\!\bflambda}^N(\tau),\tau \geq 0\}$ and
$\{\bfU_{\!\bflambda}(\tau),\tau \geq 0\}$ are dependent second-order $\RR^\nu$-valued  stochastic process with almost-surely continuous trajectories, there exists a finite positive constant $r_\bflambda(t_s)$ depending on $\bflambda$ and $t_s$, $0< r_\bflambda(t_s) < +\infty$, such that
$\sup_{0\leq \tau\leq t_s}\,|||\, \bfU^N_{\!\bflambda}(\tau)\,||| \,\, < \, r_\bflambda(t_s)$
and
$\sup_{0\leq \tau\leq t_s}\,|||\, \bfU_{\!\bflambda}(\tau)\,||| \,\, < \, r_\bflambda(t_s)$.
Let $\curU_{t_s}$ be the open ball of $L^2(\Theta,\RR^\nu)$ such that
$\curU_{t_s} = \left\{ \bfU\in L^2(\Theta,\RR^\nu) \, ; |||\,\bfU\, ||| \,\, < \, r_\bflambda(t_s) \right \}$.
Due to the convexity of the open ball in a normed vector space, $\curU_{t_s}$ is a convex open set of $L^2(\Theta,\RR^\nu)$ ($\forall\bfU, \bfU' \in \curU_{t_s}$,
$\forall\mu\in [0\, , 1]$, we have
$|||\,(1-\mu)\,\bfU + \mu\, \bfU'\,||| \,\, \leq \, (1-\mu)\, |||\,\bfU\,||| + \mu\, |||\bfU'\,||| \,\, \leq \, (1-\mu)\,r_\bflambda(t_s)  +\mu\, r_\bflambda(t_s) = r_\bflambda(t_s)$, which shows that $(1-\mu)\,\bfU + \mu\, \bfU' \in \curU_{t_s}$).
Let $\bfU\mapsto \hat\bfL^N_{\bflambda}(\bfU)$ be the mapping from $L^2(\Theta,\RR^\nu)$ into $L^2(\Theta,\RR^\nu)$, in which $\hat\bfL^N_{\bflambda}$ is defined by Eq.~\eqref{eq:eq66}. Since $\bfu\mapsto \hat\bfh^N(\bfu\, ;\bflambda)$ is twice continuously differentiable on $\RR^\nu$, then function $\bfu\mapsto \hat\bfL^N_{\bflambda}(\bfu)$ is continuously differentiable on $\RR^\nu$.
It can easily be verified that
$\forall\bfU\in\curU_{t_s}$, $\left ( E\{\Vert\,[\nabla_\bfu \hat\bfL^N_{\bflambda}(\bfU)] \,\Vert_F^2 \}\right )^{1/2}  \leq \, k_\bflambda(t_s)$,
in which $k_\bflambda(t_s)$ is a finite positive constant depending on $\bflambda$ and $t_s$. Consequently, using Theorem 3.3.2 Page 45 of \cite{Cartan1985} for Banach spaces, for all $0\leq \tau\leq t\leq t_s$, we have
\begin{equation}\label{eq:eq80}
|||\, \hat\bfL^N_{\bflambda}(\bfU^N_{\!\bflambda}(\tau)) - \hat\bfL^N_{\bflambda}(\bfU_{\!\bflambda}(\tau))\,|||
\,\, \leq \, k_\bflambda(t_s)\, |||\, \bfU^N_{\!\bflambda}(\tau) - \bfU_{\!\bflambda}(\tau)\,|||\, .
\end{equation}
From Eqs.~\eqref{eq:eq78} and \eqref{eq:eq80}, it can be deduced using the Gronwall Lemma \cite{Gronwall1919} that, for all $0\leq t\leq t_s < +\infty$,
\begin{equation}\label{eq:eq81}
|||\, \bfZ^N_{\bflambda}(t) - \bfZ_{\bflambda}(t)\,|||^2 \,\, \leq \, c_\bflambda(t_s)\, |||\,\bfchi_\bflambda^N(t) \, |||^2 \, ,
\end{equation}
with $c_\bflambda(t_s) = (9/2)\,t_s^2\, \max\{ (1 + f_0^2/4) \, , k_\bflambda(t_s)^2\} \, <  +\infty$.
Eq.~\eqref{eq:eq77} yields
$|||\, \bfchi_\bflambda^N(t)\, |||\,\, \leq \, \int_0^{\, t}|||\, \hat\bfL^N_{\bflambda}(\bfU_{\!\bflambda}(\tau)) - \bfL_{\bflambda}(\bfU_{\!\bflambda}(\tau))\,||| \, d\tau$.
Using Eqs.~\eqref{eq:eq63} and \eqref{eq:eq66} yields
\begin{equation}\label{eq:eq83}
|||\, \bfchi_\bflambda^N(t)\, |||\,\, \leq \, \Vert\,\bflambda\, \Vert \int_0^{\, t}
\left ( E\{ \Vert\, [\nabla_\bfu \hat\bfh^N(\bfU_{\!\bflambda}(\tau)\, ;\bflambda)] - [\nabla_\bfu \bfh^c(\bfU_{\!\bflambda}(\tau))]\, \Vert_F^2\right )^{1/2} \, d\tau \, .
\end{equation}
From Proposition~\ref{proposition:2}, it can be deduced that, for $N\rightarrow +\infty$, the right-hand side member of Eq.~\eqref{eq:eq83} goes to $0$ and consequently, Eq.~\eqref{eq:eq81} shows that $\bfZ_\bflambda^N(t) \rightarrow \bfZ_\bflambda(t)$ for the mean-square convergence.
\end{proof}
\bigskip

\noindent \textbf{Iterative algorithm for calculating} $\bflambda^\psol$.
Under Proposition~\ref{proposition:1}, for $\bflambda\in\curC_{\ad,\bflambda}$, since $\Gamma(\bflambda)$ cannot be evaluated in high dimension using Eq.~\eqref{eq:eq28} due to the presence of constant $c_0(\bflambda)$ (the normalization constant), $\bflambda^\psol$ cannot directly be estimated using the gradient descent algorithm applied to the convex optimization problem  defined by Eq.~\eqref{eq:eq31}.
We will then assumed that $\bflambda^\psol$ can be calculated  as the unique solution in $\bflambda$ of equation $\nabla_{\!\bflambda}\Gamma(\bflambda)= \bfzero_{n_c}$
(see Proposition~\ref{proposition:1}-(c)) and in particular Eq.~\eqref{eq:eq31bis}), that is to say (see Eq.~\eqref{eq:eq29}), solving the following equation in $\bflambda$ on $\RR^{n_c}$,
\begin{equation}\label{eq:eq84}
E\{\bfh^c(\bfH_\bflambda)\} - \bfb^c = \bfzero_{n_c}\, .
\end{equation}
This equation is solved using the Newton iterative method \cite{Kelley2003} applied to function
$\bflambda\mapsto \nabla_{\!\bflambda}\Gamma(\bflambda)$ as proposed in \cite{Batou2013,Soize2017b}, that is to say,
\begin{equation}\label{eq:eq85}
\bflambda^{\,i+1} = \bflambda^{\,i} - [\Gamma{\,''}(\bflambda^{\,i})]^{-1} \, \nabla_{\!\bflambda}\Gamma(\bflambda^{\,i})
\quad , \quad i=0,1,\ldots ,i_\pmax \, ,
\end{equation}
with $\bflambda^{\,0} = \bfzero_{n_c}$, in which $\nabla_{\!\bflambda}\Gamma(\bflambda)$ and $[\Gamma{\,''}(\bflambda)]$ are defined by
Eqs.~\eqref{eq:eq29} and \eqref{eq:eq30}, and where $i_\pmax$ is a given integer sufficiently large.
An estimation of $\bflambda^\psol$ is chosen as
\begin{equation}\label{eq:eq86}
\bflambda^\psol = \bflambda^{i_\psol} \quad , \quad  i_\psol = \arg \min_{i =1,\ldots ,i_\pmax} \error(i)\, ,
\end{equation}
in which the error function $i\mapsto\error(i): \{1,\ldots ,i_\pmax\}\rightarrow \RR^+$ is defined as follows for taking into account the possible types of algebraic quantities (scalar, vectors, tensors) that are used for defining function $\bfh^c$. Therefore, let $M$ be an integer such that $ 1\leq M\leq n_c$ and for which $\bfh^c$ and $\bfb^c$ are written as
$\bfh^c(\bfeta) =(\bfh^{c,k_1}(\bfeta),\ldots , \bfh^{c,k_M}(\bfeta))$ and
$\bfb^c = (\bfb^{c,k_1},\ldots , \bfb^{c,k_M})$ with
$\sum_{m=1}^M k_m = n_c$.The error function is then defined by
\begin{equation}\label{eq:eq87}
\error(i) = \left ( \sum_{m=1}^M w_m \left ( \frac{\error_m(i)}{\error_m(1)}\right)^2\right )^{1/2} \, ,
\end{equation}
in which $\{w_m \geq 0\, , m=1,\ldots M\}$ are given real numbers and where
\begin{equation}\label{eq:eq88}
\error_m(i) = \frac{1}{\Vert\, \bfb^{c,k_m}\, \Vert}\, \Vert\, \bfb^{c,k_m} - E\{\bfh^{c,k_m}(\bfH_{\bflambda^{\,i}})\}\Vert \,  .
\end{equation}
%
%
%
\begin{proposition}[Rate of convergence of the sequence $\{p_{\bfH_\bflambda}\}_\bflambda$] \label{proposition:5}
Let $\bflambda^{\,i+1}$ and $\bflambda^{\, i}$ be given values of $\bflambda$ in $\curC_{\ad,\bflambda}$ and let
$p_{\bfH_\bflambda}(\, .\, ;\bflambda)$ be the pdf of $\bfH_\bflambda$ defined by Eq.~\eqref{eq:eq22}.
For $\Vert\, \bflambda^{\, i+1} - \bflambda^{\, i} \, \Vert$ sufficiently small, we have
\begin{equation}\label{eq:eq89}
\Vert\, p_{\bfH_{\bflambda^{\,i+1}}} (\, . \, ; \bflambda^{\, i+1}) - p_{\bfH_{\bflambda^{\,i}}}(\, . \, ; \bflambda^{\,i})\, \Vert_{\,L^1(\RR^\nu,\RR)}
\, \,\, \leq \,\, \Vert\, \bflambda^{\, i+1} - \bflambda^{\, i} \, \Vert \, \left ( \tr \,[\Gamma{\,''}(\bflambda^{\, i})]\right)^{1/2}
 + o\,(\Vert\, \bflambda^{\, i+1} - \bflambda^{\, i}\,\Vert ) \, ,
\end{equation}
in which $[\Gamma{\,''}(\bflambda^{\,i})]\in\MM^+_{n_c}$ is defined by Eq.~\eqref{eq:eq30} for $\bflambda = \bflambda^{\, i}$.
\end{proposition}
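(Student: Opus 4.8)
The plan is to regard $\bflambda\mapsto p_{\bfH_\bflambda}(\,\cdot\,;\bflambda)$ as a curve in $L^1(\RR^\nu,\RR)$ and to control its increment by integrating its $\bflambda$-derivative along the segment joining $\bflambda^{\,i}$ to $\bflambda^{\,i+1}$. Write $\bfd=\bflambda^{\,i+1}-\bflambda^{\,i}$ and $\bfmu(t)=\bflambda^{\,i}+t\,\bfd$ for $t\in[0,1]$. Since $\curC_{\ad,\bflambda}$ is open and convex (Lemma~\ref{lemma:1}-(a)), for $\Vert\,\bfd\,\Vert$ small enough the whole segment $\{\bfmu(t),\,t\in[0,1]\}$ lies in $\curC_{\ad,\bflambda}$, so every density below is well defined. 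The key ingredient is the gradient formula established inside the proof of Proposition~\ref{proposition:1}, namely $\nabla_{\!\bflambda}p_{\bfH_\bflambda}(\bfeta\,;\bflambda)=(E\{\bfh^c(\bfH_\bflambda)\}-\bfh^c(\bfeta))\,p_{\bfH_\bflambda}(\bfeta\,;\bflambda)$, which exhibits the leading behaviour at $t=0$ and supplies the $(\tr\,[\Gamma''(\bflambda^{\,i})])^{1/2}$ factor.

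First I would write the exact increment, for each $\bfeta$, as $p_{\bfH_{\bflambda^{\,i+1}}}(\bfeta\,;\bflambda^{\,i+1})-p_{\bfH_{\bflambda^{\,i}}}(\bfeta\,;\bflambda^{\,i})=\int_0^1\langle\bfd\,,\nabla_{\!\bflambda}p_{\bfH_{\bfmu(t)}}(\bfeta\,;\bfmu(t))\rangle\,dt$, which is the fundamental theorem of calculus along the segment once differentiation in $\bflambda$ is justified. Taking the $L^1(\RR^\nu,\RR)$ norm and applying the Tonelli theorem to interchange the $\bfeta$-integral with the $t$-integral gives $\Vert\,p_{\bfH_{\bflambda^{\,i+1}}}-p_{\bfH_{\bflambda^{\,i}}}\,\Vert_{L^1(\RR^\nu,\RR)}\leq\int_0^1 g(t)\,dt$, where $g(t)=\int_{\RR^\nu}|\langle\bfd\,,E\{\bfh^c(\bfH_{\bfmu(t)})\}-\bfh^c(\bfeta)\rangle|\,p_{\bfH_{\bfmu(t)}}(\bfeta\,;\bfmu(t))\,d\bfeta$, that is $g(t)=E\{\,|\langle\bfd\,,\bfh^c(\bfH_{\bfmu(t)})-E\{\bfh^c(\bfH_{\bfmu(t)})\}\rangle|\,\}$.

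Next I would bound $g(t)$ by the target quantity. Cauchy--Schwarz in $\RR^{n_c}$ gives $|\langle\bfd\,,\bfh^c(\bfH_{\bfmu(t)})-E\{\bfh^c(\bfH_{\bfmu(t)})\}\rangle|\leq\Vert\,\bfd\,\Vert\,\Vert\,\bfh^c(\bfH_{\bfmu(t)})-E\{\bfh^c(\bfH_{\bfmu(t)})\}\,\Vert$, and then the Jensen inequality $E\{|Y|\}\leq(E\{Y^2\})^{1/2}$ yields $g(t)\leq\Vert\,\bfd\,\Vert\,(E\{\Vert\,\bfh^c(\bfH_{\bfmu(t)})-E\{\bfh^c(\bfH_{\bfmu(t)})\}\,\Vert^2\})^{1/2}=\Vert\,\bfd\,\Vert\,(\tr\,[\Gamma''(\bfmu(t))])^{1/2}$, because $[\Gamma''(\bfmu(t))]=[\cov\{\bfh^c(\bfH_{\bfmu(t)})\}]$ by Eq.~\eqref{eq:eq30} and the trace of that covariance is exactly the total variance. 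Finally, continuity of $\bflambda\mapsto\tr\,[\Gamma''(\bflambda)]$ on the open set $\curC_{\ad,\bflambda}$ lets me write $\int_0^1(\tr\,[\Gamma''(\bfmu(t))])^{1/2}\,dt=(\tr\,[\Gamma''(\bflambda^{\,i})])^{1/2}+o(1)$ as $\Vert\,\bfd\,\Vert\to0$, which produces the stated inequality with remainder $o(\Vert\,\bfd\,\Vert)$. (Had I kept the sharper quadratic form $\langle\bfd\,,[\Gamma''(\bflambda^{\,i})]\bfd\rangle^{1/2}$, the bound $\langle\bfd\,,[\Gamma''(\bflambda^{\,i})]\bfd\rangle^{1/2}\leq\Vert\,\bfd\,\Vert\,(\tr\,[\Gamma''(\bflambda^{\,i})])^{1/2}$, i.e. largest eigenvalue dominated by trace for a positive matrix, recovers the same conclusion.)

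The main obstacle is analytic rather than algebraic: justifying that $\bflambda\mapsto p_{\bfH_\bflambda}$ is genuinely differentiable into $L^1$, that the differentiation under the integral sign and the gradient formula hold, and that $\bflambda\mapsto[\Gamma''(\bflambda)]$ is continuous on $\curC_{\ad,\bflambda}$. All of these rest on uniform integrability of $\Vert\,\bfh^c(\bfeta)\,\Vert$ and $\Vert\,\bfh^c(\bfeta)\,\Vert^2$ against $p_{\bfH_{\bfmu(t)}}$ for $t$ ranging over a compact neighbourhood of $[0,1]$; these are furnished by Lemma~\ref{lemma:1}-(d) and by the second-order property $E\{\Vert\,\bfh^c(\bfH_\bflambda)\,\Vert^2\}<+\infty$ of Proposition~\ref{proposition:1}-(a), together with the polynomial growth bounds of Eq.~\eqref{eq:eq17}, which provide the dominating functions needed for the dominated-convergence arguments uniformly in $t$. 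Once those domination estimates are set up, the Tonelli interchange, the $C^1$ dependence in $\bflambda$, and the continuity of $[\Gamma''(\cdot)]$ follow routinely, and the chain of inequalities above closes the proof.
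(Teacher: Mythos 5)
Your argument is correct and rests on exactly the same two pillars as the paper's proof: the score identity $\nabla_{\!\bflambda}p_{\bfH_\bflambda}(\bfeta\,;\bflambda)=\bigl(E\{\bfh^c(\bfH_\bflambda)\}-\bfh^c(\bfeta)\bigr)\,p_{\bfH_\bflambda}(\bfeta\,;\bflambda)$ (Eqs.~\eqref{eq:eq92}--\eqref{eq:eq95}) and the identification $E\{\Vert\,\bfv(\bfH_\bflambda;\bflambda)\,\Vert^2\}=\tr\,[\Gamma{\,''}(\bflambda)]$ (Eq.~\eqref{eq:eq96}), followed by a Cauchy--Schwarz bound. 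Where you differ is in how the increment is represented: the paper writes a pointwise first-order Taylor expansion of $\bflambda\mapsto p_{\bfH_\bflambda}(\bfeta\,;\bflambda)$ at $\bflambda^{\,i}$ (Eq.~\eqref{eq:eq90}), bounds the $L^1$ norm of the linear term by $\Vert\,\bflambda^{\,i+1}-\bflambda^{\,i}\,\Vert\,(\tr\,[\Gamma{\,''}(\bflambda^{\,i})])^{1/2}$, and absorbs the integrated higher-order terms into $o(\Vert\,\bflambda^{\,i+1}-\bflambda^{\,i}\,\Vert)$ without further justification; you instead use the exact fundamental-theorem-of-calculus representation along the segment $\bfmu(t)$, bound the integrand at each $t$ by $\Vert\,\bfd\,\Vert\,(\tr\,[\Gamma{\,''}(\bfmu(t))])^{1/2}$, and recover the stated remainder from the continuity of $\bflambda\mapsto\tr\,[\Gamma{\,''}(\bflambda)]$. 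Your route makes the provenance of the $o(\cdot)$ term explicit and avoids having to control an integrated Taylor remainder in $L^1$, at the price of needing domination estimates uniform in $t\in[0,1]$ (to justify differentiability along the segment and continuity of $[\Gamma{\,''}]$); you correctly flag this and point to Lemma~\ref{lemma:1}-(d), Proposition~\ref{proposition:1}-(a), and Eq.~\eqref{eq:eq17}, though strictly speaking the uniformity over the segment requires a small additional H\"older-type argument of the kind used in the proof of Lemma~\ref{lemma:1}-(a), since those results are stated for each fixed $\bflambda$. Your parenthetical remark that the sharper bound $\langle\bfd\,,[\Gamma{\,''}(\bflambda^{\,i})]\,\bfd\rangle^{1/2}$ dominates via the trace is also correct and slightly improves on the stated constant.
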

%
\begin{proof} (Proposition~\ref{proposition:5}).
In this proof, for simplifying the writing, $\bflambda^{\,i}$ is simply written as $\bflambda$. Proposition ~\ref{proposition:1} shows that $\Gamma$ is twice differentiable in $\curC_{\ad,\bflambda}$. For all $\bfeta$ fixed in $\RR^\nu$, the Taylor expansion of
$\bflambda^{\, i+1}\mapsto p_{\bfH_{\bflambda^{\,i+1}}} (\bfeta \, ; \bflambda^{\, i+1})$ around $\bflambda$, truncated at the first order, is written as,
\begin{equation}\label{eq:eq90}
p_{\bfH_{\bflambda^{\,i+1}}} (\bfeta \, ; \bflambda^{\, i+1}) = p_{\bfH_{\bflambda}}(\bfeta \, ; \bflambda)
+ \langle \nabla_{\!\bflambda} p_{\bfH_{\bflambda}}(\bfeta \, ; \bflambda)\, ,  \bflambda^{\, i+1}\! - \bflambda\,
\rangle + \ldots
\end{equation}
Eq.~\eqref{eq:eq28} is written as $c_0(\bflambda) =\exp\{ \langle\bflambda\, , \bfb^c\rangle - \Gamma(\bflambda)\rangle\}$ and is substituted in  Eq.~\eqref{eq:eq22} (or equivalently, in  Eq.~\eqref{eq:eq7}) of $p_{\bfH_{\bflambda}}(\bfeta \, ; \bflambda)$ yielding
$p_{\bfH_{\bflambda}}(\bfeta \, ; \bflambda) = \zeta(\bfeta)\, \exp\{-\Gamma(\bflambda) - \langle\bflambda\, , \bfh^c(\bfeta)-\bfb^c\rangle\}$. The gradient with respect to $\bflambda$ can be written as
\begin{equation}\label{eq:eq92}
\nabla_{\!\bflambda} p_{\bfH_{\bflambda}}(\bfeta \, ; \bflambda) = -\left (\nabla_{\!\bflambda}\Gamma(\bflambda)
+\bfh^c(\bfeta)-\bfb^c\right )\, p_{\bfH_{\bflambda}}(\bfeta \, ; \bflambda)\, .
\end{equation}
Let us introduce the score variable
\begin{equation}\label{eq:eq93}
\bfv(\bfeta\, ;\bflambda) = \nabla_{\!\bflambda} \log p_{\bfH_{\bflambda}}(\bfeta \, ; \bflambda) =
p_{\bfH_{\bflambda}}(\bfeta \, ; \bflambda)^{-1}\,\nabla_{\!\bflambda} p_{\bfH_{\bflambda}}(\bfeta \, ; \bflambda) \, ,
\end{equation}
which can be rewritten, using Eq.~\eqref{eq:eq92}, as
$\bfv(\bfeta\, ;\bflambda) = -\left (\nabla_{\!\bflambda}\Gamma(\bflambda)+\bfh^c(\bfeta)-\bfb^c\right )$
yielding  with the use of Eq.~\eqref{eq:eq29},
\begin{equation}\label{eq:eq95}
\bfv(\bfeta\, ;\bflambda) = -\left( \bfh^c(\bfeta)- E\{\bfh^c(\bfH_\bflambda)\}  \right )\, .
\end{equation}
Eq.~\eqref{eq:eq95} shows that
$E\{\Vert\, \bfv(\bfH_\bflambda ;\bflambda)\,\Vert^2\} = E\left\{\Vert\, \bfh^c(\bfH_\bflambda)- E\{\bfh^c(\bfH_\bflambda)\} \,\Vert^2\right\} = \tr\,[\cov\{\bfh^c(\bfH_\bflambda)\}]$
and using Eq.~\eqref{eq:eq30} yields,
\begin{equation}\label{eq:eq96}
E\{\Vert\, \bfv(\bfH_\bflambda ;\bflambda)\,\Vert^2\} = \int_{\RR^\nu} \Vert\, \bfv(\bfeta ;\bflambda)\,\Vert^2\,
p_{\bfH_{\bflambda}}(\bfeta \, ; \bflambda)\, d\bfeta = \tr\,[\Gamma{\,''}(\bflambda)]\, .
\end{equation}
Eq.~\eqref{eq:eq93} can be written as
$\nabla_{\!\bflambda} p_{\bfH_{\bflambda}}(\bfeta \, ; \bflambda) = p_{\bfH_{\bflambda}}(\bfeta \, ; \bflambda)\, \bfv(\bfeta\, ;\bflambda)$.
Consequently,
\begin{align}\nonumber
&\int_{\RR^\nu} \vert \,\langle \,\nabla_{\!\bflambda} p_{\bfH_{\bflambda}}(\bfeta \, ; \bflambda)\, ,  \bflambda^{\, i+1}\! - \bflambda\,
\rangle\,\vert \, d\bfeta
= \int_{\RR^\nu} \vert \, \langle \bfv(\bfeta \, ; \bflambda)\, ,  \bflambda^{\, i+1}\! - \bflambda\,\rangle\, \vert\,
p_{\bfH_\bflambda} (\bfeta \, ; \bflambda)\,d\bfeta \\
&\leq \Vert\, \bflambda^{\,i+1}\! - \bflambda \, \Vert \!\int_{\RR^\nu} \Vert\, \bfv(\bfeta\, ;\bflambda)\, \Vert\, p_{\bfH_{\bflambda}}(\bfeta \, ; \bflambda)\,d\bfeta \,\, \leq \,
\Vert\, \bflambda^{\, i+1}\! - \bflambda \, \Vert \,\left ( \int_{\RR^\nu} \! p_{\bfH_{\bflambda}}(\bfeta \, ; \bflambda)\,d\bfeta\right )^{1/2} \! \left ( \int_{\RR^\nu} \Vert\, \bfv(\bfeta\, ;\bflambda)\, \Vert^2 p_{\bfH_{\bflambda}}(\bfeta \, ; \bflambda)\,d\bfeta\right )^{1/2} .
\end{align}
Since $\int_{\RR^\nu} \! p_{\bfH_{\bflambda}}(\bfeta \, ; \bflambda)\,d\bfeta = 1$ and using Eq.~\eqref{eq:eq96}, we obtain
\begin{equation}\label{eq:eq97}
\int_{\RR^\nu} \vert \,\langle \,\nabla_{\!\bflambda} p_{\bfH_{\bflambda}}(\bfeta \, ; \bflambda)\, ,  \bflambda^{\, i+1}\! - \bflambda\,
\rangle\,\vert \, d\bfeta \,\, \leq \, \left ( \tr\,[\Gamma{\,''}(\bflambda)] \right )^{1/2}\,
\Vert\, \bflambda^{\, i+1}\! - \bflambda \, \Vert \, .
\end{equation}
From Eq.~\eqref{eq:eq90}, it can be deduced that
$\int_{\RR^\nu} \vert\, p_{\bfH_{\bflambda^{\,i+1}}} (\bfeta \, ; \bflambda^{\, i+1}) - p_{\bfH_\bflambda}(\bfeta \, ; \bflambda)\, \vert\, d\bfeta
\, \,\, \leq \,\, \int_{\RR^\nu} \vert \,\langle \,\nabla_{\!\bflambda} p_{\bfH_{\bflambda}}(\bfeta \, ; \bflambda)\, ,  \bflambda^{\, i+1}\! - \bflambda\, \rangle\,\vert \, d\bfeta  + o\,(\Vert\, \bflambda^{\, i+1} - \bflambda^{\,i}\,\Vert )$
that yields  Eq.~\eqref{eq:eq89} by using  Eq.~\eqref{eq:eq97}.
\end{proof}
\section{A few numerical elements for implementation of the methodology}
\label{sec:Section4}
\subsection{Choice of the integration scheme for solving the ISDE introduced in Propositions~\ref{proposition:3} and \ref{proposition:4}}
\label{sec:Section4.1}
As we have previously explained, for $\bflambda\in\curC_{\ad,\bflambda}$, the ISDE defined by Eqs.~\eqref{eq:eq67} to \eqref{eq:eq69}, must be solved for $t\in[0\, , t_s]$ (see Proposition~\ref{proposition:3}-(e)) with the initial condition at $t=0$ defined in Proposition~\ref{proposition:3}-(a), in order to generate the constrained learned set $\curD_{\bfH_{\!\bflambda}} = \{\bfeta_{\bflambda}^1,\ldots , \bfeta_{\bflambda}^N \}$ with $N\gg N_d$.  Therefore, a discretization scheme \cite{Kloeden1992,Talay1990}
must be used. The case of Hamiltonian dynamical systems has been analyzed in \cite{Talay2002} by using an implicit Euler scheme. The St\"ormer-Verlet scheme (see \cite{Hairer2003} for the deterministic case and \cite{Burrage2007} for the stochastic case) is a very efficient scheme that allows for having a long-time energy conservation for non-dissipative Hamiltonian dynamical systems.
In \cite{Soize2012b}, we have proposed to use an extension of the St\"ormer-Verlet scheme for stochastic dissipative Hamiltonian systems, that we have also used in \cite{Guilleminot2013a, Soize2015,Soize2016,Soize2020a}.
\subsection{St\"ormer-Verlet scheme and computation of the constrained learned set $\curD_{\bfH_{\bflambda^{\,i}}}$}
\label{sec:Section4.2}
Let $i$ be the index of the sequence $\{\bflambda^{\, i},i=0,1,\ldots, i_\pmax\}$ of Lagrange multipliers computed using Eq.~\eqref{eq:eq85} with $\bflambda^{\,0} = \bfzero_{n_c}$. Let $t_m = m\, \Delta t$ for $m=0,1,\ldots , M_s$ (with $M_s > 1$ an integer) be the time sampling in which $t_s = M_s \, \Delta t$ (and thus $t_{M_s} = t_s$).
Let $\Delta\bfW_{m+1}^{\wien} = \bfW^\wien(t_{m+1}) - \bfW^\wien(t_{m})$ be the Gaussian, second-order, centered, $\RR^\nu$-valued random variable  such that $E\{\Delta\bfW_{m+1}^{\wien}\otimes \Delta\bfW_{m+1}^{\wien}\} = \Delta t \, [I_\nu]$. Let $\{\theta_\ell, \ell=1,\ldots , N\}$ be $N$ independent realizations in $\Theta$. For $m=0,1,\ldots, M_s-1$, let $\Delta\WW_{m+1}^\ell = \Delta\bfW_{m+1}^{\wien}(\theta_\ell)$ be the realization $\theta_\ell$ of $\Delta\bfW_{m+1}^{\wien}$.
Following the choice of $(\bfu_0,\bfv_0)$ defined in Proposition~\ref{proposition:3}-(a), let $\bfu_0^1,\ldots, \bfu_0^N$ in $\RR^\nu$ such that $\forall\ell\in\{1,\ldots , N\}$, $\bfu_0^\ell = \bfeta_d^{j_\ell}$ in which $j_\ell\in\{1,\ldots , N_d\}$ is randomly drawn from the set $\{1,\ldots , N_d\}$ according to a uniform probability measure. Let $\bfv_0^1,\ldots ,\bfv_0^N$ in $\RR^\nu$ be $N$ independent realizations of the $\RR^\nu$-valued random variable $\bfV_G$ also defined in Proposition~\ref{proposition:3}-(a). Note that the realizations
$\Delta\WW_{m+1}^\ell$, $\bfu_0^\ell$, and $\bfv_0^\ell$, for $\ell=1,\ldots , N$ are independent of $\{\bflambda^{\,i}\}_i$.
For $i\in\{0,1,\ldots , i_\pmax\}$ and for $\ell\in\{1,\ldots , N\}$, we introduce the realizations
$\UU_m^{i,\ell} = \bfU_{\!\bflambda^{\, i}}^N(t_m\, ; \theta_\ell)$ and
$\VV_m^{i,\ell} = \bfV_{\!\bflambda^{\, i}}^N(t_m\, ; \theta_\ell)$.
For $m\in\{0,1,\ldots , M_s -1\}$, the St\"ormer-Verlet scheme applied to realization $\theta_\ell$ of  Eqs.~\eqref{eq:eq67} to \eqref{eq:eq69} yields the following recurrence,
\begin{align}
&\UU_{m+1/2}^{i,\ell}   =\UU_m^{i,\ell}  + \frac{\Delta t}{2}\, \VV_m^{i,\ell} \, , \label{eq:eq98}\\
&\VV_{m+1}^{i,\ell}     =\frac{1-\gamma}{1+\gamma}\, \VV_m^{i,\ell} + \frac{\Delta t}{1+\gamma}\,
                           \hat\bfL^N_{\bflambda^{\, i-1}}(\UU_{m+1/2}^{i,\ell})
                            + \frac{\sqrt{f_0}}{1+\gamma}\, \Delta\WW_{m+1}^\ell   \, , \label{eq:eq99} \\
&\UU_{m+1}^{i,\ell}     = \UU_{m+1/2}^{i,\ell} + \frac{\Delta t}{2} \, \VV_{m+1}^{i,\ell}   \, , \label{eq:eq100}
\end{align}
with the initial condition
\begin{equation}\label{eq:eq101}
\UU_{0}^{i,\ell}   =  \bfu_0^\ell \quad , \quad \VV_{0}^{i,\ell}   =  \bfv_0^\ell  \, ,
\end{equation}
in which $\gamma =f_0\, \Delta t /4$ and where, using Eq.~\eqref{eq:eq66},
\begin{equation}\label{eq:eq102}
\hat\bfL^N_{\bflambda^{\, i-1}}(\bfu) = \frac{1}{\zeta(\bfu)}\,\nabla_{\!\bfu}\zeta(\bfu) - [\nabla_{\!\bfu}\hat\bfh^N(\bfu\, ; \bflambda^{\, i-1})]\, \bflambda^{\, i-1}
\end{equation}
The matrix $[\nabla_{\!\bfu}\hat\bfh^N(\bfu\, ; \bflambda^{\, i-1})]$ is given by Eq.~\eqref{eq:eq104bis}, which depends on
\begin{equation}\label{eq:eq103}
\curD_{\bfH_{\!\bflambda^{\,i-1}}} = \{\bfeta_{\bflambda^{\,i-1}}^1,\ldots , \bfeta_{\bflambda^{\,i-1}}^N \} \, .
\end{equation}
It should be noted that, in Eq.~\eqref{eq:eq99}, $\hat\bfL^N_{\bflambda^{\, i-1}}$ has been used instead of $\hat\bfL^N_{\bflambda^{\, i}}$
because $\hat\bfL^N_{\bflambda^{\, i}}$ depends on $\curD_{\bfH_{\!\bflambda^{\,i}}}$ that is unknown, recalling that the aim of the recurrence defined by  Eqs.~\eqref{eq:eq98} to \eqref{eq:eq101} is precisely to calculate   $\curD_{\bfH_{\!\bflambda^{\,i}}}$ that is written as
\begin{equation}\label{eq:eq104}
\curD_{\bfH_{\!\bflambda^{\,i}}} = \{\bfeta_{\bflambda^{\,i}}^1,\ldots , \bfeta_{\bflambda^{\,i}}^N \} \quad , \quad
\bfeta_{\bflambda^{\,i}}^\ell = \bfU_{\!\bflambda^{\, i}}^N(t_s\, ; \theta_\ell) = \UU_{M_s}^{i,\ell} \, .
\end{equation}
\subsection{Explicit expression  of the gradient of the statistical surrogate model of $\bfh^c$}
\label{sec:Section4.3}
Using Definition~\ref{definition:3}, for fixed value of $\bflambda\in\curC_{\ad,\bflambda}$, the gradient $[\nabla_{\!\bfeta} \hat\bfh^N(\bfeta\,;\bflambda)]\in\MM_{\nu,n_c}$ at point $\bfeta\in\RR^\nu$ of the statistical surrogate model $\hat\bfh^N$ of $\bfh^c$ can be written as
\begin{equation}\label{eq:eq104bis}
[\nabla_{\!\bfeta} \hat\bfh^N(\bfeta\,;\bflambda)] = \sum_{\ell=1}^N \bfgamma_\bflambda^\ell\otimes \bfa_\bflambda^\ell \quad , \quad
\bfgamma_\bflambda^\ell = \nabla_{\!\bfeta} \left ( \frac{\beta_\bfeta^N(\bfeta_\bflambda^\ell)}{\sum_{\ell'=1}^N\beta_\bfeta^N(\bfeta_\bflambda^{\ell'})} \right )\, ,
\end{equation}
in which $\bfa_\bflambda^\ell = \bfh^c(\bfeta_\bflambda^\ell) \, \in \, \RR^{n_c}$ and where $\bfgamma_\bflambda^\ell \in\RR^\nu$ is explicitly calculated using Eqs.~\eqref{eq:eq39} and \eqref{eq:eq40}.
\subsection{Summary of the complete algorithm}
\label{sec:Section4.4}
The algorithm for calculating $\bflambda^\psol$ and $\curD_{\bfH^c} = \{ \bfeta_c^1,\ldots,\bfeta_c^N \}$ with
$\bfeta_c^\ell = \bfeta_{\bflambda^\psol}^\ell$ for $\ell=1,\ldots, N$ is summarized in Algorithm~\ref{algorithm:1}.
%
%
\begin{algorithm}
\caption{Algorithm for calculating $\bflambda^\psol$ and $\curD_{\bfH^c} = \{ \bfeta_c^1,\ldots,\bfeta_c^N \}$.}
\label{algorithm:1}
\begin{algorithmic}[1]
\State{\textbf{Data:}$N_d$, $\curD_d = \{ \bfeta_d^1,\ldots,\bfeta_d^{N_d} \}$, $N$, $i_\pmax$, $M_s$, $t_s$, $\Delta t$, $f_0$, $\gamma=f_0\,\Delta t / 4$}
\State{\textbf{Init:} $\,\,\Delta\WW_{m+1}^{\ell}, \ell \in\{ 1,\ldots , N \}, m\in \{1,\ldots , M_s-1\}$,
                  $\,\,\bfu_0^\ell$ and $\bfv_0^\ell$ for $\ell\in\{1,\ldots , N\}$, $\bflambda^{\,0} = \bfzero_{n_c}$ }
\For{$i=1:i_\pmax$}
    \For{$\ell=1:N\, (loop \, in \,  parallel \, computation)$}
    \State{$\curD_{\bfH_{\!\bflambda^{\,i}}} = \{\bfeta_{\bflambda^{\,i}}^1,\ldots , \bfeta_{\bflambda^{\,i}}^N \}$ from Eq.~\eqref{eq:eq104}, using Eqs.~\eqref{eq:eq98} to \eqref{eq:eq101} and $\curD_{\bfH_{\!\bflambda^{\, i-1}}}$ ($\curD_{\bfH_{\!\bflambda^{\, 0}}}$ not used for $i=1$) }
    \EndFor
    \For{$\ell=1:N\, (loop \, in \,  parallel \, computation)$}
    \State{$\bfh^c(\bfeta_{\bflambda^{\, i}}^\ell), \ell=1,\ldots , N$ using the BVP}
    \EndFor
    \State{$\nabla_{\!\bflambda}\Gamma(\bflambda^{\, i})$ and $[\Gamma{\,''}(\bflambda^{\, i})]$ using
           Eqs.~\eqref{eq:eq29} to \eqref{eq:eq30} and  $\curD_{\bfH_{\!\bflambda^{\, i}}}$ }
    \State{$\error(i)$ using Eqs.~\eqref{eq:eq87} with \eqref{eq:eq88} }
    \State{$\bflambda^{\, i+1} = \bflambda^{\, i} - \alpha_{\prelax}\, [\Gamma{\,''}(\bflambda^{\, i})]^{-1}\, \nabla_{\!\bflambda}\Gamma(\bflambda^{\, i})$ using  Eq.~\eqref{eq:eq85} with a relaxation factor $\alpha_{\prelax}\in ]0\, , 1]$ }
    \State{$\bflambda^{\, i} \leftarrow \bflambda^{\, i+1}$}
    \State{$\curD_{\bfH_{\!\bflambda^{\,i-1}}} \leftarrow \curD_{\bfH_{\!\bflambda^{\,i}}}$ }
\EndFor
\State{$\bflambda^\psol = \bflambda^{i_\ppsol}$, $i_\psol = \arg\,\min_i\error(i)\,\,$ from Eq.~\eqref{eq:eq86}}
\State{$\curD_{\bfH^c} \leftarrow \curD_{\bfH_{\bflambda^\ppsol}}$}
\end{algorithmic}
\end{algorithm}
\section{Application to stochastic homogenization without scale separation}
\label{sec:Section5}
In this section, we consider the stochastic boundary value problem associated with the stochastic homogenization of a random elastic medium without scale separation, which has been presented in Section~\ref{sec:Section1}. The physical space $\RR^3$ is referred to a Cartesian reference system whose the generic point is $\bfxi = (\xi_1,\xi_2,\xi_3)$. We consider the stochastic homogenization  of a heterogeneous  linear elastic microstructure occupying the 3D bounded open domain $\Omega = ]\, 0\, , 1\, [\times  ]\, 0\, , 1\, [\times ]\, 0\, , 0.1\, [\subset\RR^3$ (square thick plate) with boundary $\partial\Omega$. The homogenization method on $\Omega$ in the one proposed in \cite{Bornert2008} that we have already used in \cite{Soize2008,Soize2021c}. In this section, we use the convention for summation over repeated Latin indices taking values in $\{1,2,3\}$.
\subsection{Stochastic elliptic boundary value problem}
\label{sec:Section5.1}
For all $m$ and $r$ in $\{1,2,3\}$ the unknown field is the $\RR^3$-valued random field
$\{ \bfY(\bfxi) = (Y_1(\bfxi),Y_2(\bfxi),Y_3(\bfxi)),\bfxi\in\Omega\}$ defined on $(\Theta,\curT,\curP)$, indexed by $\Omega$, such that for $i=1,2,3$, and almost surely,
\begin{equation}\label{eq:eq105}
-\frac{\partial}{\partial\xi_j} \left (\CC_{ijpq}(\bfxi)\, \varepsilon_{pq}(\bfY^{mr}(\bfxi))\right ) = \bfzero_3\quad , \quad \forall\bfxi\in\Omega\, ,
\end{equation}
\begin{equation}\label{eq:eq106}
\bfY^{mr}(\bfxi) = \bfy_0^{mr}\quad , \quad \forall\bfxi\in\partial\Omega\, ,
\end{equation}
in which the strain tensor is $\varepsilon_{pq}(\bfy) = (\partial y_p / \partial \xi_q +\partial y_q / \partial \xi_p)/2$
for all $\bfy = (y_1,y_2,y_3)$. For all $\bfxi\in \partial\Omega$,
$\bfy_0^{mr} = ( y_{0,1}^{mr}, y_{0,2}^{mr}, y_{0,3}^{mr})$ is defined by
\begin{equation}\label{eq:eq107}
y_{0,j}^{mr} = (\delta_{jm}\,\xi_r +\delta_{jr}\,\xi_m)/2\quad , \quad j\in\{1,2,3\} \, ,
\end{equation}
in which $\delta_{jm}$ is the Kronecker symbol. At mesoscale, the linear  elastic heterogeneous medium is described  by the random apparent elasticity field $\{\CC(\bfxi),\bfxi\in\RR^3\}$, which is a non-Gaussian fourth-order tensor-valued random field
$\CC= \{ \CC_{ijpq} \}_{ijpq}$ with $i$, $j$, $p$, and $q$ in $\{1,2,3\}$, defined on $(\Theta,\curT,\curP)$.
The stochastic homogenization consists, for $i$, $j$, $m$, and $r$ in $\{1,2,3\}$, in analyzing at macroscale the component $\CC^\peff_{ijmr}$ of the random effective elasticity tensor $\{\CC^\peff_{ijmr}\}_{ijmr}$, which is defined by
\begin{equation}\label{eq:eq108}
\CC^\peff_{ijmr} = \frac{1}{\vert\Omega\vert} \int_\Omega \CC_{ijpq}(\bfxi)\, \varepsilon_{pq}(\bfY^{mr}(\bfxi)) \, d\bfxi \, ,
\end{equation}
in which $\bfY^{mr}$ is the $\RR^3$-valued random field that satisfies Eqs.~\eqref{eq:eq105} to \eqref{eq:eq107} and where
$\vert\Omega\vert = \int_\Omega d\bfxi$. The random effective elasticity tensor $\CC^\peff$ is symmetric and positive definite almost surely. If there was a scale separation, then the statistical fluctuations of this  tensor would be negligible.
\subsection{Prior probability model of random field $\CC$}
\label{sec:Section5.2}
The prior probability model  of $\CC$ used for generating the training set is the one presented  in \cite{Soize2021c,Soize2021d}. This is a second-order, non-Gaussian, positive-definite fourth-order tensor-valued  homogeneous random field, indexed by $\RR^3$, defined on $(\Theta,\curT,\curP)$, with a spectral random measure. This random field is parameterized as
\begin{equation}\label{eq:eq109}
\CC(\bfxi) = \cc(\bfG(\bfxi),\underline\bfz) \quad , \quad \bfxi\in\Omega\, ,
\end{equation}
in which $\{\bfG(\bfxi),\bfxi\in\RR^3\}$ is a non-Gaussian second-order, homogeneous, $\RR^{21}$-valued random field indexed by $\RR^3$, defined on $(\Theta,\curT,\curP)$, with random spectral measure, where
$\underline\bfz =(\underline z_1, \underline z_2,\underline z_3)$ is the nominal value of a $\RR^3$-valued control parameter,
and where $\cc$ is a given mapping from $\RR^{21}\times\RR^3$ into the fourth-order tensor on $\RR^3$.\\

\noindent (i) \textit{Isotropic mean model at mesoscale}. At mesoscale, the mean model is a linear, elastic, homogeneous, isotropic medium whose elasticity tensor $\underline\CC$ depends only on the Young modulus $\underline E = 1.7\times 10^{11}\, N/m^2)$ and  Poisson coefficient $\underline\nu_P = 0.24$. The corresponding bulk modulus $\underline C_{\,\bulk} = \underline E \, /(3(1-2\underline \nu))$ and
shear modulus $\underline C_{\,\shear} = \underline E \, /(2(1+\underline \nu))$ are $1.08974\times 10^{11}\, N/m^2$ and $6.85484\times 10^{10}\, N/m^2$.\\

\noindent (ii) \textit{Anisotropic statistical fluctuations at mesoscale}. The statistical fluctuations of the random medium are assumed to be  anisotropic, which means that, for all $\bfxi$ fixed in $\RR^3$, the random apparent elasticity tensor $\CC(\bfxi)$ is a full anisotropic tensor. The hyperparameters that control the  anisotropic statistical fluctuations of the random apparent elasticity field (see \cite{Soize2021c}) are:

\noindent (1) the dispersion coefficient $\underline\delta_{\,\CC} = 0.3$ that controls the level of statistical fluctuations of the random medium.

\noindent (2) the spatial correlation lengths $\underline L_{\,c1}$, $\underline L_{\,c2}$, and $\underline L_{\,c3}$ (for directions $\xi_1$, $\xi_2$, and $\xi_3$) of the random field $\{\bfG(\bfxi),\bfxi\in\RR^3\}$ and the dispersion coefficient $\underline\delta_{\,s} = 0.1$ that controls the level of uncertainties of its spectral measure (these spectral-measure uncertainties will not be controlled, which means that $\underline\delta_{\, s}$ will not be a control parameter and its value is fixed). As explained at the end of Section~\ref{sec:Section1}, three cases, SC1, SC2, and SC3, of the correlation lengths are considered for analyzing the level of scale separation and are defined in Table~\ref{table:table1}. Taking into account the definition of domain $\Omega$ and the values of the spatial correlation lengths, there will not have a scale separation and consequently, the effective elasticity tensor will exhibit statistical fluctuations.\\
\begin{table}[h]
  \caption{Values of the spatial correlation lengths $\underline L_{\,c1}$, $\underline L_{\,c2}$, and $\underline L_{\,c3}$ for cases SC1, SC2, and SC3 of scale separation.}\label{table:table1}
\begin{center}
 \begin{tabular}{|c|c|c|c|} \hline
    & $\underline L_{\,c1}$ & $\underline L_{\,c2}$ & $\underline L_{\,c3}$ \\
    \hline
    SC1   &  0.1 & 0.1 & 0.1   \\
    SC2   &  0.3 & 0.3 & 0.1   \\
    SC3   &  0.5 & 0.5 & 0.2  \\
    \hline
  \end{tabular}
\end{center}
\end{table}

\noindent (iii) \textit{Nominal value $\underline\bfz$ of the control parameter}. It is defined by
$\underline z_1 = \underline C_{\,\bulk}$, $\underline z_2 = \underline C_{\,\shear}$, and $\underline z_3= \underline\delta_{\, \CC}$.\\

\noindent (iv) \textit{Random control parameter for the probabilistic learning inference}. For estimating the posterior model using the probabilistic learning inference methodology, in addition to the prior probability model of random field $\bfG$, we introduce a $\RR^{n_w}$-valued random control parameter $\bfW=(W_1,\ldots, W_{n_w})$ defined on $(\Theta,\curT,\curP)$ and independent of $\bfG$ such that, $n_w=3$ and
\begin{equation}\label{eq:eq110}
W_1 = \log C_\bulk \quad , \quad W_2 = \log C_\shear \quad , \quad W_3 = \log \delta_\CC \, ,
\end{equation}
in which (a) $C_\bulk$ and $C_\shear$ are Gamma independent random variables (see \cite{Guilleminot2013}) whose mean values are the nominal values $\underline C_{\,\bulk}$ and $\underline C_{\,\shear}$ previously defined and for which the coefficient of variation of $C_\bulk$ is chosen as $\delta_\bulk = 0.5$ yielding $\delta_\shear = 0.25$ (note that, with the model proposed in \cite{Guilleminot2013}, $\delta_\shear$ is deduced from $\delta_\bulk$ and cannot arbitrarily be chosen); (b) $\delta_\CC$ is chosen as a uniform random variable on $[0.1\, , 0.5]$ whose mean values is $\underline\delta_{\, \CC}$.\\

\noindent It should be noted that the statistical fluctuations of $C_\bulk$, $C_\shear$, and $\delta_\CC$ are chosen sufficiently large in order that the range of fluctuations of the random effective elasticity tensor covers the experimental target (see Section~\ref{sec:Section5.5}) in order to be able to improve the prior probabilistic model with the posterior probabilistic model by solving the inverse statistical problem with the proposed probabilistic learning inference approach.
\subsection{Solution of the stochastic BVP}
\label{sec:Section5.3}
Under the hypotheses introduced for constructing random field $\CC$, Proposition~5.1 of \cite{Soize2021d} proves that for $1\leq m\leq r \leq 3$, the strong stochastic solution $\{\bfY^{mr}(\bfxi),\bfxi\in\Omega\}$ of the weak formulation of the stochastic elliptic BVP defined by Eqs.~\eqref{eq:eq105} to \eqref{eq:eq107} exists, is unique, and is a second-order random field,
\begin{equation}\label{eq:eq110bis}
E\{\,\Vert\, \bfY^{mr}(\bfxi)\,\Vert^2 \} < + \infty \quad ,\quad \forall \bfxi\in\, \Omega \, .
\end{equation}
Due to Corollary~5.1  of \cite{Soize2021d} and its proof, the random effective elasticity tensor $\CC^\peff$ is a second-order random variable,
\begin{equation}\label{eq:eq110ter}
\sum_{1\leq i\leq j\leq 3} \sum_{1\leq m\leq r \leq 3} E\{ (\CC^\peff_{ijmr})^2 \} < + \infty \, .
\end{equation}
\subsection{Stochastic computational model and random effective elasticity matrix}
\label{sec:Section5.4}
The finite element method is used for discretizing the weak formulation of the stochastic BVP. The finite element mesh of domain $\overline\Omega$  is made up of $60 \times 60 \times 6 = 21\, 600$ eight-nodes solid elements, $26\, 047$ nodes, and $78\, 141$ dofs
($25\,926$ Dirichlet conditions on $\partial\Omega$ and $n_y = 52\, 215$ the remaining dofs). There are $2^3$ integration points in each finite element, which yields $n_p= 172\, 800$ integration points for the spatial discretization of the random fourth-order tensor-valued elasticity field $\{\CC(\bfxi),\bfxi\in\Omega\}$. The discretization of random field $\CC$ is expressed as a function of a $\RR^{n_g}$-valued random variable $\bfcurG$ corresponding to the spatial discretization of random field $\bfG$ with $n_g= 21\times n_p = 3\, 628\, 800$. For $1\leq m\leq r \leq 3$, let $\bfcurY^{m r}$ be the $\RR^{n_y}$-valued random variable of the free dofs (that corresponds to the dofs of the nodes inside $\Omega$ of  the finite element discretization of random field $\{\bfY^{mr}(\bfxi),\bfxi\in\overline\Omega\}$). Therefore, the stochastic computational model can be written as the $\RR^{n_y}$-valued stochastic equation,
\begin{equation}\label{eq:eq111}
\bfcurN^{mr}(\bfcurY^{mr}\! , \bfcurG ,\bfW) = \bfzero_{n_y} \, \, a.s. \, ,
\end{equation}
which is a stochastic linear equation that can be rewritten as
$[a^{mr}(\bfcurG ,\bfW)] \, \bfcurY^{mr} - \bfb^{mr}(\bfcurG ,\bfW) = \bfzero_{n_y}$ in which, for $\bfg_d\in\RR^{n_g}$ and $\bfw_d\in\RR^{n_w}$,  $[a^{mr}(\bfg_d ,\bfw_d)]$ is a matrix in $\MM^+_{n_y}$ (thus, invertible) and where $\bfb^{mr}(\bfg_d ,\bfw_d)$ is a vector in $\RR^{n_y}$, which depends on the Dirichlet condition defined by Eq.~\eqref{eq:eq106}.
For $1\leq m\leq r\leq 3$, the finite element discretization of the right-hand side member of Eq.~\eqref{eq:eq108} yields the $\MM^+_6$-valued random effective elasticity matrix $[\CC^\peff]$ such that
$[\CC^\peff]_{\textbf{i}\textbf{j}} = \CC^\peff_{ijmr}$ in which the indices $\textbf{i}=(i,j)$ with $1\leq i\leq j\leq 3$ and $\textbf{j}=(m,r)$ with $1\leq m\leq r\leq 3$ are with values in $\{1,\ldots , 6\}$. This random matrix can be written as
\begin{equation}\label{eq:eq112}
[\CC^\peff] = [\curO(\{ \bfcurY^{mr}\! ,1\leq m\leq r\leq 3\},\bfcurG,\bfW)]\, ,
\end{equation}
in which $(\{ \bfy^{mr}\! ,1\leq m\leq r\leq 3\} ,\bfg ,\bfw)\mapsto [\curO(\{ \bfy^{mr}\! ,1\leq m\leq r\leq 3\} ,\bfg ,\bfw)]$ is a
measurable mapping from $\RR^{6\times n_y}\times \RR^{n_g}\times \RR^{n_w}$ into $\MM_6^+$. Note that, for $1\leq m\leq r\leq 3$, $\bfcurY^{mr}$ satisfying Eq.~\eqref{eq:eq111} corresponds to the strong stochastic solution of the finite element discretization of the weak formulation of the stochastic BVP (see Section~\ref{sec:Section5.3}). Due to Eqs.~\eqref{eq:eq110bis} and \eqref{eq:eq110ter}, we have,
\begin{equation}\label{eq:eq113}
E\{\,\Vert\bfcurY^{mr}\,\Vert^2\} < +\infty \quad , \quad E\{\,\Vert \,[\CC^\peff]\,\Vert^2_F\} < +\infty \, ,
\end{equation}
which proves that $\bfcurY^{mr}$ and $[\CC^\peff]$ are second-order random variables.
\subsection{Definition of the statistical moments and their targets}
\label{sec:Section5.5}
From Eq.~\eqref{eq:eq113}, the $\MM^+_6$-valued random variable  $[\CC^\peff]$ is of second-order. We can then define its first two moments.

\noindent (i) The first statistical moment of interest is the mean value
                        $[\,\underline\CC^\peff] = E\{\,[\CC^\peff]\,\} \in \MM^+_6$
of random matrix $[\CC^\peff]$ while its target counterpart is the given matrix
                        $[\,\underline\CC^\pexp] \in \MM^+_6$.
Let $\mu_\pexp = \Vert\, [\,\underline\CC^\pexp] \, \Vert_F$ be the Frobenius norm of $[\,\underline\CC^\pexp]$.
Introducing the subscript "$n$" to designate a normalization, we define the normalized quantities with respect to $\mu_\pexp$ as,
\begin{equation}\label{eq:eq114}
[\CC^\peff_n ] = \frac{1}{\mu_\pexp}\, [\CC^\peff] \quad , \quad [\,\underline\CC^\peff_{\,n}] = \frac{1}{\mu_\pexp}\,[\,\underline\CC^\peff] \quad , \quad [\,\underline\CC^\pexp_{\, n}] = \frac{1}{\mu_\pexp}\,[\,\underline\CC^\pexp]\, .
\end{equation}
The corresponding constraint equation will then be written as
\begin{equation}\label{eq:eq115}
E\{\,[\CC^\peff_n ]\,\} = [\,\underline\CC^\pexp_{\, n}] \, .
\end{equation}
\noindent (ii) The second statistical moment of interest is the coefficient of dispersion $\delta^{\,\peff}$ of random matrix $[\CC^\peff]$ and its target counterpart $\delta^{\,\pexp}$. Let $\Delta_2^\peff$ be the positive-valued random variable defined by
\begin{equation}\label{eq:eq116}
\Delta_2^\peff = \frac{1}{\Vert\, [\,\underline\CC^\peff]\,\Vert^2_F} \, \Vert\, [\CC^\peff] - [\,\underline\CC^\peff]\,\Vert^2_F \, .
\end{equation}
Consequently, $\delta^{\,\peff}$ that is defined by
$\delta^{\,\peff} = \left ( E\{ \, \Vert\, [\CC^\peff] - [\,\underline\CC^\peff]\, \Vert^2_F  /
\Vert \, [\,\underline\CC^\peff]\,\Vert^2_F \, \} \right )^{1/2}$ can be rewritten as
\begin{equation}\label{eq:eq117}
\delta^{\,\peff} = \sqrt{E\{\Delta_2^\peff\}}\, .
\end{equation}
Defining $\mu_\peff = \Vert\, [\,\underline\CC^\peff] \, \Vert_F$, the constraint equation that is defined by
\begin{equation}\label{eq:eq118}
\delta^{\,\peff} = \delta^{\,\pexp}\, ,
\end{equation}
can be rewritten, using Eqs.~\eqref{eq:eq114} to \eqref{eq:eq117}, as
\begin{equation}\label{eq:eq119}
E\{\, \Vert\, [\CC^\peff_n] - [\,\underline\CC^\peff_{\,n}]\,\Vert^2_F \,\} = \left( \frac{\mu_\peff}{\mu_\pexp}\, \delta^{\,\pexp}\right )^2\, .
\end{equation}
while Eq.~\eqref{eq:eq116} yields
\begin{equation}\label{eq:eq120}
\Delta_2^\peff =  \left(\frac{\mu_\pexp}{\mu_\peff}\right )^2\, \Vert\, [\CC^\peff_n] - [\,\underline\CC^\peff_{\,n}]\,\Vert^2_F \,\} \, .
\end{equation}
It should be noted that, if $\delta^{\,\peff}$ goes to zero, then the statistical fluctuations of $[\CC^\peff]$ goes to zero because, due to the Tchebychev inequality, $[\CC^\peff]$ goes in probability to its mean value $[\,\underline\CC^\peff]$ (this would be the case of a scale separation). For the three considered cases of scale separation, the numerical values of $\mu_\peff$ for the training set (and thus denoted by $\mu_{\peff,d}$) and $\mu_\pexp$ for the target are given in Table~\ref{table:tableA1}.
\begin{table}[h]
  \caption{For cases SC1, SC2, and SC3, values of $\mu_{\peff,d}$ and $\mu_\pexp$.}\label{table:tableA1}
\begin{center}
 \begin{tabular}{|c|c|c|c||c|c|} \hline
    & $\underline L_{\,c1}$ & $\underline L_{\,c2}$ & $\underline L_{\,c3}$ & $\mu_{\peff,d}\times 10^{11}$ & $\mu_{\pexp}\times 10^{11}$\\
    \hline
    SC1   &  0.1 & 0.1 & 0.1 & 4.2106 & 4.6317  \\
    SC2   &  0.3 & 0.3 & 0.1 & 4.1925 & 4.6549  \\
    SC3   &  0.5 & 0.5 & 0.2 & 4.1943 & 4.6706  \\
    \hline
  \end{tabular}
\end{center}
\end{table}
\subsection{Training set computed with the prior probability model and its normalization}
\label{sec:Section5.6}
The stochastic computational model defined in Section~\ref{sec:Section5.4} is used  for generating the training set related to the random variable
$\bfX =(\, \{\bfcurY^{mr}\! ,1\leq m\leq r\leq 3\},\bfcurG,\bfW)$ with  values in $\RR^{n_x} = \RR^{6\times n_y}\times\RR^{n_g}\times\RR^{n_w}$ with $n_x = 6\, n_y + n_g + n_w = 3\, 942\, 093$.
The Monte Carlo numerical simulation method is used with $N_d=50$ independent realizations and the prior probability model of $\bfcurG$ and $\bfW$. We then obtain the training set
$\{\bfx_d^j,j=1,\ldots, N_d\}$ with
$\bfx_d^j = ( \,\{\bfy_d^{mr,j}\! , 1\leq m\leq r\leq 3\},\bfg_d^j,\bfw_d^j)$ in which
$\bfy_d^{mr,j}\in\RR^{n_y}$ is the solution (see Eq.~\eqref{eq:eq111}) of
\begin{equation}\label{eq:eq121}
\bfcurN^{mr} (\bfy_d^{mr,j},\bfg_d^j,\bfw_d^j) = \bfzero_{n_y}\, ,
\end{equation}
that is to say of the linear equation $[a^{mr}(\bfg_d^j,\bfw_d^j)] \, \bfy_d^{mr,j} = \bfb^{mr}(\bfg_d^j,\bfw_d^j)$ (see Section~\ref{sec:Section5.4}).
The statistical moments defined in Section~\ref{sec:Section5.5} can then be computed using the $N_d$ independent realizations $\{\,[\CC_d^{\peff,j}],j=1,\ldots , N_d\}$ of the $\MM_6^+$-valued random matrix $[\CC_d^\peff]$ defined by Eq.~\eqref{eq:eq112} (a subscript "d" is introduced to designate the computation done with the training set, based on the prior probability model).
Since $N_d\ll n_x$, the normalization of $\bfX$ is performed  using the principal component analysis as follows.
Let $\bfx_{\rm{ctr}}^j = \bfx_d^j - \underline\bfx$ with $\underline\bfx  = (1/N_d)\, \sum_{j=1}^{N_d} \bfx_d^j \in \RR^{n_x}$.
Let $[x_{\rm{ctr}}] = [\bfx_{\rm{ctr}}^1 \ldots \bfx_{\rm{ctr}}^{N_d}]$ be the matrix in $\MM_{n_x,\, N_d}$ and let $[\Phi]\,[S]\,[\Psi]^T = [x_{\rm{ctr}}]$ be the thin SVD \cite{Golub1993} (economy size SVD) of matrix $[x_{\rm{ctr}}]$. The diagonal entries of diagonal matrix $[S]$ are the singular values  $S_1\geq \ldots \geq S_{N_d-1} > S_{N_d} = 0$ that are in decreasing order  and  we have $S_{N_d} = 0$. The matrix $[\Phi]$ is in $\MM_{n_x,\nu}$ with $\nu=N_d-1$ and $[\Phi]^T\, [\Phi]=[I_\nu]$. Then random vector $\bfX$ can be written as
\begin{equation}\label{eq:eq122}
\bfX = \underline\bfx + [\Phi] \, [\kappa]^{1/2}\, \bfH \, ,
\end{equation}
in which $[\kappa]$ is the diagonal matrix such that $\kappa_\alpha = [\kappa]_{\alpha\alpha} = S_\alpha^2/(N_d-1)$, and where $\bfH$ is the $\RR^\nu$-valued random variable whose $N_d$ independent realizations are computed by
\begin{equation}\label{eq:eq123}
\bfeta_d^j = [\kappa]^{-1/2}\,[\Phi]^T (\bfx_d^j - \underline \bfx) \quad , \quad j=1,\ldots , N_d \, .
\end{equation}
Note that $\{\kappa_\alpha\}_\alpha$ are the eigenvalues of the covariance matrix of $\bfX$ estimated with $\{\bfx_d^1,\ldots , \bfx_d^{N_d}\}$.
Random vector $\bfH$ is then normalized. The empirical estimation of its mean value and its covariance matrix are given by Eq.~\eqref{eq:eq10} (centered and identity matrix). From Eq.~\eqref{eq:eq122}, it can be deduced that, for $1\leq m \leq r\leq 3$, we have
\begin{align}
 \bfcurY^{mr}  = \underline\bfy^{mr} + [\Phi_y^{mr}] \, [\kappa]^{1/2} \, \bfH \quad & , \quad \underline\bfy^{mr}
                \in \RR^{n_y} \quad , \quad [\Phi_y^{mr}] \in\MM_{n_y,\nu}                                        \, , \label{eq:eq124} \\
 \bfcurG  = \underline \bfg + [\Phi_g]\,  \, [\kappa]^{1/2} \, \bfH \quad & , \quad \underline\bfg \in \RR^{n_g}
                 \quad , \quad [\Phi_g] \in\MM_{n_g,\nu}                                                     \, , \label{eq:eq125} \\
 \bfW  = \underline \bfw + [\Phi_w]\,  \, [\kappa]^{1/2} \, \bfH \quad & , \quad \underline\bfw \in \RR^{n_w}
                 \quad , \quad [\Phi_w] \in\MM_{n_w,\nu}                                                     \, . \label{eq:eq126}
\end{align}
The training set $\curD_d$ introduced in Definition~\ref{definition:1} is written as
\begin{equation}\label{eq:eq127}
\curD_d = \{\bfeta_d^1,\ldots ,\bfeta_d^{N_d}\} \quad ,\quad \bfeta_d^j \in\RR^\nu\, .
\end{equation}
Figure~\ref{fig:figure1} displays the distribution of the eigenvalues $\kappa_\alpha$ for the 3 cases, SC1, SC2, and SC3 of scale separation, computed with the training set and the prior probability model.
\begin{figure}[h]
\centering
\includegraphics[width=5.5cm]{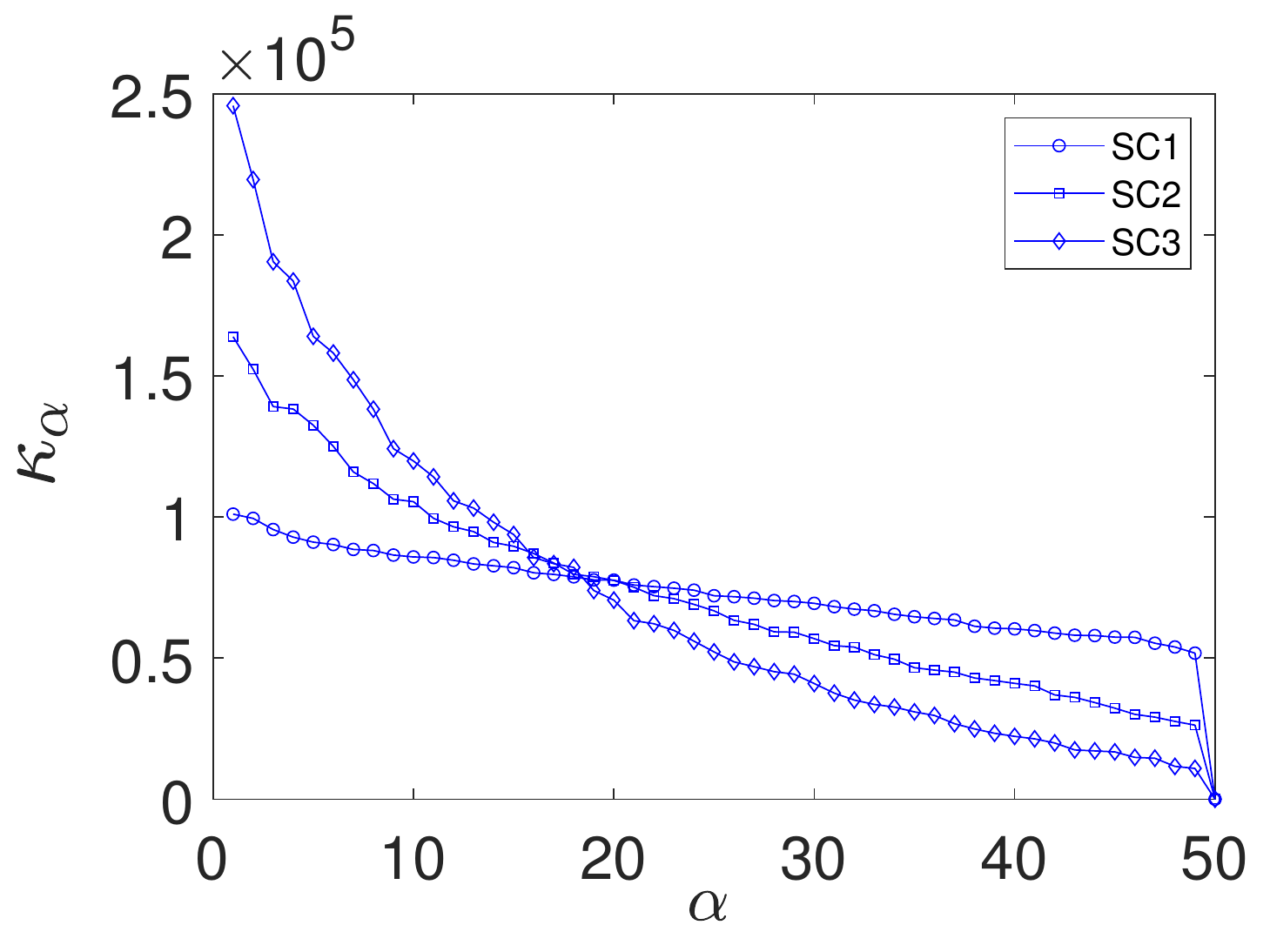}
\caption{For cases, SC1, SC2, and SC3, distribution of the eigenvalues $\kappa_\alpha$.}
\label{fig:figure1}
\end{figure}
\subsection{Definition of the random normalized residue induced by the use of the constrained learned set}
\label{sec:Section5.7}
The construction of the random normalized residue is based on a similar approach of the one that has been used in \cite{Soize2021a}.
Let us consider the constrained learned set
$\curD_{\bfH_{\bflambda^{\,i}}} = \{ \bfeta_{\bflambda^{\,i}}^1,\ldots , \bfeta_{\bflambda^{\,i}}^N \}$ generated with Algorithm~\ref{algorithm:1} for iteration $i$. Using Eqs.~\eqref{eq:eq124} to \eqref{eq:eq126}, for $\ell\in\{1,\ldots , N\}$, the corresponding realizations
$\bfy_{\bflambda^{\,i}}^{mr,\ell} \in\RR^{n_y}$ for $1\leq m\leq r\leq 3$, $\bfg_{\bflambda^{\,i}}^{\ell}\in\RR^{n_g}$, and
$\bfw_{\bflambda^{\,i}}^{\ell}\in\RR^{n_w}$, are computed by
\begin{align}
 \bfy_{\bflambda^{\,i}}^{mr,\ell} & = \underline\bfy^{mr} + [\Phi_y^{mr}] \, [\kappa]^{1/2} \,
                                                                                \bfeta_{\bflambda^{\,i}}^\ell  \, , \label{eq:eq128} \\
 \bfg_{\bflambda^{\,i}}^{\ell}  & = \underline \bfg + [\Phi_g]\,  \, [\kappa]^{1/2} \,
                                                                                \bfeta_{\bflambda^{\,i}}^\ell  \, , \label{eq:eq129} \\
 \bfw_{\bflambda^{\,i}}^{\ell}  & = \underline \bfw + [\Phi_w]\,  \, [\kappa]^{1/2} \,
                                                                                \bfeta_{\bflambda^{\,i}}^\ell  \, . \label{eq:eq133}
\end{align}
For $1\leq m\leq r\leq 3$ and for $\ell\in\{1,\ldots , N\}$ the realization $\bfcurR_{\bflambda^{\,i}}^{\,mr,\ell}$ of the $\RR^{n_y}$-valued random residue are computed using Eq.~\eqref{eq:eq111} and Eqs.~\eqref{eq:eq128} to \eqref{eq:eq133},
$\bfcurR_{\bflambda^{\,i}}^{\,mr,\ell} = \bfcurN^{mr}(\bfy_{\bflambda^{\,i}}^{mr,\ell},\bfg_{\bflambda^{\,i}}^{\ell}, \bfw_{\bflambda^{\,i}}^{\ell})$.
We define the realization $\hat\rho_{\bflambda^{\,i}}^\ell$ of the random residue $\hat\rho_{\bflambda^{\,i}}$ by
\begin{equation}\label{eq:eq135}
\hat\rho_{\bflambda^{\,i}}^\ell  =\frac{1}{\sqrt{6\, n_y}} \left ( \sum_{1\leq m\leq r \leq 3}
                    \Vert\, \bfcurR_{\bflambda^{\,i}}^{\,mr,\ell} \,\Vert^2 \right )^{1/2} \, .
\end{equation}
Finally, we define the realization $\rho_{\bflambda^{\,i}}^\ell$ of the random normalized residue $\rho_{\bflambda^{\,i}}$ by
\begin{equation}\label{eq:eq136}
\rho_{\bflambda^{\,i}}^\ell  =\frac{\hat\rho_{\bflambda^{\,i}}^\ell }{\hat{\underline\rho}_{\,0}}\, ,
\end{equation}
in which $\hat{\underline\rho}_{\,0}$ is the estimation of $E\{\,\rho_{\bflambda^{1}}\}$ using the constrained learned set
$\curD_{\bfH_{\bflambda^{1}}}$ of the first iteration $i=1$. Since for $i=1$, $\bflambda^{\, i-1} = \bflambda^{\,0} = \bfzero_{n_c}$,
 $ \rho_{\bflambda^{1}}$ is the random normalized residue of the constrained learned set without taking into account the constraints and we have, for $i=1$,
$E\{\, \rho_{\bflambda^{1}}\} = {E\{\,\hat\rho_{\bflambda^{1}}\} }/{\hat{\underline\rho}_{\,0}} = 1$.
\subsection{Defining function $\bfh^c$ related to the constraints and defining the targets represented by $\bfb^c$}
\label{sec:Section5.8}
We define the function $\bfeta\mapsto\bfh^c(\bfeta): \RR^\nu\rightarrow \RR^{n_c}$ related to the constraints (see Eq.~\eqref{eq:eq2}) and we define the target represented by vector $\bfb^c$ given in $\RR^{n_c}$. Three constraints are introduced, the second-order moment of the random normalized residue (see Section~\ref{sec:Section5.7}) and two statistical moments: the normalized mean value of the random effective elasticity matrix $[\CC^\peff]$ and its coefficient of dispersion (see Section~\ref{sec:Section5.5}). Below we consider iteration $i$, and then $\bflambda^1,\ldots , \bflambda^{\, i-1},\bflambda^{\,i}$ are known.\\

\noindent (i) The random normalized residue $\rho_{\bflambda^{\,i}}$ whose realizations are defined by Eq.~\eqref{eq:eq136}, is an implicit function of $\bfH_{\bflambda^{\,i}}$, that we can rewrite as $\rho_{\bflambda^{\,i}}(\bfH_{\bflambda^{\,i}})$. The second-order moment of the normalized random residue is then written as $E\{(\rho_{\bflambda^{\,i}}(\bfH_{\bflambda^{\,i}}))^2 \}$ yielding
$h_\rho^c(\bfH_{\bflambda^{\,i}}) = \left ( \rho_{\bflambda^{\,i}}(\bfH_{\bflambda^{\,i}}) \right )^2$.
 Therefore, using the notation of Eq.~\eqref{eq:eq84}, we have
$E\{\,h_\rho^c(\bfH_{\bflambda^{\,i}})\} = b^c_\rho$,
in which we choose $b_\rho^c=1$ (this value is close to the value of the second-order moment of the random normalized residue  without constraint) and where $\bfeta\mapsto h_\rho^c(\bfeta): \RR^\nu\rightarrow \RR$ is a positive-valued implicit function.\\

\noindent (ii) The second constraint is given by Eq.~\eqref{eq:eq115}. Transforming the upper triangular  matrix of $[\CC_n^\peff] \in\MM_6^+$  in a $\RR^{21}$-vector, Eq.~\eqref{eq:eq115} is rewritten, similarly to Eq.~\eqref{eq:eq84}, as
$E\{\,\bfh_\CC^c(\bfH_{\bflambda^{\,i}})\} = \bfb_\CC^c$,
in which $\bfeta\mapsto \bfh_\CC^c(\bfeta): \RR^\nu\rightarrow \RR^{21}$ is an implicit function and where $\bfb_\CC^c\in\RR^{21}$ is the reshaping of the upper triangular matrix of $[\,{\underline\CC}_{\,n}^{\pexp}]\in\MM_6^+$ defined by Eq.~\eqref{eq:eq114}.\\

\noindent (iii) The last constraint is given by Eq.~\eqref{eq:eq119} that is rewritten, using Eq.~\eqref{eq:eq84}, as
$E\{\,h_\delta^c(\bfH_{\bflambda^{\,i}})\} = b^c_\delta$,
in which $b_\delta^c = ( \mu_\eff \, \delta^{\,\pexp} / \mu_\pexp)^2$ and where $\bfeta\mapsto h_\delta^c(\bfeta): \RR^\nu\rightarrow \RR$ is a positive-valued implicit function.\\

\noindent (iv) Finally, for given $\bflambda$, and in particular for $\bflambda = \bflambda^\psol$ yielding $\bfH^c= \bfH_\bflambda$, the constraint is defined by Eq.~\eqref{eq:eq84} with
\begin{equation}\label{eq:eq141}
\bfeta\mapsto\bfh^c(\bfeta) = (h_\rho^c(\bfeta), \bfh_\CC^c(\bfeta), h_\delta^c(\bfeta)): \RR^\nu\rightarrow\RR^{n_c} = \RR\times\RR^{21}\times\RR\, ,
\end{equation}
\begin{equation}\label{eq:eq142}
b^c = (b_\rho^c, \bfb_\CC^c, b_\delta^c) \in \RR^{n_c} = \RR\times\RR^{21}\times\RR\, ,
\end{equation}
in which $n_c=23$. Using the mathematical developments presented in \cite{Soize2021d}, it can be verified that $\bfh^c$ satisfies Hypothesis~\ref{hypothesis:1}. We will also assume that Eq.~\eqref{eq:eq7bis} holds.
Indeed, verifying \textit{a priori} that the components of $\bfh^c$ are algebraically independent  is very difficult for the considered problem. This hypothesis will indirectly be verified  by checking that the solution $\bflambda^\psol$ is well identified (see Proposition~\ref{proposition:1}-(c) and Section~\ref{sec:Section5.9}).
\subsection{Error function and convergence analysis of the sequence of MCMC generator}
\label{sec:Section5.9}
The error function is defined by Eqs.~\eqref{eq:eq87} and \eqref{eq:eq88}. The constraint on the random normalized residue (see Section~\ref{sec:Section5.8}-(i)) is introduced with $b_\rho^c=1$ in order to avoid the increasing of $E\{\,h_\rho^c(\bfH^c)\}$ with respect to $E\{\,h_\rho^c(\bfH_{\bflambda^{1}})\}$ (first iteration without constraints effects).
This constraint, which is taken into account in Algorithm~\ref{algorithm:1} for computing $\{\bflambda^i, i=1,\ldots, i_\pmax\}$, is not taken into account in the error function (see Eq.~\eqref{eq:eq87}) to identify the index $i_\psol$ (see Eq.~\eqref{eq:eq86}) of  the optimal value $\bflambda^\psol = \bflambda^{i_\psol}$ of $\bflambda^i$. Consequently, Eq.~\eqref{eq:eq87} is written, for $i\in \{1,\ldots , i_\pmax\}$,
\begin{equation}\label{eq:eq143}
\error(i) = \sqrt{\left( {\error_\CC(i)}\,/\,{\error_\CC(1)}\right )^2 + \left( {\error_\delta(i)}\,/\,{\error_\delta(1)}\right )^2} \, ,
\end{equation}
\begin{equation}\label{eq:eq144}
\error_\CC(i) = \Vert \, \bfb_\CC^c - E\{\,\bfh_\CC^c(\bfH_{\bflambda^{\,i}})\}\, \Vert \, / \,\Vert\, \bfb_\CC^c\Vert
\quad , \quad
\error_\delta(i) = \vert \, b_\delta^c - E\{\,h_\delta^c(\bfH_{\bflambda^{\,i}})\}\, \vert \, / \, b_\delta^c \, .
\end{equation}
For the three cases, SC1, SC2,and SC3, Fig.~\ref{fig:figure2} displays the error function $i\mapsto\error(i)$  defined by Eq.~\eqref{eq:eq143}, computed with the constrained learned set
$\curD_{\bfH_{\bflambda^{\, i}}}$ for $N=1000$, $2000$, $6000$, and $10\, 000$.
It can be seen that convergence is reached for  $N=10\, 000$ and that, at convergence, function $i\mapsto\error(i)$ is relatively smooth (that is not the case for $N=1000$). These graphs show a good illustration  of the convergence  of the sequence in $N$ of the MCMC generator using the statistical surrogate model $\hat\bfh^N$ of $\bfh^c$ (see Proposition~\ref{proposition:4}).
When the convergence is reached for $N=10\, 000$, Table~\ref{table:table2} gives the value of $i_\psol$ such that $\bflambda^\psol= \bflambda^{\, i_\psol}$ (see Eq.~\eqref{eq:eq86}) and the corresponding value $\error(i_\psol)$ of the error.
\begin{figure}[h]
\centering
\includegraphics[width=5.4cm]{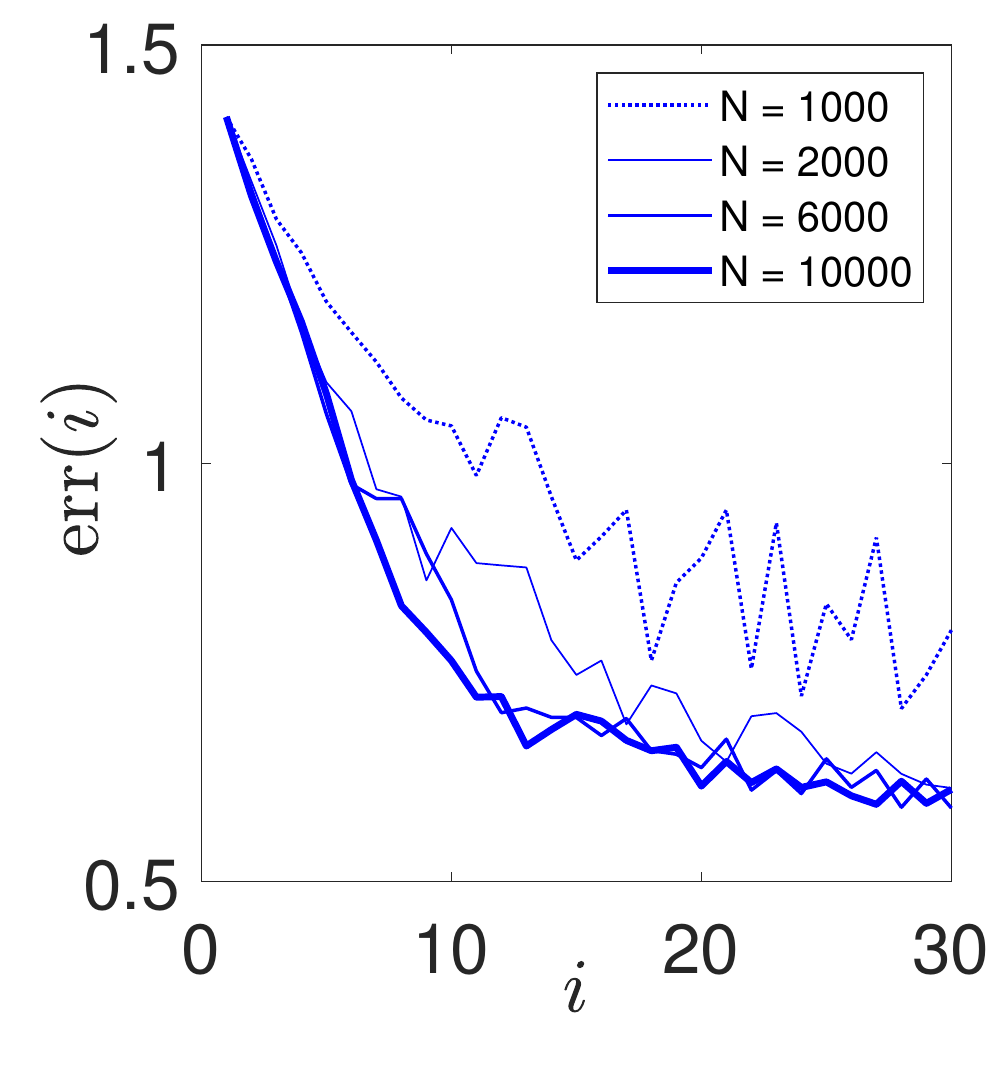}  \includegraphics[width=5.4cm]{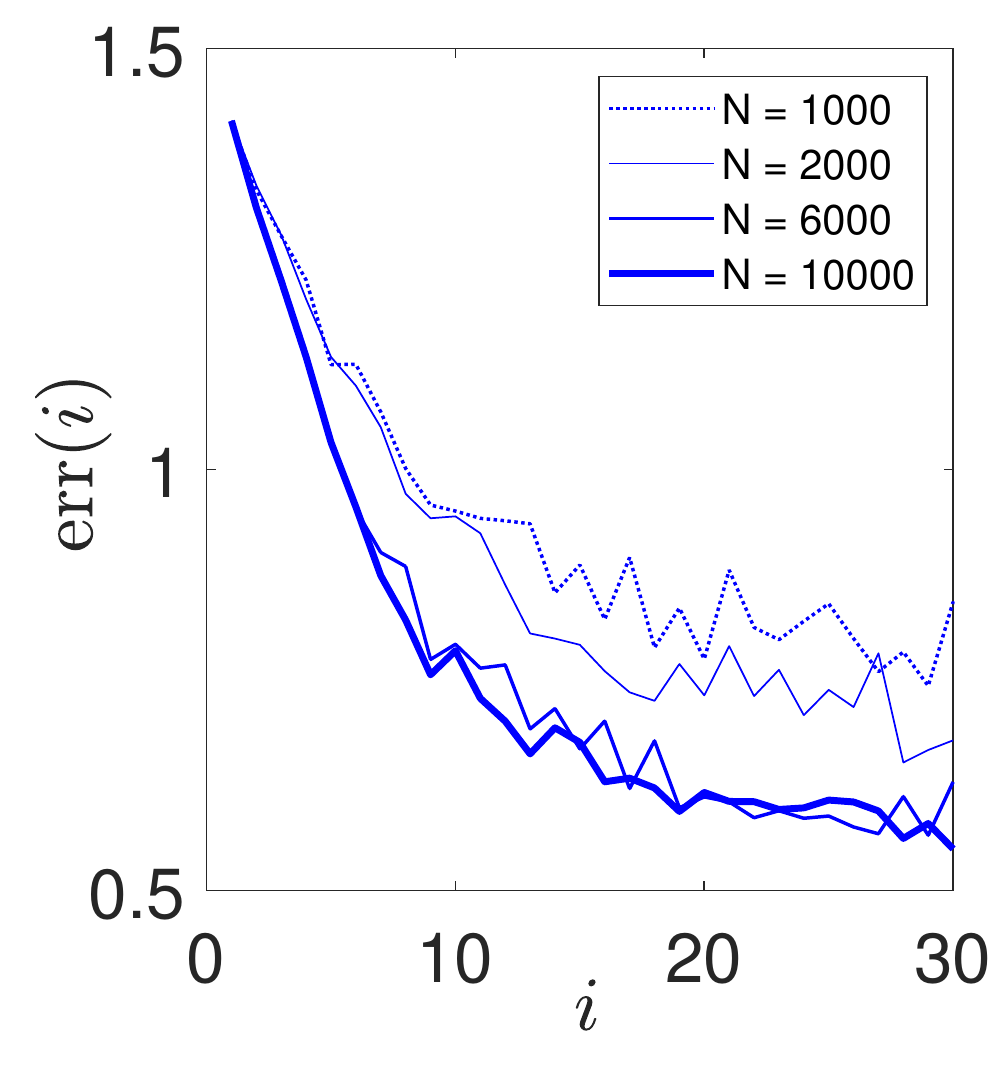} \includegraphics[width=5.4cm]{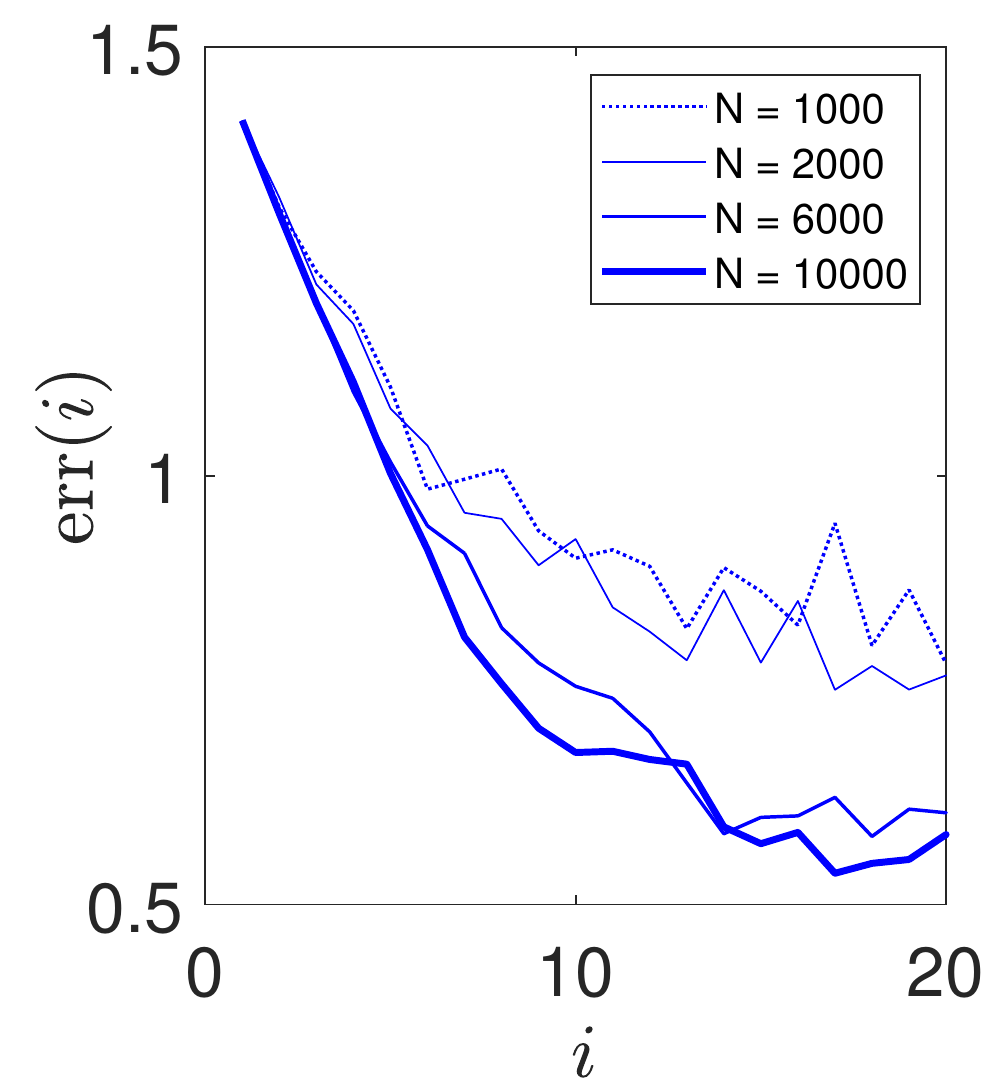}
\caption{For scale separation, SC1 (left figure), SC2 (central figure), and SC3 (right figure), graph of error function $i\mapsto\error(i)$ for $N=1000$, $2000$, $6000$, and $10\, 000$.}
\label{fig:figure2}
\end{figure}
\begin{table}[h]
  \caption{For cases SC1, SC2, and SC3 of scale separation, value $\error(i_\psol)$ of the error function for the solution
  $\bflambda^\psol = \bflambda_{i_\psol}$ computed with the constrained learned set for $N=10\, 000$.}\label{table:table2}
\begin{center}
 \begin{tabular}{|c|c|c|c|} \hline
    Case              & SC1     & SC2     &   SC3    \\
    \hline
    $i_\pmax$         &  30     &  30     &   20     \\
    $i_\psol$         &  27     &  30     &   17     \\
    $\error(i_\psol)$ &  0.5924 &  0.5495 &   0.5364 \\
    \hline
  \end{tabular}
\end{center}
\end{table}
\subsection{Residue and posterior second-order moments of the random effective elasticity matrix estimated with the constrained learned set}
\label{sec:Section5.10}
The posterior statistics of the random normalized residue  and the random effective elasticity matrix are estimated with the constrained learned set $\curD_{\bfH^c} = \curD_{\bfH_{\bflambda^{\psol}}}$ as a function of $N$ for the three cases, SC1, SC2, and SC3.

\noindent (i) The posterior second-order moment of the random normalized residue is $E\{\,\rho_c^2\}$ and compared to $1$, see Section~\ref{sec:Section5.8}-(i).

\noindent (ii) The posterior mean value is $[\,\underline{\CC}_{\,c}^\peff] = E\{[\CC_{\, c}^\peff]\}$ and is compared to $[\,\underline\CC^\pexp]$ (see Eq.~\eqref{eq:eq115} for the normalized version). We also consider the Frobenius norm $\Vert \, [\,\underline{\CC}_{\,c}^\peff]\, \Vert_F$ that is compared to $\Vert \, [\,\underline{\CC}^\pexp]\, \Vert_F$.

\noindent (iii) For the posterior dispersion coefficient of $[\CC_{\, c}^\peff]$, instead of comparing $\delta_c^{\,\peff}$ to $\delta^{\,\pexp}$ (see Eq.~\eqref{eq:eq118}),  it is more efficient to use the maximum likelihood consisting in comparing $\delta_{c,\pML}^{\,\peff}$ to $\delta^{\,\pexp}$ in which
\begin{equation}\label{eq:eq146}
\delta_{c,\pML}^{\,\peff} = \sqrt{\delta_{2,\pML}^{\,\peff}} \quad , \quad \delta_{2,\pML}^{\,\peff} = \max_{\delta_2} \, p_{\Delta_{2,C}^\peff}(\delta_2)\, ,
\end{equation}
where $p_{\Delta_{2,C}^\peff}$ is the pdf of the random variable $\Delta_{2,c}^\peff$ defined by Eq.~\eqref{eq:eq120}, at convergence $\bflambda = \bflambda^\psol$ (introduction of subscript "$c$"). For the three cases, SC1, SC2, and SC3,
Fig.~\ref{fig:figure3} displays the graph of the posterior pdf $\delta_2\mapsto p_{\Delta_{2,C}^\peff}(\delta2)$ of random variable $\Delta_{2,c}^\peff$, estimated with the constrained learned set  for $N=10\, 000$ and its prior counterpart estimated with the training set (constructed using the prior probability model).
Fig.~\ref{fig:figure4} (left figure) displays the graph of the Frobenius norm $N\mapsto \Vert\, [\,\underline\CC_{\, c}^\peff(N)]\, \Vert_F$ of the posterior mean value $[\,\underline{\CC}_{\,c}^\peff(N)] = E\{\, [\CC_{\,c}^\peff(N)]\}$ of the random effective elasticity matrix $[\CC_{\,c}^\peff(N)]$  as a function of $N$ and estimated using the constrained learned set, while  Fig.~\ref{fig:figure4} (right figure) shows the graph of the maximum likelihood $N\mapsto \delta_{c,\pML}^{\,\peff}(N)$ of the coefficient of dispersion of $[\CC_{\,c}^\peff(N)]$ defined by Eq.~\eqref{eq:eq146}.
Fig.~\ref{fig:figure5} shows the graph of the posterior pdf $r\mapsto p_{\rho_c}(r)$ of the random normalized residue $\rho_c$, estimated with the constrained learned set for $N=10\,000$ and its counterpart for the estimation performed with the training set (constructed using the prior probability model).

\noindent (iv) For cases SC1, SC2, and SC3, Table~\ref{table:table3} gives the posterior statistics computed with the constrained learned set for $N = 10\, 000$ (subscript "c"), the prior statistics computed with the training set (subscript "d"), and the targets (superscript "exp"), for the second-order moment of the random normalized residue, for the Frobenius norm of the mean value of the random effective elasticity matrix, and for the coefficient of dispersion of this random matrix.
For the same three cases, Table~\ref{table:table4} gives the values of the entries
of the mean matrices $[\, \underline\CC_{\, d}^\peff]$ computed with the training set, $[\,\underline\CC_{\,c}^\peff]$ computed with the constrained learned set for $N=10\, 000$, and $[\,\underline\CC^\pexp]$ for the targets. Note that entries $(4,5)$, $(4,6)$, and $(5,6)$, which are small with respect to the other entries, are not given.
\begin{figure}[h]
\centering
\includegraphics[width=4.4cm]{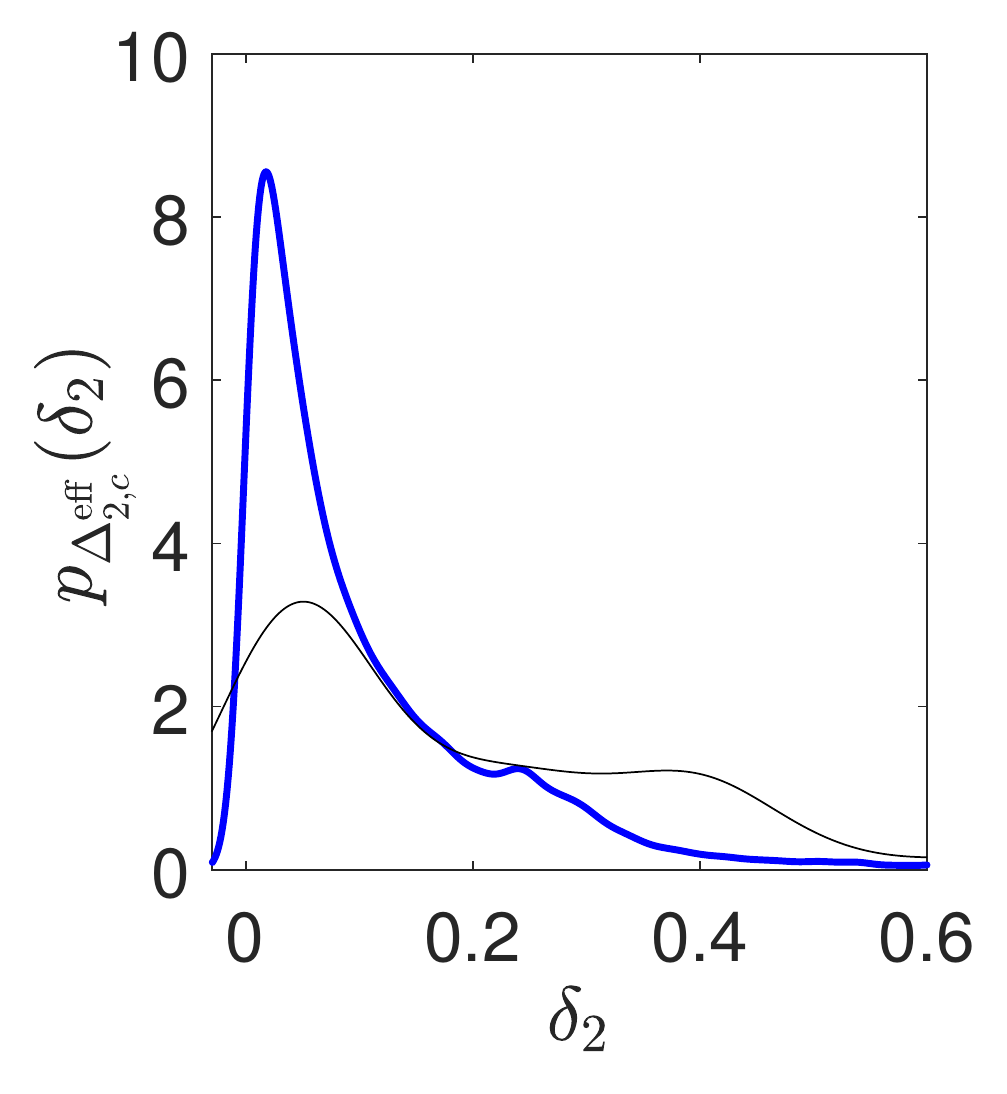}  \includegraphics[width=4.4cm]{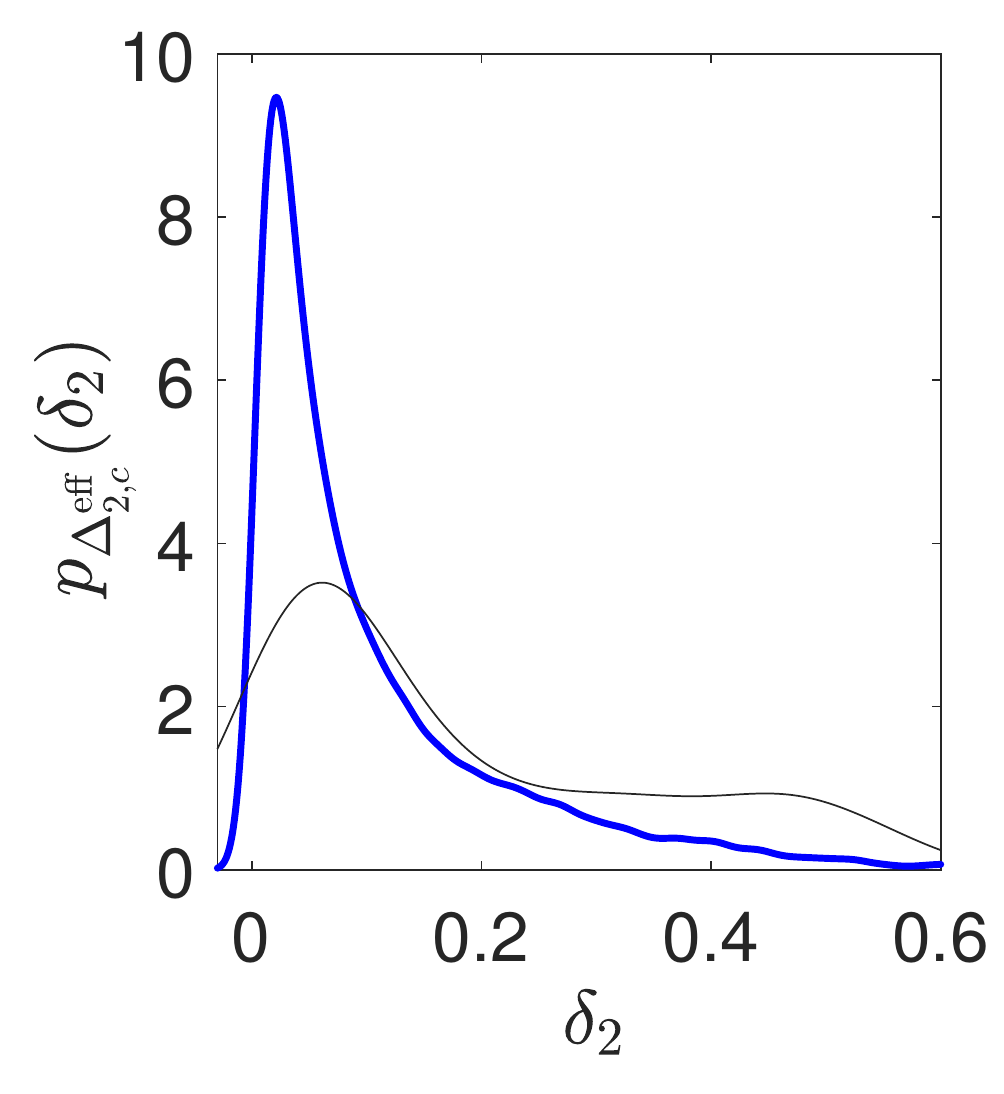} \includegraphics[width=4.4cm]{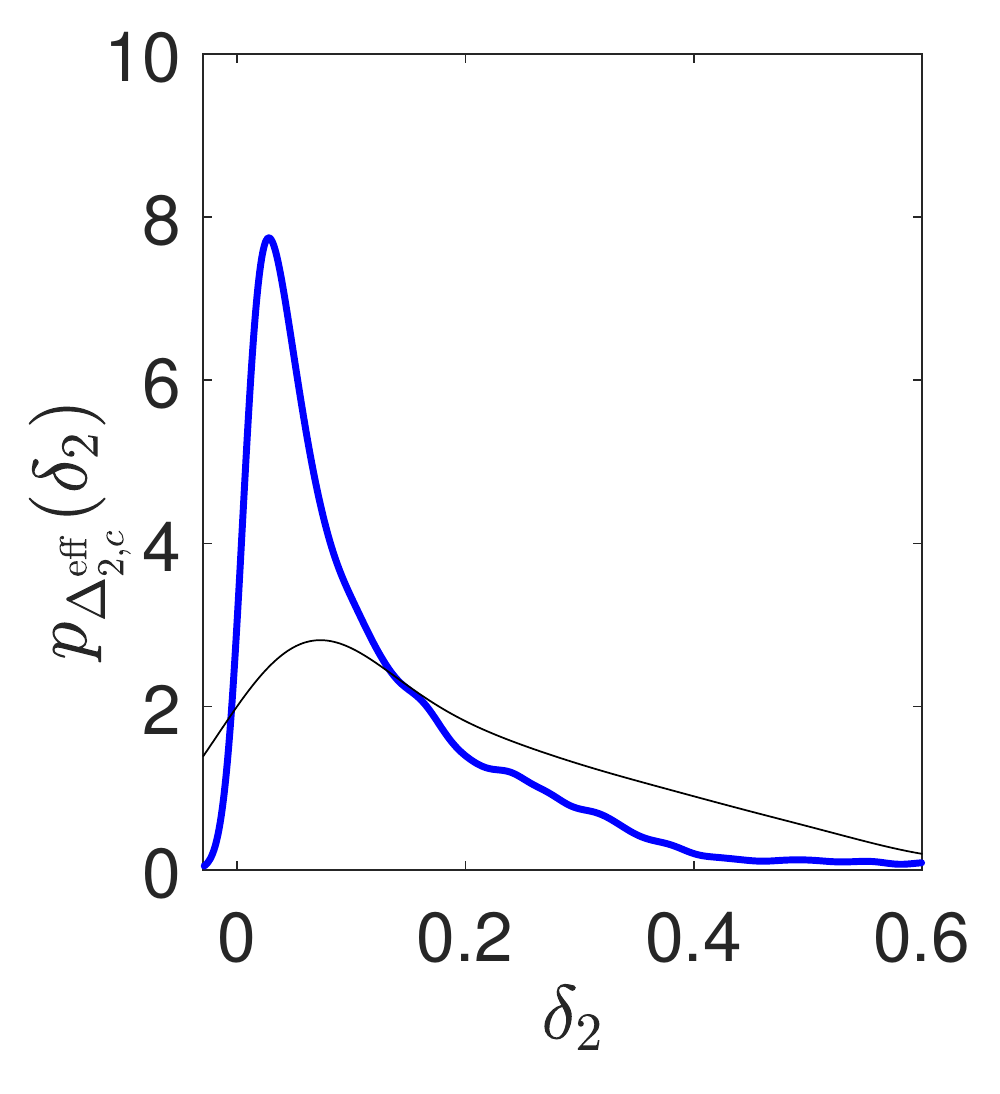}
\caption{For scale separation, SC1 (left figure), SC2 (central figure), and SC3 (right figure), graph of the posterior pdf of random variable $\Delta_{2,c}^\peff$ (thick blue line) estimated with the constrained learned set for $N=10\, 000$ and its prior counterpart (thin black line) corresponding to an estimation with the training set (constructed using the prior probability model).}
\label{fig:figure3}
\end{figure}
\begin{figure}[h]
\centering
\includegraphics[width=5.0cm]{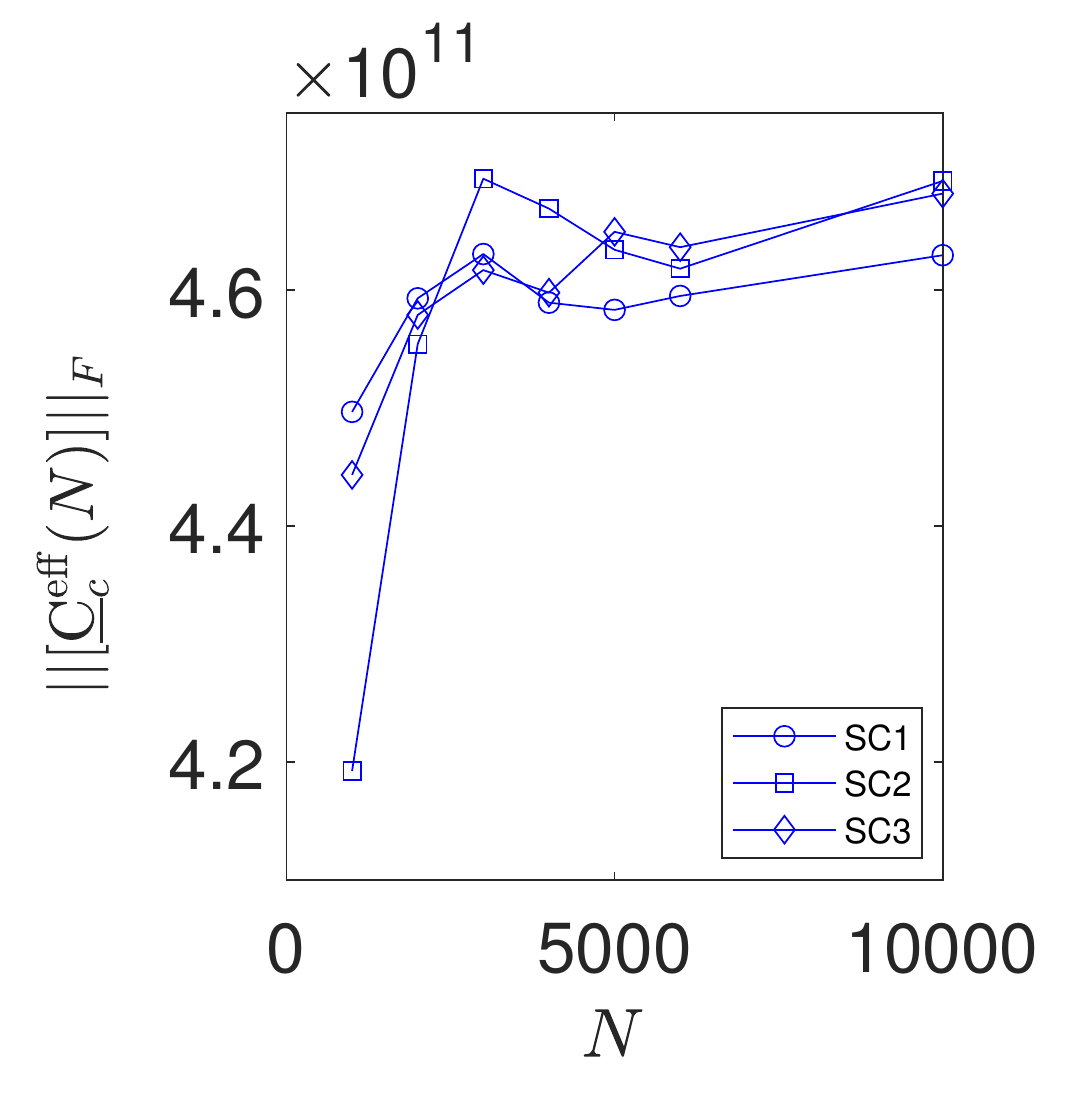}  \includegraphics[width=5.0cm]{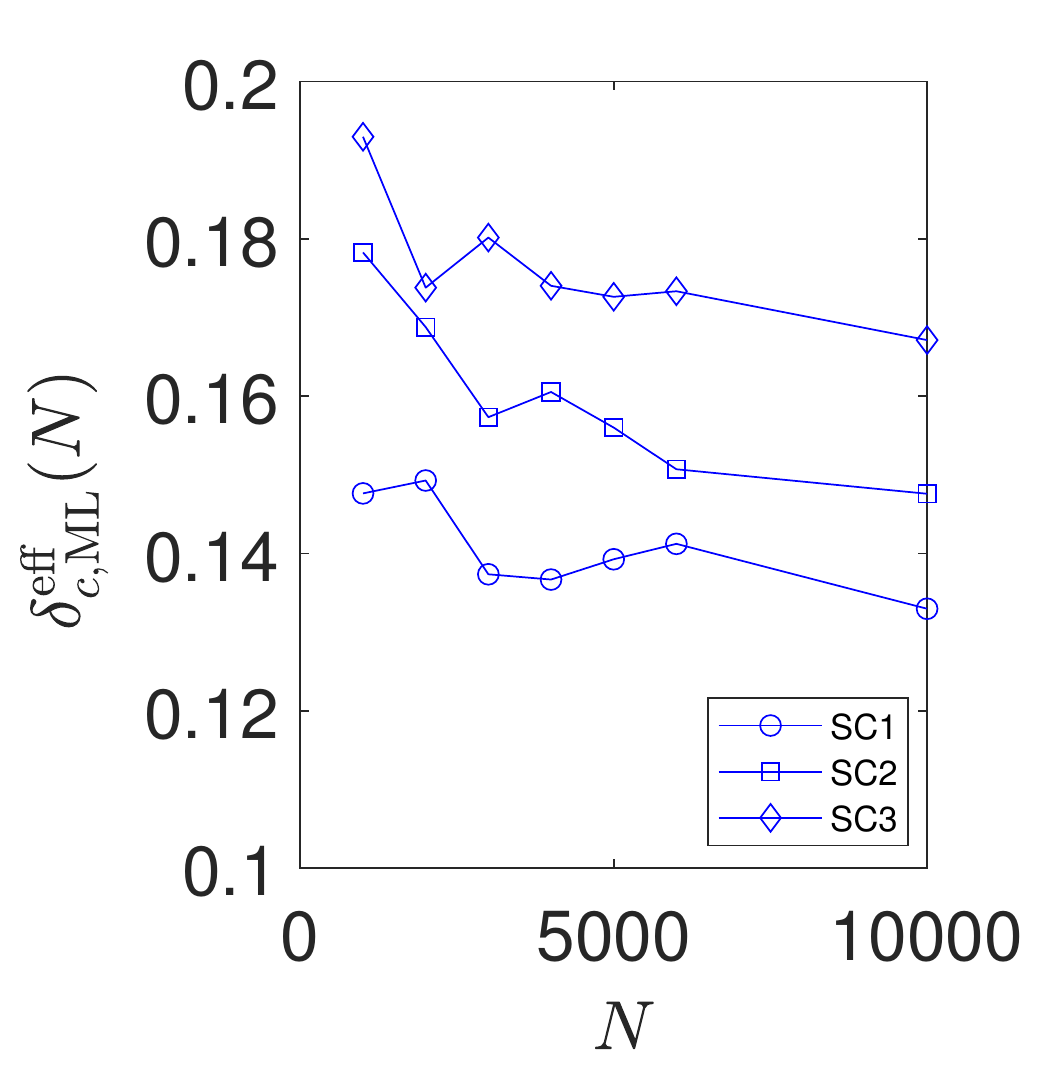}
\caption{For scale separation, SC1, SC2, and SC3, graph of $N\mapsto \Vert \,[ \,\underline\CC_{\,c}^\peff(N)] \, \Vert_F$ (left figure) and
$N\mapsto \delta_{c,\pML}^{\,\peff}(N)$ (right figure), computed with the constrained learned set.}
\label{fig:figure4}
\end{figure}
\begin{figure}[h]
\centering
\includegraphics[width=4.4cm]{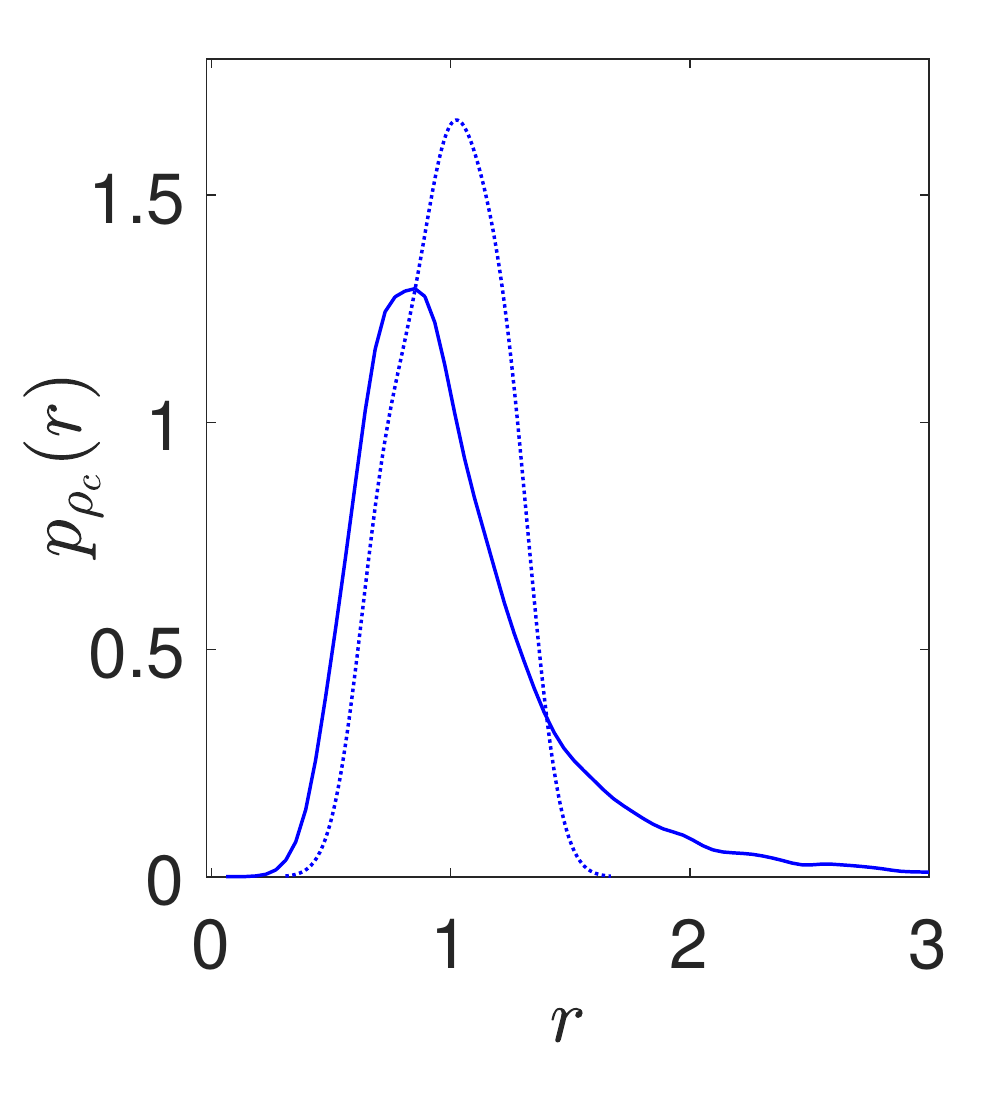}  \includegraphics[width=4.4cm]{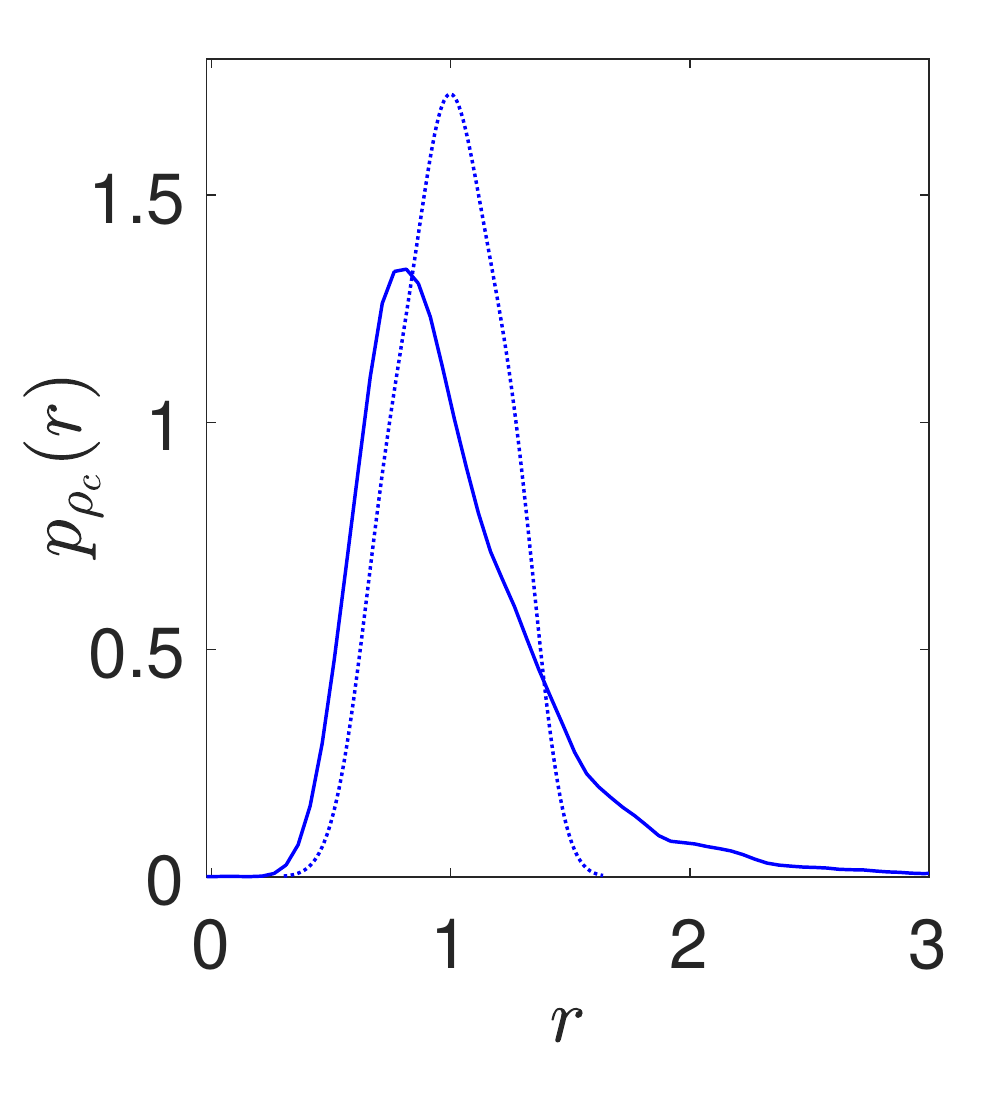} \includegraphics[width=4.4cm]{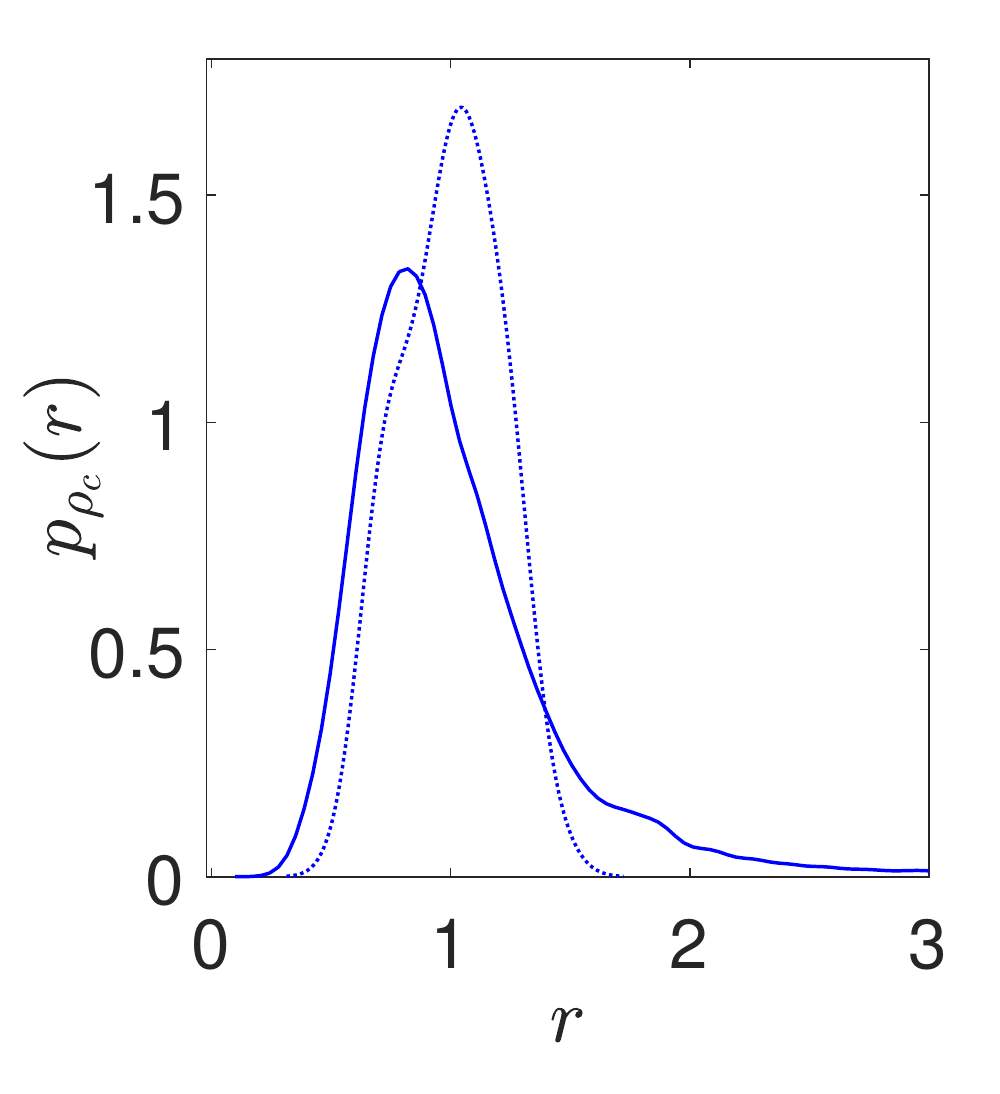}
\caption{For scale separation, SC1 (left figure), SC2 (central figure), and SC3 (right figure), graph of the posterior pdf of random normalized residue $\rho_c$ (solid line) estimated with the constrained learned set for $N=10\, 000$ and its prior counterpart (dotted line) corresponding to an estimation with the training set (constructed using the prior probability model).}
\label{fig:figure5}
\end{figure}
\begin{table}[h]
  \caption{For cases SC1, SC2, and SC3, posterior statistics computed  with the constrained learned set for $N=10\, 000$ (subscript "c"), prior statistics (subscript "d") computed with the training set, and targets.}\label{table:table3}
\begin{center}
 \begin{tabular}{|c|c|c|c|}
    \hline
                                                                         & SC1     & SC2     &   SC3    \\
    \hline
    $E\{\rho_c^2\}$                                                      &  1.2938 &  1.2687 &   1.2413 \\
    $b_\rho^c$                                                           &  1      &  1      &   1      \\
    \hline
    $\Vert \,[\, \underline\CC_{\, d}^\peff] \, \Vert_F \times 10^{11}$  &  4.2106 &  4.1925 &  4.1943  \\
    $\Vert \,[\, \underline\CC_{\, c}^\peff] \, \Vert_F \times 10^{11}$  &  4.6294 &  4.6923 &  4.6816  \\
    $\Vert \,[\, \underline\CC^\pexp] \, \Vert_F \times 10^{11}$         &  4.6317 &  4.6549 &  4.6706  \\
    \hline
    $\delta^{\,\peff}_{d,\pML}$                                              &  0.2257 &  0.2469 &  0.2701  \\
    $\delta^{\,\peff}_{c,\pML}$                                              &  0.1329 &  0.1476 &  0.1671  \\
    $\delta^{\,\pexp}$                                                       &  0.0946 &  0.1374 &  0.1825  \\
    \hline
  \end{tabular}
\end{center}
\end{table}
\begin{table}[h]
  \caption{For cases SC1, SC2, and SC3, values of $[\,\underline\CC_{\, d}^\peff]_{ij}$ computed with the training set, $[\,\underline\CC_{\, c}^\peff]_{ij}$ computed with the constrained learned set for $N=10\, 000$, and $[\,\underline\CC^\pexp]_{ij}$
  for the targets.}\label{table:table4}
\begin{center}
 \begin{tabular}{|c||c|c|c||c|c|c||c|c|c|} \hline
                         & \multicolumn{3}{c||}{SC1}   & \multicolumn{3}{c||}{SC2}  & \multicolumn{3}{c|}{SC3}  \\
    \hline
    Entries of  &        &        &        &        &        &        &        &        &        \\
    $(6\times 6)$        & $[\,\underline\CC_{\, d}^\peff]$ & $[\,\underline\CC_{\, c}^\peff]$ & $[\,\underline\CC^\pexp]$
                         & $[\,\underline\CC_{\, d}^\peff]$ & $[\,\underline\CC_{\, c}^\peff]$ & $[\,\underline\CC^\pexp]$
                         & $[\,\underline\CC_{\, d}^\peff]$ & $[\,\underline\CC_{\, c}^\peff]$ & $[\,\underline\CC^\pexp]$    \\
     matrix              &        &        &        &        &        &        &        &        &        \\
    \hline
    $(1,1)$ & 2.0904 & 2.2600 & 2.2652 & 2.0792 & 2.2914 & 2.2810 & 2.0465 & 2.2751 & 2.2946 \\
    $(1,2)$ & 0.7427 & 0.8809 & 0.8753 & 0.7269 & 0.8982 & 0.8809 & 0.7140 & 0.8874 & 0.8824 \\
    $(1,3)$ & 0.7458 & 0.8804 & 0.8745 & 0.7324 & 0.8983 & 0.8800 & 0.7381 & 0.8999 & 0.8826 \\
    $(2,2)$ & 2.0832 & 2.2603 & 2.2668 & 2.0786 & 2.2946 & 2.2846 & 2.0917 & 2.2830 & 2.2822 \\
    $(2,3)$ & 0.7451 & 0.8802 & 0.8734 & 0.7486 & 0.8950 & 0.8754 & 0.7471 & 0.9045 & 0.8808 \\
    $(3,3)$ & 2.0839 & 2.2647 & 2.2680 & 2.0777 & 2.2841 & 2.2697 & 2.1038 & 2.2928 & 2.2812 \\
    $(4,4)$ & 0.6702 & 0.6909 & 0.6958 & 0.6785 & 0.6985 & 0.7003 & 0.6835 & 0.6976 & 0.7027 \\
    $(5,5)$ & 0.6714 & 0.6903 & 0.6949 & 0.6732 & 0.6950 & 0.6960 & 0.6726 & 0.6872 & 0.6980 \\
    $(6,6)$ & 0.6713 & 0.6924 & 0.6960 & 0.6727 & 0.6958 & 0.6970 & 0.6749 & 0.6933 & 0.6991 \\
    \hline
  \end{tabular}
\end{center}
\end{table}
\subsection{Posterior probability model of parameters}
\label{sec:Section5.11}
The prior probability model concerns the $\RR^{n_g}$-valued random variable $\bfcurG$ that corresponds to the spatial discretization of the $\RR^{21}$-valued random field $\bfG$, and the $\RR^3$-valued random variable $\bfW$ that is related (see Eq.~\eqref{eq:eq110}) to the random bulk modulus $C_\bulk$ and the random shear modulus $C_\shear$, which control the elasticity tensor of the mean isotropic model at mesoscale, and to the dispersion coefficient $\delta_\CC$ that controls the level of anisotropic statistical fluctuations of the random apparent elasticity field at mesoscale (see Section~\ref{sec:Section5.2}). For cases SC1, SC2, and SC3, Fig.~\ref{fig:figure6} displays the posterior pdf $c\mapsto p_{C_\bulk}(c)$ of $C_\bulk$ (left figure), $c\mapsto p_{C_\shear}(c)$ of $C_\shear$ (central figure), and
$c\mapsto p_{\delta_\CC}(c)$ of $\delta_\CC$ (right figure),
estimated with the constrained learned set for $N=10\, 000$, and their prior counterparts estimated with the training set constructed using the prior probability model. It should be noted that for each one of the random variables $C_\bulk$, $C_\shear$, and $\delta_\CC$, its prior probability model is the same for the three cases and consequently, does not depend on the case contrary to its posterior probability model that depends on it.
\begin{figure}[h]
\centering
\includegraphics[width=4.4cm]{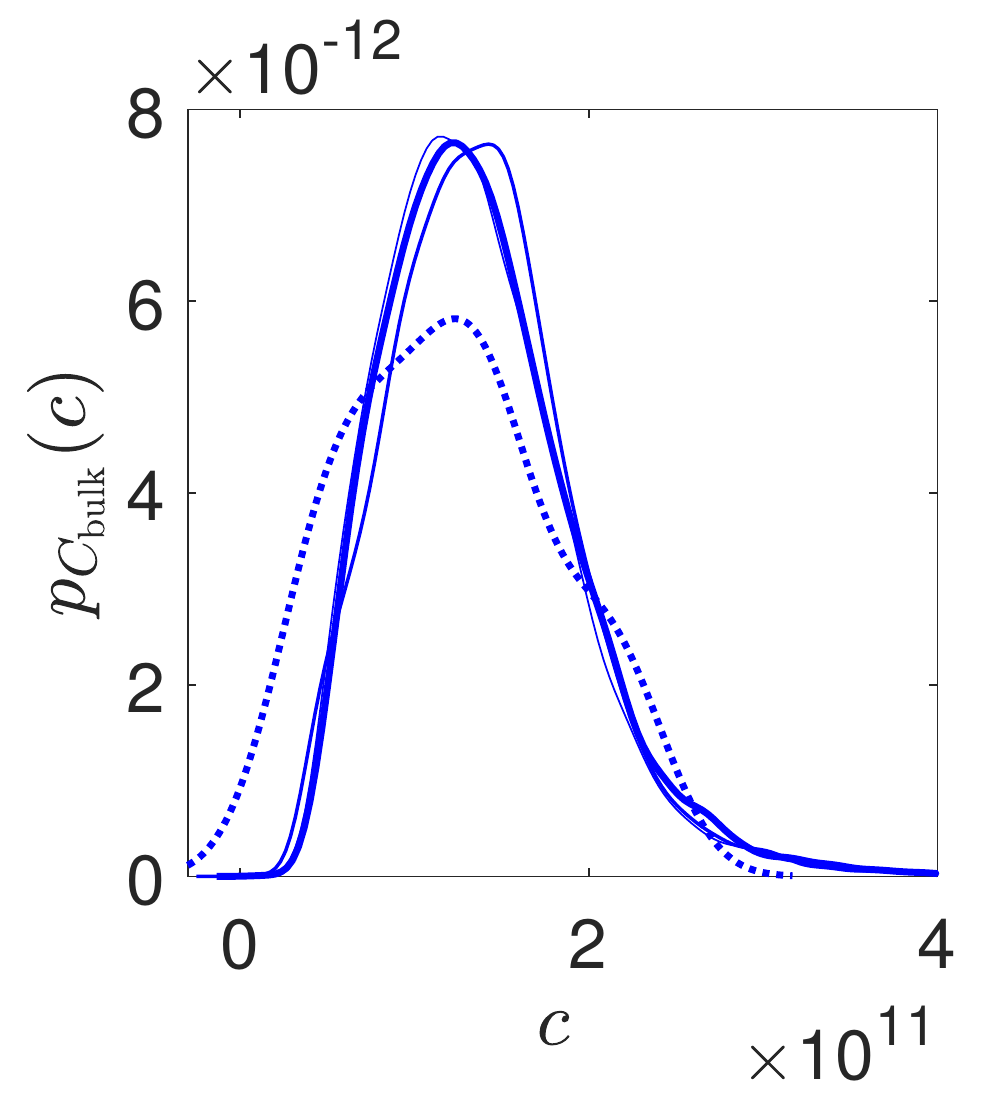}  \includegraphics[width=4.4cm]{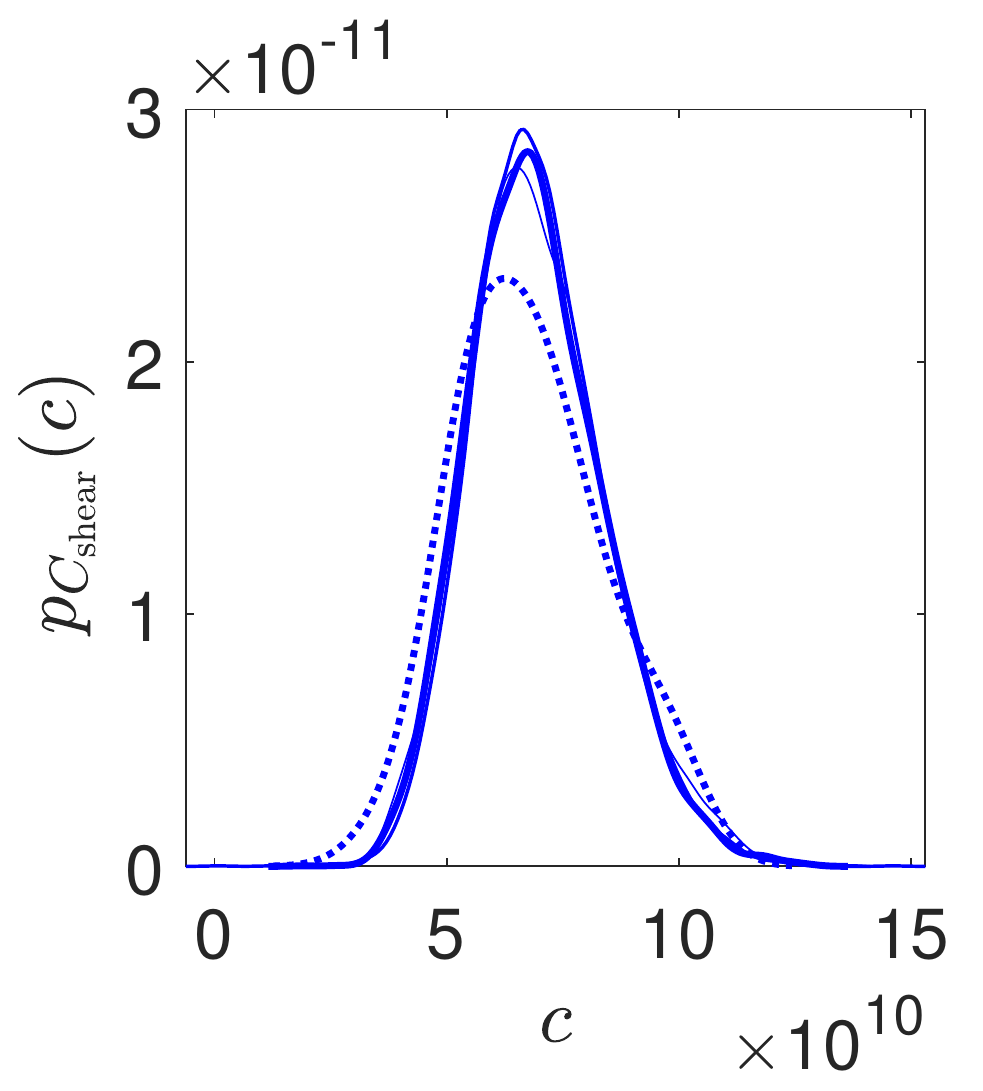} \includegraphics[width=4.4cm]{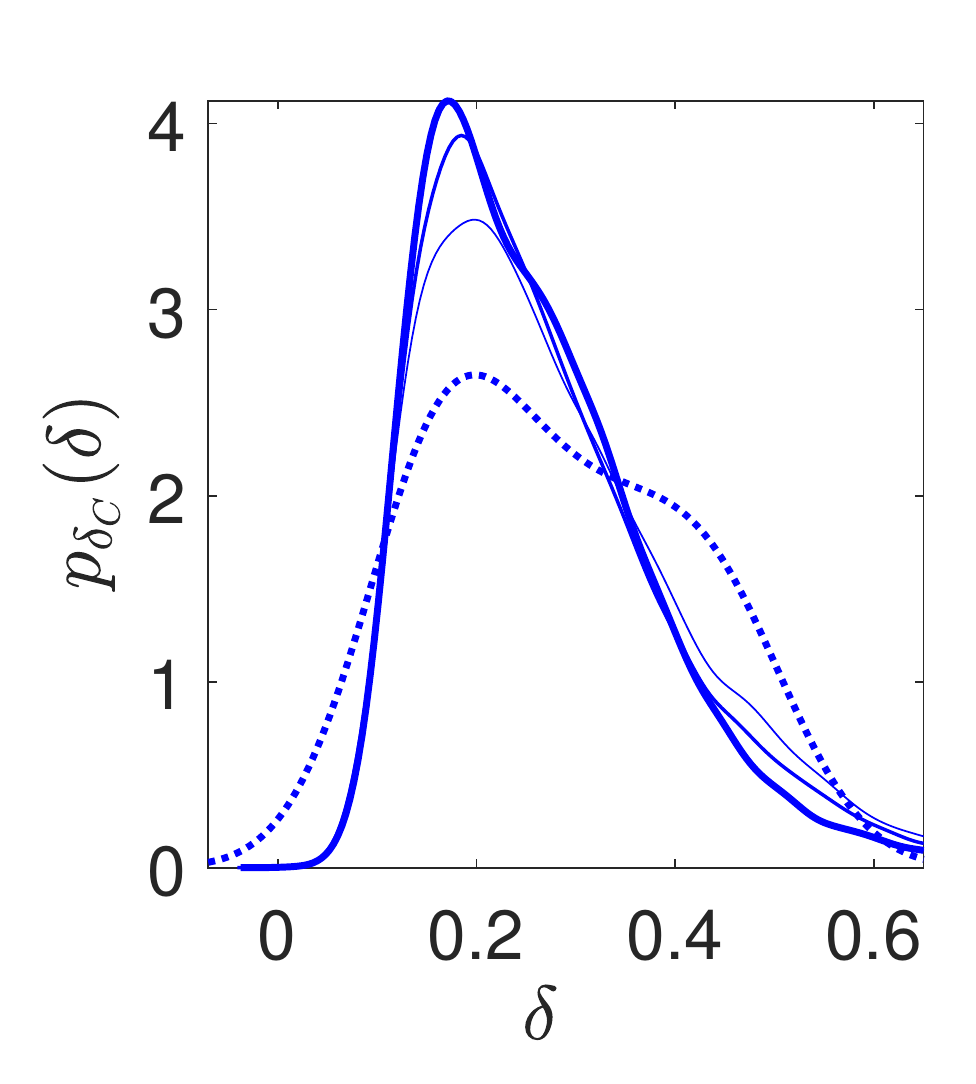}
\caption{Graphs of the posterior pdf $c\mapsto p_{C_{\rm{bulk}}}(c)$ (left figure), $c\mapsto p_{C_{\rm{shear}}}(c)$ (central figure),
and $\delta\mapsto p_{\delta_\CC}(\delta)$ (right figure), estimated with the constrained learned set for $N=10\, 000$,
for cases, SC1 (thin solid line), SC2 (med solid line), and SC3 (thick solid line), and the corresponding prior pdf estimated with the training set constructed using the prior probability model (dotted line).}
\label{fig:figure6}
\end{figure}
\subsection{Discussion about the presented results}
\label{sec:Section5.12}
\noindent (i) The results obtained with the posterior model (see Tables~\ref{table:table3} and \ref{table:table4}) show that the constrained learned set significantly improves the prior probability model used for generating the training set. The comparison of the posterior statistics with the targets are good.

\noindent (ii) As explained in Section~\ref{sec:Section5.9} the residue constraint is taken into account for the generation of the constrained learned set, but does not intervene in the estimation of the optimal value $\bflambda^\psol$ of $\bflambda$ using the error function. Nevertheless and as expected, Fig.~\ref{fig:figure5} and Table~\ref{table:table3} show that the residue is controlled and stayed small with respect to the reference (the training) for the optimal solution.

\noindent (iii) The convergence of the sequence of MCMC generators with respect to the number of points generated in the constrained learned set is good as shown by Figs.~\ref{fig:figure2} and \ref{fig:figure4} in accordance to Proposition~\ref{proposition:4}.

\noindent (iv) The dispersion of the target, measured by the value of $\delta^{\,\pexp}$, is smaller than the one of the prior probability model (with which the training set has been constructed). So, it was expected that the posterior random effective elasticity matrix be less dispersed than the one exhibited by the prior probability model. The results confirm this point as it can be seen in Tables~\ref{table:table3} and \ref{table:table4} and also in Figs.~\ref{fig:figure3} and  \ref{fig:figure6}.

\noindent (v) Comment (ii) is extended as follows. It can be seen that the target is "well" reached for the mean value of the random effective elasticity matrix (see Table~\ref{table:table4}), while it is  "less well" reached for its coefficient of dispersion computed using the maximum likelihood.
This can be explained by the fact that the second-order moment  $E\{\rho_c^2\}$ of the random normalized residue of the equation is kept small during the probabilistic learning of the posterior probability measure, what prevents the dispersion coefficient from reaching its target.
We have carried out numerical tests without imposing the constraint related to $E\{\rho_c^2\}$, which should stay close to 1. We have observed that the dispersion coefficient "reasonably" reached its target but that $E\{\rho_c^2\}$ did not stay close to $1$ but took on significant values greater than $1$. There is indeed a choice of objective between (1) correctly satisfying the constraint on the dispersion coefficient while degrading the value of $E\{\rho_c^2\}$ or (2) preserving a small value of $E\{\rho_c^2\}$ to the detriment of perfectly reaching the target for the dispersion coefficient. We have chosen to present the compromise consisting in taking into account the constraint on the residue during the probabilistic learning process, but the error function that we have chosen to identify the optimal value $\bflambda^\psol$ of $\bflambda$ does not take into account the constraint on the residue.

\noindent (vi) A last comment concerns the effects of no scale separation. As expected, for the three cases SC1, SC2, and SC3, Table~\ref{table:table3} shows that the coefficient of dispersion  is significant and increases with the spatial correlation lengths of the random apparent elasticity field at mesoscale, inducing statistical fluctuations of the effective elasticity tensor at macroscale. It should be noted that, even for the case SC1, for which homogenization in the plane of the plate (domain $\Omega$) is guaranteed (the correlation lengths $\underline L_{C1}$ and $\underline L_{C2}$ being much lower than $1$), this is not the case for the correlation length $\underline L_{C3}$ that is equal to the thickness of the plate. Consequently, there is no  homogenization at the macroscopic scale and the effective elasticity tensor remains random and is not deterministic.
\section{Conclusions}
\label{sec:Section6}
In this paper, we have presented a general methodology to estimate a posterior probability model for a stochastic boundary value problem from a prior probability model. The given targets are statistical moments for which the underlying realizations are not available. Under these conditions, it has been proposed to use the Kullback-Leibler divergence minimum principle for estimating the posterior probability measure, given the prior probability measure and the constraints related to the targets of  the statistical moments. We have proposed the construction of a statistical surrogate model of the implicit mapping that represents the constraints. The constrained learned set, which defines the posterior model, is constructed using only a training set constituted of a small number of points. A mathematical analysis of the proposed methodology has been presented. We have defined the required mathematical hypotheses, which have allowed us to prove the  convergence of introduced approximations. We have also given all the necessary numerical elements, which facilitate the implementation of the methodology in a parallel computing framework.

The application presented to illustrate the proposed theory is also, as such, a contribution to the three-dimensional stochastic homogenization of heterogeneous linear elastic media in the case of a non-separation of the microscale and macroscale, that is to say, when there are significant statistical fluctuations in the effective elasticity tensor at macroscale. The prior stochastic model of the elasticity tensor field at the mesoscopic scale is an advanced model, recently proposed, which takes into account uncertainties on its spectral measure. In addition to the statistical moments of the random effective elasticity tensor, for the construction of the posterior probability measure by probabilistic learning inference, the second-order moment of the random normalized residue of the stochastic partial differential equation has been added as a constraint. This constraint guarantees that the algorithm seeks to bring the statistical moments closer to their targets while preserving a small residue. The results obtained are those which were expected and give a very good illustration of the theory developed for a non-trivial application.
%
%

\end{document}